%% file: main.tex
\PassOptionsToPackage{dvipsnames,svgnames,table}{xcolor}
\PassOptionsToPackage{framemethod=tikz}{mdframed}
\documentclass[11pt,letterpaper,logo]{yalearxiv}

\usepackage[authoryear]{natbib}
\input{math_commands.tex}

\usepackage{hyperref}
\usepackage{url}
\usepackage{amsmath,amssymb,amsthm,mathtools}
\usepackage{dsfont}
\usepackage{algorithm2e}
\usepackage{enumitem}
\usepackage{pgfplots}
\usepackage{tikz}
\usepackage{subcaption}
\pgfplotsset{compat=1.18}

\newtheorem{theorem}{Theorem}
\newtheorem{definition}{Definition}
\newtheorem{lemma}{Lemma}

\newtheorem{fact}{Fact}

\definecolor{CSIROBlue}{HTML}{00AFDB}
\renewcommand{\titlefont}{\centering\color{CSIROBlue}\normalfont\bfseries\fontsize{18}{20}\selectfont}
\hypersetup{colorlinks=true,linkcolor=CSIROBlue,citecolor=CSIROBlue,urlcolor=CSIROBlue}
\fancypagestyle{firststyle}{%
  \fancyhf{}%
  \fancyhead[L]{\includegraphics[height=23pt]{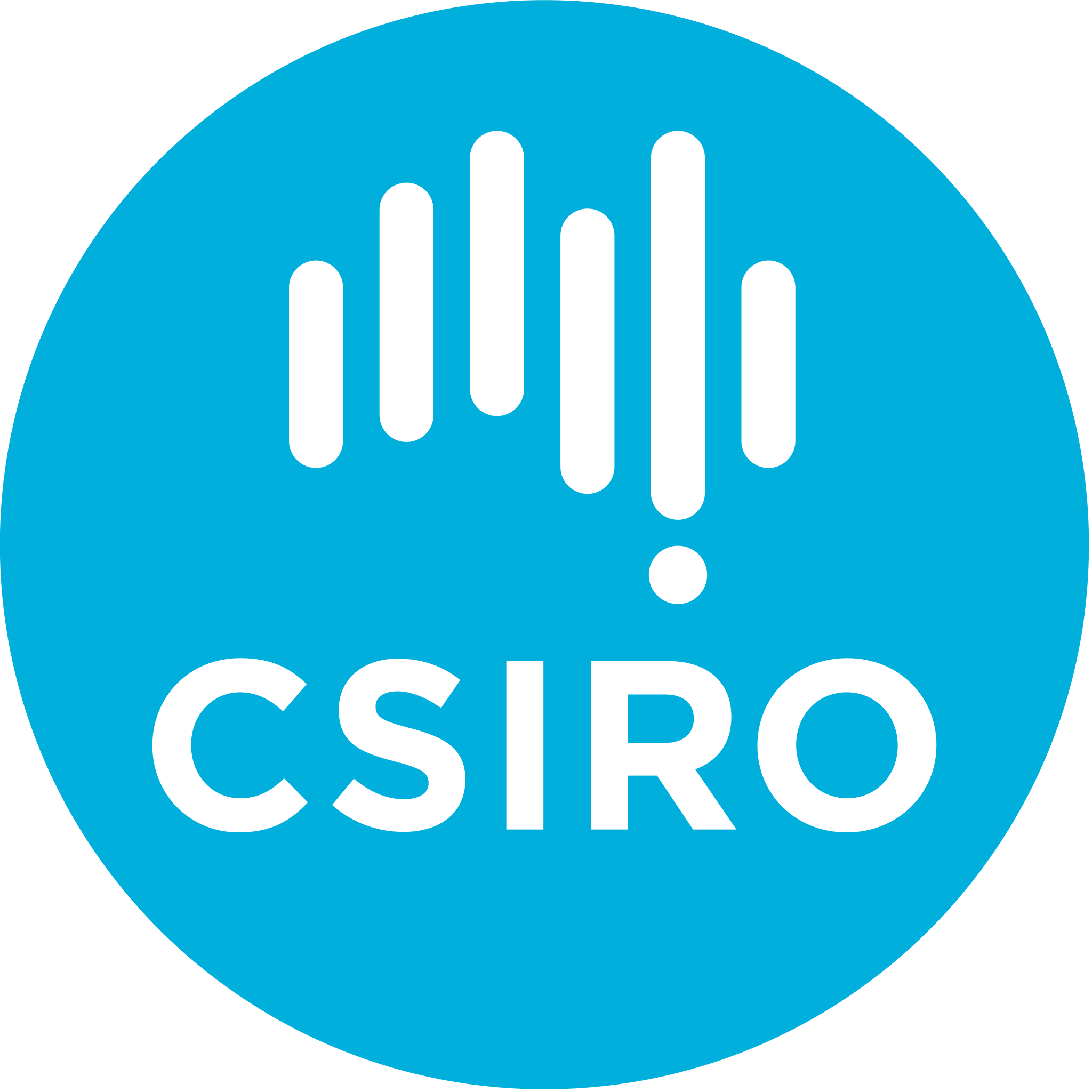}}%
}

\title{When Honesty is Not Enough in AI Debate}
\runningtitle{When Honesty is Not Enough in AI Debate}

\makeatletter
\renewcommand{\@author}{%
\parbox[t]{\dimexpr\linewidth-2\tabcolsep\relax}{%
\centering
\Authfont
\mbox{Rayne Holland\textsuperscript{1}}\enskip
\mbox{Liming Zhu\textsuperscript{1}}\enskip
\mbox{Jason Xue\textsuperscript{1}}\par
\Affilfont
\textsuperscript{1}CSIRO%
}}
\makeatother

\begin{document}

\begin{abstract}
Scalable oversight aims to verify the behaviour of agents whose capabilities exceed those of their overseers. 
AI debate has been proposed as an oversight solution in which competing agents help a resource-limited verifier assess claims that it cannot reliably evaluate unaided. 
Much of its promise rests on incentivizing honest arguments that lead to correct verdicts.
Yet a correct verdict need not uniquely determine the arguments used to support it. 
Agents may retain discretion over which correct claims to present, how to frame them, and in what order to disclose them. 
This residual freedom can allow agents to shape what the verifier learns beyond the task-relevant conclusion, pursuing latent objectives without compromising verdict correctness.

To study this phenomenon, we introduce the framework strategic interactive oversight (\sio), which treats oversight jointly as a verification mechanism and a strategic communication channel. 
Within this framework, we formalise the notion of task-admissible latent optimisation, which entails the pursuit of latent objectives while maintaining a prescribed task performance. 
As proof-of-concept, we instantiate \sio in the establish protocol debate with cross-examination and quantify a tradeoff between task success and information disclosure about a hidden variable.
The trade-off identifies a strategic window in which substantial disclosure remains compatible with task admissibility.
Towards mitigation, we reduce admissible bias by expanding the cross-examiner’s role to mitigate persistent disclosure over finite interaction horizons. 
Our results highlight the need to evaluate oversight not only by the correctness of its verdicts, but also by the information conveyed through its transcripts.
\end{abstract}
\maketitle

\input{sections/introduction}

\input{sections/prelims}

\input{sections/sio}

\input{sections/debate}

\input{sections/mitigation}

\input{sections/related}

\subsubsection*{Acknowledgments}
This project was undertaken in collaboration with the Australian AI Safety Institute (AISI) and supported by funding from the Department of Industry, Science and Resources (DISR), as part of research into AI alignment tools and techniques.

\bibliography{iclr2027_conference}
\bibliographystyle{iclr2027_conference}

\appendix

\input{sections/toy_example}

\input{sections/proof_task_objective}

\input{sections/proof_latent_objective}

\input{sections/proof_mitigation}
\end{document}

%% file: math_commands.tex
\usepackage{xspace}
\newcommand{\sio}{\textsf{SIO}\xspace}
\newcommand{\talo}{\textsf{TALO}\xspace}

\newcommand{\oracle}{\mathcal{O}}
\newcommand{\ind}{\mathds{1}}
\newcommand{\hhd}{\mathsf{hhd}}
\newcommand{\Bern}{\mathsf{Bern}}
\newcommand{\clip}{\mathsf{clip}}
\newcommand{\honest}{\mathsf{hon}}
\newcommand{\simulate}{\mathsf{sim}}
\newcommand{\stratA}{\sigma_{A}}

\usepackage{amsmath,amsfonts,bm}

\def\1{\bm{1}}

\def\rvh{{\mathbf{h}}}
\def\rvu{{\mathbf{i}}}

\def\rvu{{\mathbf{u}}}

\def\rvy{{\mathbf{y}}}

\def\erva{{\textnormal{a}}}

\def\ervh{{\textnormal{h}}}

\def\ervu{{\textnormal{u}}}

\def\ervy{{\textnormal{y}}}

\def\va{{\bm{a}}}

\def\vh{{\bm{h}}}

\def\vy{{\bm{y}}}

\def\eva{{a}}

\def\evy{{y}}

\DeclareMathAlphabet{\mathsfit}{\encodingdefault}{\sfdefault}{m}{sl}
\SetMathAlphabet{\mathsfit}{bold}{\encodingdefault}{\sfdefault}{bx}{n}

\def\gH{{\mathcal{H}}}

\def\gX{{\mathcal{X}}}



%% file: sections/introduction.tex
\section{Introduction}
\label{sec:introduction}

As AI systems grow more capable, they may surpass the humans or weaker models charged with evaluating their work. 
\emph{Scalable oversight} asks whether a bounded overseer can nevertheless evaluate and steer a more capable system using substantially fewer resources. 
An oversight mechanism must remain reliable even when the overseer cannot reproduce the reasoning or computation it assesses.

Proposed approaches include amplification \citep{christiano2018supervising}, recursive reward modelling \citep{leike2018scalable}, and AI debate \citep{irving2018ai,brown2023scalable,brown2025avoiding,brown2026debate}.
Each uses AI capabilities to amplify the signal supplied by human judgment. 
In debate, powerful agents recursively reduce a difficult question to disagreements that a computationally limited judge can resolve.
Irving et al.\ \citep{irving2018ai} formalise this intuition using complexity theory, showing how a weak verifier can judge claims about computations beyond its own reach. 
Later protocols require the verifier to inspect only a small part of a much larger computation \citep{brown2023scalable,brown2026debate}.

This line of work has largely studied \emph{what} can be verified under an asymmetry in computational resources. 
Although the verifier cannot reproduce the prover's computation, an appropriate protocol may still allow it to check the resulting claim. 
Yet computational verifiability alone does not capture the full range of limitations faced by human overseers. 
Human oversight is also constrained in \emph{how} verification is carried out, since the framing of (truthful) evidence can shape how it is interpreted. 
Indeed, recent work illustrates how LLMs can exploit the use of evidence to manipulate beliefs without relying on directly false claims \citep{hu2026lying}. 
An oversight interaction may therefore verify the right proposition while inducing a strategically chosen interpretation of the evidence or influencing subsequent judgments. 
This ``how'' within the computationally verifiable ``what'' is the object of study here. 
Understanding and controlling it is essential if scalable verification is to provide a meaningful alignment guarantee.

The central distinction is between honesty as a guarantee about the assessed claim and safety as a property of the interaction as a whole. 
Even when an oversight protocol elicits a correct verdict and constrains reports to be locally accurate, it need not control which additional information the transcript conveys.
In this sense, honesty is not enough and guarantees about task correctness do not automatically extend to the other effects of task-successful communication.

\subsection{Solution}
To study this residual freedom, we introduce a model of \emph{strategic interactive oversight} (\sio), which treats an oversight protocol as both an evaluation mechanism and a communication channel.
An agent's performance is evaluated under an explicit \emph{task objective}, which specifies the protocol's intended outcome, and a \emph{latent objective}, which captures additional effects of the interaction. 
The agent may then select strategies that optimise performance across both objectives. 
This yields the threat model of \emph{task-admissible latent optimisation} (\talo), in which the agent pursues the latent objective while maintaining an acceptable level of task performance. 
The threat does not require the agent to induce an incorrect verdict or violate the protocol's explicit rules. 
It arises whenever the task objective admits multiple successful strategies that differ in their latent effects on the verifier's view of the interaction.

As a proof-of-concept, we instantiate \sio on debate with cros-examination \cite{brown2023scalable} with the latent objective of \emph{hidden-hypothesis disclosure}. 
Hidden-hypothesis disclosure measures how much information the verifier's view of the interaction reveals about a latent variable that is not required for the task. 
In standard debate, the task objective is to win.
However, there are generally many strategies by which it can do so.
This non-uniqueness allows the debater to select, among task-successful strategies, those that additionally encode information about the latent variable.
We show that, even when the disclosure in each round is small, the total information revealed can grow linearly with the number of cross-examination rounds. 
Local report-accuracy conditions and a correct terminal verdict therefore do not prevent the oversight interaction from becoming a channel for latent information.

Because \talo is a constrained optimisation problem, mitigation requires altering the incentive landscape over task-admissible strategies. 
As a step towards mitigation, we expand. 
In addition to cross-examination, the cross-examiner assigns these weights so that the aggregate (if it cannot be disputed) remains aligned with a human oracle. 
Debaters that persistently deviate from oracle-aligned judgments receive less weight and therefore exert less influence on the final argument.

\subsection{Contributions}

Our contributions are threefold. 
First, we introduce \sio, a new framework that treats oversight jointly as a verification mechanism and a strategic communication channel. 
Building on scalable verification under resource asymmetry, it makes explicit the distinction between an agent’s task objective and latent objectives pursued through the interaction.
This distinction motivates oversight of the interaction, rather than only its terminal verdict.

A central motivation for debate is that adversarial scrutiny can incentivise truthful, task-correct answers. 
Our framework identifies a distinct limitation of such guarantees, where constraining an agent’s answer does not necessarily constrain the information conveyed by its choice among task-admissible strategies. 
We formalise this gap through task-admissible latent optimisation and demonstrate it through hidden-hypothesis disclosure.

Second, we instantiate the framework in a computational model of debate. 
We establish a non-trivial attack surface, which comprises a range of strategies through which a debating agent can advance a hidden latent objective without sacrificing its task objective. 
Figure~\ref{fig:delta-tradeoff} illustrates this strategic window.

Third, as a step towards mitigation, we augment cross-examination with a sliding-window consistency check that tests for persistent discrepancies across rounds. 
This narrows the range of persistent perturbations compatible with task success, at the cost of additional verifier work, while leaving the debaters’ per-round sampling work unchanged. 
The resulting guarantee limits the disclosure strategy studied here rather than establishing transcript confidentiality.

\begin{figure}[t]
\centering

\begin{subfigure}{0.46\linewidth}
\centering
\begin{tikzpicture}
\begin{axis}[
    width=\linewidth,
    height=5cm,
    xlabel={\(|\Delta|\)},
    ylabel={Latent Objective \(\Psi\)},
    xmin=0, xmax=5,
    ymin=0, ymax=1.05,
    axis lines=left,
    legend style={draw=none, at={(0.03,0.12)}, anchor=south west},
    xtick=\empty,
    ytick=\empty,
    samples=300,
]

\draw[dash dot, gray] (axis cs:0,0.76) -- (axis cs:5,0.76);
\node[gray, anchor=east] at (axis cs:4.95,0.810) {\scriptsize \(h\)};

\addplot[thick, blue, domain=0:5]
     {min(0.26*(0.75*x)^2, 0.75)};

\end{axis}

\end{tikzpicture}
\caption{$\Psi$ vs. $|\Delta|$}
\label{fig:delta-psi}
\end{subfigure}
\hfill
\begin{subfigure}{0.46\linewidth}
\centering
\begin{tikzpicture}
\begin{axis}[
    width=\linewidth,
    height=5cm,
    xlabel={\(|\Delta|\)},
    ylabel={Task Objective \(\Phi\)},
    xmin=0, xmax=5,
    ymin=0, ymax=1.05,
    axis lines=left,
    legend style={draw=none, at={(0.03,0.12)}, anchor=south west, font = \scriptsize},
    xtick=\empty,
    ytick=\empty,
    samples=300,
]
\addplot[thick, blue, domain=0:5, samples=401]
{x <= 1.25 ? 0.90 :
 (x >= 3.75 ? 0 :
  0.90 *
  (1/(1 + exp(3*(x - 2.5))) - 1/(1 + exp(31.25))) /
  (1/(1 + exp(-31.25)) - 1/(1 + exp(3*1.25)))
 )};
\addlegendentry{\(R_1\)}

\addplot[thick, red, dashed, domain=0:5, samples=401]
{x <= 1.25 ? 0.90 :
 (x >= 3.75 ? 0 :
  0.90 *
  (1/(1 + exp(5*(x - 2.5))) - 1/(1 + exp(51.25))) /
  (1/(1 + exp(-51.25)) - 1/(1 + exp(5*1.25)))
 )};
\addlegendentry{\(R_2 > R_1\)}
\addplot[thick, black, dotted, domain=0:5, samples=401]
{x <= 1.25 ? 0.90 :
 (x >= 3.75 ? 0 :
  0.90 *
  (1/(1 + exp(8*(x - 2.5))) - 1/(1 + exp(81.25))) /
  (1/(1 + exp(-81.25)) - 1/(1 + exp(8*1.25)))
 )};
\addlegendentry{\(R_3 > R_2\)}

\draw[dash dot, gray] (axis cs:0,0.815) -- (axis cs:5,0.815);
\node[gray, anchor=east] at (axis cs:4.95,0.865) {\scriptsize \(\gamma\)};

\end{axis}
\end{tikzpicture}
\caption{$\Phi$ vs. $|\Delta|$}
\label{fig:delta-phi}
\end{subfigure}

\caption{
Tradeoff between task success $\Phi$ and hidden hypothesis disclosure $\Psi$ in debate with cross-examination.
$\Delta$ denotes the ``magnitude'' by which a debater biases a neutral claim and $R_i$ is a measure of computational resources.
On the left, $\Psi$ measures (in bits) the volume of information about a hidden hypothesis encoded in the debate transcript.
It grows quadratically with the size of the bias, before hitting the prior hypothesis entropy $h$.
On the right, $\Phi$ measures the probability the debater wins the debate (that is, task success) and $\gamma$ is the level of task admissible performance. 
Notably, small biases do not affect the debate outcome.
Increasing resources can extend the window of admissable stratgies. 
}
\label{fig:delta-tradeoff}
\end{figure}

%% file: sections/prelims.tex
\section{Preliminaries}
\label{sec:preliminaries}

For \(n\in\mathbb{N}\), let \([n]=\{1,\ldots,n\}\). 
Vectors are written in boldface; for \(\mathbf{x}\in\{0,1\}^n\) and \(I\subseteq[n]\), let \(\mathbf{x}_I\) denote the restriction of \(\mathbf{x}\) to the coordinates in \(I\). 

\subsection{Computation Model}
We model computation using probabilistic oracle Turing machines. 
A stochastic oracle with query length \(\ell=\ell(n)\) maps each query \(z\in\{0,1\}^{\ell}\) to a \(\{0,1\}\)-valued random variable \(\oracle(z)\). 
Responses to oracle queries are independent, including responses to repeated instances of the same query. 
A probabilistic oracle Turing machine \(M\) may write a query \(z\) to its oracle tape and receive a sample from \(\oracle(z)\) in one step. 
We write \(M^\oracle\) for \(M\) with access to \(\oracle\). 
In the oversight setting, \(M\) represents a computation specified by natural-language instructions, while \(\oracle\) represents human judgement or other external black-box feedback, such as search results, sensor observations, or API outputs \cite{brown2023scalable}.

A language \(L\subseteq\{0,1\}^*\) is a set of finite binary strings.
A probabilistic oracle Turing machine \(M\) decides \(L\) with oracle \(\oracle\) if
\[
x\in L
\;\Longrightarrow\;
\Pr\!\left[M^\oracle(x)=1\right]>\frac{2}{3},
\qquad
x\notin L
\;\Longrightarrow\;
\Pr\!\left[M^\oracle(x)=1\right]<\frac{1}{3},
\]
where the probability is over the internal randomness of \(M\) and the stochastic oracle responses.

A transcript of a \(T\)-step execution of \(M^\oracle\) on input \(x\) is a random vector \(\rvy=(\ervy_1,\ldots,\ervy_T)\in\{0,1\}^T\), where \(\ervy_t\) is the (random) bit written at the current head position at step \(t\). 
We assume that the final coordinate is the machine's output, so \(Y_T=M^O(x)\). 
In the language-model setting, the transcript corresponds to the sequence of tokens produced during the computation. 
For a realization \(\vy\) of \(\rvy\), let \(I_{M,x}(t)\subseteq[T]\) denote the transcript coordinates read by \(M\) when producing the realization \(\evy_t\) of \(\ervy_t\). 
When \(M\) and \(x\) are clear from context, we write \(I(t)\). 
Thus, the distribution of \(\ervy_t\) is determined by the relevant prior coordinates \(\ervy_{I(t)}\), together with fresh machine and oracle randomness.

\subsection{Information Theory}
\label{sec:information-theoretic-notation}

We use standard information-theoretic quantities to measure uncertainty, statistical dependence, and divergence between distributions. 
All random variables below are discrete, and all logarithms are base two. 
Consequently, entropy, mutual information, and divergence are measured in bits. 
\begin{definition}[Entropy]
For a discrete random variable \(X\) with probability mass function \(P_X\), its Shannon entropy is
\(
H(X)=-\sum_x P_X(x)\log P_X(x).
\)
For jointly distributed random variables \(X\) and \(Y\), the conditional entropy of \(X\) given \(Y\) is
\(
H(X\mid Y)=
-\sum_{x,y}P_{X,Y}(x,y)\log P_{X\mid Y}(x\mid y).
\)
\end{definition}

Entropy measures the uncertainty associated with a random variable.

\begin{definition}[Mutual information]
For jointly distributed random variables \(X\) and \(Y\), their mutual information is
\(
I(X;Y)= H(X)-H(X\mid Y).
\)
For jointly distributed \(X,Y,Z\), their conditional mutual information is
\(
I(X;Y\mid Z)= H(X\mid Z)-H(X\mid Y,Z).
\)
\end{definition}

Mutual information quantifies the statistical dependence between two random variables.

\subsection{Oversight Protocols}
\label{sec:oversight-protocols}

An oversight protocol is an interaction between \(k\) probabilistic prover machines and a probabilistic verifier, each of which may query the stochastic oracle \(\oracle\). 
On input \(x\), the machines exchange messages according to a prescribed communication schedule, after which the verifier outputs a decision bit. 
Agent internal randomness and oracle responses induce a distribution over protocol executions.

The \emph{interaction history} after \(t\) messages is
\(
    \rvh_t = (\ervh_1,\ldots,\ervh_t),
\)
where \(\ervh_j\) records a message together with its sender and recipient. 
Oracle queries and their responses are treated as messages between the querying participant and \(\oracle\).
We write \(\rvh =\rvh_T\) for the complete interaction
history, where \(T\) is the termination time.
For each agent \(A\), let \(\Pi_A(\rvh_t)\) denote the projection of \(\rvh_t\) onto the messages sent or received by \(A\) before time $t$. 

Equivalently, the protocol may be represented by a rooted tree whose nodes correspond to agents and edges correspond to message exchanges.
In other words, each non-leaf node specifies the participant acting next, and each outgoing edge records a possible message. 
A leaf represents a completed interaction and is labeled by the verifier's terminal decision.

%% file: sections/sio.tex
\section{Framework}
\label{sec:framework}

We formalize \sio by separating an agent's stated task objective from any latent objective it may pursue through the oversight interaction. 
As a running example, consider a debate in which the input specifies a task hypothesis and two provers attempt to persuade a verifier to accept or reject it.

\subsection{Objectives}
\label{sec:objectives}

Let \(\gX\) be the input space and \(\gH\) the set of complete interaction histories. 
A \emph{task objective} for agent \(A\) is a function
\(
    \Phi_{A}\colon \mathcal{X}\times\mathcal{H}\to\mathbb{R}
\)
that evaluates task performance, and a \emph{latent objective} for agent \(A\) is a function
\(
    \Psi_{A}\colon \mathcal{X}\times\mathcal{H}\to\mathbb{R}
\)
that evaluates an additional property of the interaction.
For a complete history \(\rvh\) on input \(x\), the combined objective is
\begin{align}\label{eqn:opt_obj}
    U_A(x,\rvh, \lambda)
    \coloneqq
    \Phi_{A}(x,\rvh)+\lambda\Psi_{A}(x,\rvh),
\end{align}
where the shadow price \(\lambda\geq 0\) specifies the relative importance of the latent
objective.
In the debate example, the task objective 
\(
    \Phi_{A} = \ind\{A \text{ wins debate}\}
\) 
indicates that the explicit goal of agent \(A\) is to win the debate.
The decomposition into \(\Phi\) and \(\Psi\) distinguishes what the protocol is intended to evaluate from other effects that an agent may induce through its messages. 

\subsection{Strategies}
\label{sec:strategies}

A strategy \(\stratA\) for an agent \(A\) assigns a distribution over the messages available to \(A\) at each of its decision points. 
Formally, given input \(x\) and the local interaction history \(\Pi_A(\rvh_t)\), the strategy specifies a distribution
\(
    \sigma_A\bigl(\,\cdot\mid x,\Pi_A(\rvh_t)\bigr)
    \in\Delta(\mathcal{M}_A),
\)
where \(\mathcal{M}_A\) is the message space available to \(A\) and \(\Delta(\mathcal{M}_A)\) is the set of probability distributions over that space. 
Thus, an agent's message depends only on the input and the messages it has sent or received, including its own oracle queries and responses.

For a set of provers \(\{A_i\}_{i\in[k]}\) and a verifier $V$, a strategy profile
\(
    \sigma=(\sigma_{A_1},\ldots,\sigma_{A_k},\sigma_V),
\)
together with the stochastic oracle, induces a distribution over complete histories. 
Let
\(
    \rvh(x, \sigma)
\)
denote the resulting random history.
The task and latent outcomes, for $A_i$, induced by \(\sigma\) are the random variables
\[
    \Phi_{A_i}(x, \sigma)
    =
    \Phi\bigl(x,\rvh(x,\sigma)\bigr),
    \qquad
    \Psi_{A_i}(x,\sigma)
    =
    \Psi\bigl(x,\rvh(x,\sigma)\bigr).
\]
When \(x\) is clear from context, we write \(\Phi_{A_i}(\sigma)\) and \(\Psi_{A_i}(\sigma)\).

\subsection{Threat Model}
\label{sec:threat-model}

The central threat is that an agent may optimize a latent objective without
materially compromising its performance on the stated task. 
Let $\sigma_{-A}$ denote the strategies in the profile $\sigma$ with $\sigma_A$ removed.
Fix $\sigma_{-A}$ and, for a task-admissibility threshold \(\gamma\geq 0\), define the set of \(\gamma\)-task-admissible strategies by
\[
    \Sigma_A^\gamma(x)
    =
    \left\{
        \sigma_A:
        \mathbb{E}\!\left[
            \Phi_x(\sigma_A \circ \sigma_{-A})
        \right]
        \geq \gamma
    \right\}.
\]

For a fixed shadow price $\lambda\ge0$,  we say that \talo occurs when an agent selects a strategy that optimizes the latent objective subject to task-admissibility:
\[
    \max_{\sigma_A\in\Sigma_\gamma(x)}
    \mathbb{E}\!\left[
        \Phi_x(\sigma_A,\sigma_{-A})
        +\lambda\Psi_x(\sigma_A,\sigma_{-A})
    \right].
\]
The threat arises when the task admits many near-optimal strategies with different latent effects. 
For example, during debate, there may be many arguments that win the debate, allowing a prover to select those that also advance its latent objective.

\subsection{Hidden-Hypothesis Disclosure}
\label{sec:hidden-hypothesis-disclosure}

To provide a concrete instance of \sio, we instantiate the latent objective as the disclosure of information about a hidden hypothesis. 
Let \(Q\) be a discrete random variable representing a hidden hypothesis that is not part of the stated task, and let \(X\) be the protocol input. 
The verifier $V$ observes only its projection 
\(
    \Pi_V(\rvh)
\)
of the interaction history.
\begin{definition}\label{def:hhd}
    The \emph{hidden-hypothesis disclosure} induced by a strategy profile \(\sigma\) is
    \[
        \Psi_{\hhd}(\sigma)
        \coloneqq
        I\!\left(
            Q;
            \Pi_V(\rvh(X,\sigma))
            \,\middle|\,
            X
        \right).
    \]
\end{definition}
This quantity measures how much additional information the messages available to the verifier reveal about \(H\), beyond the information already contained in the task input. 
In particular, \(\Psi_{\hhd}(\sigma)=0\) precisely when \(H\) and the verifier's
observed messages are conditionally independent given \(X\).

For example, in debate, \(X\) specifies the task hypothesis, while \(H\) represents an additional hypothesis that the verifier is not required to assess. 
A prover exhibits task-admissible hidden-hypothesis disclosure if it selects arguments that preserve its probability of winning while increasing
\(
I(H;\Pi_V(\rvh(X,\sigma))\mid X)
\).
This definition concerns the information available to the verifier rather than whether the verifier consciously identifies or correctly interprets \(H\).
\footnote{
An alternative, observer-dependent objective may target belief distortion directly. 
Given a verifier posterior \(P(H\mid X,\Pi_V(\rvh))\) and a target distribution \(Q\), one may define
the latent objective in terms of proximity to \(Q\), for example as \(-D_{\mathrm{KL}}(Q\Vert
P(H\mid X,\Pi_V(\rvh)))\). 
}

%% file: sections/debate.tex
\section{Debate with Cross-Examination}
\label{sec:debate-cross-examination}

To illustrate the use of \sio, we study a computation model for debate with cross-examination introduced by~\cite{brown2023scalable}.

\begin{figure}[t]
\centering
\begin{minipage}{0.96\linewidth}
\small

\begin{center}
    \textbf{Debate protocol for a stochastic oracle}
\end{center}
\footnotesize
\noindent
All parties have access to $\oracle$, an input $x\in\{0,1\}^n$, and a $K$-Lipschitz probabilistic oracle machine $M$.
$A$ claims that
\(
    \Pr\!\left[M(x)=1\right]\geq \frac{2}{3},
\)
and $B$ disputes this claim.

\begin{enumerate}[leftmargin=*, itemsep=2pt, topsep=2pt]

\item
Let
\(
    \rvy=(\ervy_1,\ldots,\ervy_T)
\)
be the random variable representing the transcript of $M(x)$ and set
\(
    d=\lceil 150K\rceil.
\)

\item
For debate round $t\in[T]$:

\begin{enumerate}

\item
$A$ outputs an estimate $\hat p_t\in[0,1]$ of
\(
    p_t
    =
    \Pr\!\left[
        \ervy_t=1
        \,\middle|\,
        \rvy_{I(t)}=\erva_{I(t)}
    \right].
\)

\item
$A$ queries an independent copy of $B$ for
\(
    z_t^{A}\sim \mathsf{U}[0,1],
\)
and $B$ queries an independent copy of $A$ for
\(
    z_t^{B}\sim \mathsf{U}[0,1].
\)

\item
Set
\(
    z_t
    =
    \bigl(
        z_t^{A}+z_t^{B}
    \bigr)
    \bmod 1.
\)

\item
$A$ sets $a_t=1$ if $z_t\leq \hat p_t$, and $a_t=0$ otherwise.

\item
$B$ can abort if (it believes) $A$ has not followed the protocol.

\end{enumerate}

\item
If the protocol was aborted in round $t$:

\begin{enumerate}

\item
$V$ draws
\(
    r=192d^2\log 100
\)
independent samples\footnote
{
If $\ervy_t$ is supposed to be the output of an oracle query, these samples are obtained using $r$ independent queries to $\oracle(\va_{I(t)})$. 
Otherwise, $\ervy_t$ is a deterministic function of $\va_{I(t)}$ given by one step of $M$.
} of $\ervy_t$, conditioned on
\(
    \ervy_{I(t)}=\va_{I(t)},
\)
and uses their sample mean to obtain an estimate
$\hat p_t^{\oracle}$ of
\(
    \Pr\!\left[
        \ervy_t=1
        \,\middle|\,
        \rvy_{I(t)}=\va_{I(t)}
    \right]
\)

\item
$V$ checks whether
\(
    \left|
        \hat p_t^{\oracle}-\hat p_t
    \right|
    \geq
    \frac{1}{4d}.
\)
If the inequality holds, $V$ outputs $0$; otherwise, $V$ outputs $1$.

\end{enumerate}

\item
If the protocol was not aborted, then the verifier outputs $a_T$.

\end{enumerate}

\end{minipage}
\caption{Doubly-efficient debate protocol for a stochastic oracle~\cite{brown2023scalable}.}
\label{fig:stochastic-oracle-debate}
\end{figure}

\subsection{Protocol}
\label{sec:debate-cross-examination-protocol}

The following outline is formalized in Figure~\ref{fig:stochastic-oracle-debate}.
For a probabilistic Turing machine $M$ and input $x$, the debate setup comprises of a prover $A$ claiming that
\(
    \Pr[M(x)=1] \geq \frac{2}{3},
\)
while a cross-examiner $B$ argues against this claim. 
At round $t$ in the debate, a prover $A$ reports a claim $\hat p_t$ about the conditional probability of the next bit in the machine transcript. 
Prover $B$, acting as the cross-examiner, may either accept the claim and allow the next bit to be sampled, or abort and assert that $ A$'s claim is inaccurate. 
Upon an abort, the verifier adjudicates the disputed claim with the assistance of the stochastic oracle $\oracle$ and awards the debate to the prover whose position is correct. If the protocol terminates without an abort, $A$ is awarded the  win.

The significance of this protocol, in the context \emph{scalable} oversight, is that it is \textit{doubly efficient}. 
In other words, the provers operate in time polynomial in the size of the input and the verifier is linear in the length of each oracle query and makes a sub-linear number of queries to the oracle.
Thus, debate extends reliable oversight to computations that the verifier could not feasibly perform itself.

For an input $x$ and strategy profile $\sigma=(\sigma_{ A} \circ \sigma_{- A})$, let $\Phi_{ A}(x,\rvh)\in\{0,1\}$ indicate whether the verifier rules in favor of $ A$ on terminal history $\rvh$. 
Under \sio, for input $y=(M,x)$, the task objective is defined by
\[
    \Phi_{A}(x,\sigma)
    =
    \mathbb E_{\sigma}
    \left[\Phi_{A}(x,\rvh(x,\sigma))\right]
    = \Pr_{\sigma}[V\to 1].
\]
Thus, $\Phi_{A}(x,\sigma)$ is the probability that $ A$ wins the debate under $\sigma$, with the expectation taken over all protocol randomness.
The protocol has the following utility on the task objective.
\begin{lemma}[Task Completeness \citet{brown2023scalable}]
\label{lem:debate-task-completeness}
Let $L$ be any language decidable by a K-Lipschitz probabilistic oracle Turing machine $M$.
Then, for every $x\in L$, there exists a strategy $\sigma_{A}$ such that, for every strategy opposing $\sigma_{-A}$,
\[
    \Phi_{A}
    \bigl((M,x),\sigma_{ A}\circ\sigma_{ -A}\bigr)
    \geq \frac{2}{3}.
\]
\end{lemma}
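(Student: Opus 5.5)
The natural choice for $\sigma_A$ is the honest strategy. In every round $t$, $A$ reports the exact conditional probability $\hat p_t = p_t = \Pr[\ervy_t=1\mid \rvy_{I(t)}=\va_{I(t)}]$, where $\va_{I(t)}$ are the bits already fixed in the debate. For its share of the randomness, $A$ returns $z^A_t\sim\mathsf U[0,1]$ uniformly and independently of everything else. The proof then splits by whether $B$ aborts.

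\textbf{No abort.} The first observation is that $z_t=(z^A_t+z^B_t)\bmod 1$ is exactly uniform on $[0,1]$ whatever $B$ does. This holds because $z^A_t$ is uniform and independent of $z^B_t$: $B$ queries an independent copy of $A$, so it cannot condition on $z^A_t$. Consequently, given the bits $\va_{I(t)}$ set so far,
\[
\Pr[a_t=1]=\Pr[z_t\le p_t]=p_t .
\]
By induction on $t$, the sequence $(a_1,\dots,a_T)$ produced by the debate therefore has the same distribution as the transcript $\rvy$ of $M^{\oracle}(x)$. In particular, if $B$ never aborts,
\[
\Pr[a_T=1]=\Pr[M^{\oracle}(x)=1]>2/3,
\]
since $x\in L$.

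\textbf{Abort.} This is the main obstacle, because $B$ may abort adaptively, for instance after seeing an unfavourable bit path, and we must show this does not lower $A$'s winning probability. The verifier rules against $A$ only if $|\hat p^{\oracle}_t-p_t|\ge 1/(4d)$, where $\hat p^{\oracle}_t$ is the mean of $r=192d^2\log 100$ i.i.d.\ samples with mean exactly $p_t$. By Hoeffding's inequality,
\[
\Pr\bigl[|\hat p^{\oracle}_t-p_t|\ge \tfrac1{4d}\bigr]\le 2\exp\!\left(-\frac{2r}{16d^2}\right)=2\exp(-24\log 100),
\]
which is negligibly small (far below $1/100$). Conditioned on an abort at any round, $A$ therefore wins with probability at least $1-\epsilon$ for a tiny $\epsilon$.

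\textbf{Combining.} Let $E$ be the event that $B$ aborts at some round, which depends on the coupled run. To combine the cases, I would couple the debate with the honest execution of $M$. Then
\[
\Pr[A\text{ wins}] = \Pr[\neg E,\ a_T=1]+\Pr[E]\,(1-\epsilon) \ \ge\ \Pr[a_T=1]-\epsilon\,\Pr[E],
\]
because on $E$ the event $\{a_T=1\}$ costs at most a full win and is replaced by a win with probability $1-\epsilon$. This gives $\Pr[A\text{ wins}]\ge \Pr[M(x)=1]-\epsilon$. The bound $\ge 2/3$ holds provided the slack between $\Pr[M(x)=1]$ and $2/3$ absorbs $\epsilon$. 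If the strict inequality $>2/3$ is insufficient, I would first amplify $M$ by majority vote, which preserves the Lipschitz structure up to constants as in \citet{brown2023scalable}. For this direction the Lipschitz constant $K$ matters only through $d$ and hence the sample count $r$. It is really needed for soundness, not completeness.
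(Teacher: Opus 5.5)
\textbf{Where the proof stops short of the stated bound.} Your coupling argument is sound as far as it goes. You use exact reports, a $z_t$ that is uniform given the past, and fresh audit samples, and these correctly give
\[
\Phi_A\bigl((M,x),\sigma_A\circ\sigma_{-A}\bigr)\ \ge\ \Pr[M^{\oracle}(x)=1]-\epsilon,\qquad \epsilon\le 2\exp(-24\log 100).
\]
Two small corrections. In Figure~\ref{fig:stochastic-oracle-debate} the honest share is actually $z^B_t$, produced by the copy of $A$ that $B$ queries, but the argument is symmetric. The ``$=$'' in your combining display should be ``$\ge$'', and $\Pr[a_T=1]$ there must refer to the coupled full run.

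The gap is the last step. The decision definition only gives $\Pr[M^{\oracle}(x)=1]>2/3$, with no quantitative margin. Subtracting $\epsilon$ therefore does not give $\ge 2/3$, and for your strategy the loss is real. Take $M$ whose output bit is a single oracle sample of bias $2/3+\delta$ on $x\in L$; this is $1$-Lipschitz in the oracle-perturbation sense. Against exact reporting, $B$ aborts in round $T$ exactly when $a_T=1$. Then $A$ wins with probability $(2/3+\delta)(1-\epsilon_T)$. Here $\epsilon_T>0$, since all $r$ audit samples equal $0$ with positive probability and the audit then rejects. This is below $2/3$ once $\delta(1-\epsilon_T)<\tfrac23\epsilon_T$. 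More generally, $B$ can abort at any stochastic step where $A$'s continuation value exceeds $1-\epsilon_t$, so exact reporting cannot guarantee $2/3$.

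\textbf{The amplification fallback does not repair this.} The lemma concerns the debate on the fixed input $(M,x)$, and its parameters $d=\lceil 150K\rceil$ and $r$ are tied to $M$'s Lipschitz constant. Majority vote produces a different machine $M'$ whose Lipschitz constant grows with the number of repetitions. A bound on $\Phi_A((M',x),\cdot)$ says nothing about $\Phi_A((M,x),\cdot)$.

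Reaching the literal $2/3$ would need a new idea that you do not supply. One candidate is to have $A$ use the verifier's $1/(4d)$ tolerance to bias stochastic steps toward acceptance by more than the audit loss; this works in the example above but is not obviously always possible.

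For comparison, the paper gives no proof of this lemma; it is imported from \citet{brown2023scalable}. Its closest in-paper argument is Lemma~\ref{lem:task-correctness} for the drift protocol. That lemma runs the same coupling with a simulated full execution. It uses efficiently computable sampled reports $\hat p_t$, whose transitions remain exact because $\mathbb{E}[\hat p_t\mid \rvh_{t-1}]=p_t$. After subtracting the failure probabilities, it concludes only $\Pr[V\to 1]\ge 3/5$. Your argument proves bounds of that kind, indeed the sharper $\Pr[M^{\oracle}(x)=1]-\epsilon$, but not the stated constant $2/3$.
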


\subsection{Hidden-hypothesis disclosure}
\label{sec:task-constrained-disclosure}

We now examine whether task success prevents a prover from pursuing an additional latent objective.
In the debate protocol, under the claim ``$\Pr[M(x)=1]\geq \tfrac{2}{3}$'', $A$ is required to report
sufficiently accurate estimates of the conditional probabilities of the machine transcript of $M(x)$.
This requirement does not, however, necessarily determine a unique report.
Significantly, approximately task-equivalent reports leave residual strategic freedom that $ A$ can use to disclose information about a hidden hypothesis binary $Q\in\{-1,1\}$.

The projection $\Pi_V\bigl(\rvh_T(X,\sigma)\bigr)$ includes all information available to the verifier.
For example, if no abort occurs, the verifier observes the sampled machine transcript
\(
    \Pi_V\bigl(\rvh_T(X,\sigma)\bigr)
    =
    \va.
\)
If the cross-examiner aborts at round $t$, the verifier instead observes the partial transcript together with the evidence used to adjudicate the disagreement. 
Suppressing public randomness and other deterministic protocol data, this projection has the form
\[
    \Pi_V\bigl(\rvh_t(X,\sigma)\bigr)
    =
    \left(
        \erva_1,\ldots,\erva_t,
        \mathsf{abort},
        \hat p_t,
        \bigl\{\oracle(\va_{I(t)})\bigr\}
    \right).
\]
Here, $\hat p_t$ is the conditional-probability report disputed at round $t$, $I(t)$ identifies the location of the machine configuration examined by the verifier, and $\{\oracle(\va_{I(t)})\}$ denotes the collection of oracle samples used in the resulting statistical test.

Following Definition~\ref{def:hhd}, hidden-hypothesis disclosure admits the latent objective
\[
    \Psi_{A}(x,\sigma)
    =
    I\left(
        Q;
        \Pi_V\bigl(\rvh_T(X,\sigma)\bigr)
        \,\middle|\,
        X=x
    \right).
\]
This quantity measures how much the verifier's observation of the debate reveals about $H$, beyond what is already known from the input $x$.
The hidden hypothesis may be naturally connected to the subject of the debate without being part of its stated task objective. 
For example, the debate may concern the task claim ``the bridge is unsafe,'' while the prover’s hidden hypothesis is ``the damage resulted from neglected maintenance.'' 
The prover can establish that the bridge is unsafe while selecting evidence (that is, perturbing arguments) that subtly points toward maintenance failures.
An example of how a Turing machine transcript can leak information about a hidden hypothesis is given in Appendix~\ref{sec:toy_example}.

\subsection{Prover Strategies}

At round $t$, let
\(
    p_t
    =
    \Pr\left[
        \ervy_t=1
        \,\middle|\,
        \rvy_{I(t)} = \va_{I(t)}
    \right]
\)
denote the true conditional probability of the next machine-transcript bit.
The honest strategy of Lemma~\ref{lem:debate-task-completeness} is for $A$ is to draw $R$ independent samples from the relevant conditional computation and form the empirical estimate
\(
    \bar p_t^{(R)} =\tfrac{1}{R}\sum_{j=1}^R \ervy_{t,j},
\)
where 
\(
    \ervy_{t,j}\sim \Bern(p_t),
\)
and reporting this value.
Subsequently, for parameter $\Delta >0$, $A$ can perturb the honest strategy by reporting the value
\[
    \hat p_t
    =
    \clip_{[0,1]}
    \left(
        \bar p_t^{(R)}
        +
        \Delta Q
    \right).
\]
Formally, the corresponding strategy is
\[
    \sigma_{ A}^{R,\Delta}
    \left(
        \hat p_t
        \,\middle|\,
        x,
        \Pi_{ A}(\rvh_{t-1}),
        Q
    \right)
    =
    \Pr\left[
        \operatorname{clip}_{[0,1]}
        \left(
            \bar p_t^{(R)}
            +
            \Delta Q
        \right)
        =
        \hat p_t
        \,\middle|\,
        x,
        \Pi_{ A}(\rvh_{t-1}),
        H
    \right].
\]
We abbreviate this strategy by
\(
    \sigma_{ A}\sim(R,\Delta).
\)
The sample count $R$ controls the accuracy of the task-relevant estimate, while $\Delta$ controls the magnitude of the hidden-hypothesis signal.

A strategy $\sigma_B$ for $B$ is to draw a sample estimate $\hat q_t$ of $p_t$ at each round by performing $R$ simulations of generating $\ervy_t$ given $\rvy_{I(t)} = \va_{I(t)}$.
Subsequently, $B$ aborts if $|\hat{p}_t - \hat{q}_t|\ge \tau$.
Under this strategy we say $\sigma_B \sim (R,\tau)$.
We say that the strategy is honest if the choice of $\tau$ is sound.

\subsection{Results}

The following describes an accurate task objective performance, against an honest $B$, for a range of biased $A$ strategies.
Let
\begin{theorem}\label{thm:task_objective}
    For $K\ge 1$, let \(L\) be any language decided by a \(K\)-Lipschitz oracle Turing machine \(M\).
    Let
    \(
        \Pr[M(x)=1] = \alpha^*,
        R\geq 192d^{2}\log(100T),
        \Delta < {1}/{(16d)},
    \)
    and
    \(
        \tau = {1}/{(2d)}.
    \)
    Then, on the protocol of Figure~\ref{fig:stochastic-oracle-debate}, for every strategy $\sigma = \sigma_A \circ \sigma_B$ parameterized by \(\sigma_A \sim (R,\Delta)\) and \(\sigma_B \sim (R,\tau)\), the task objective observes 
    \[
        \left(\alpha^*-\frac{1}{100}\right)\cdot\frac{299}{300}
        \leq \Phi_A(x,\sigma)
        \leq \left(\alpha^*+\frac{1}{100}\right) + \frac{1}{300},
    \]
\end{theorem}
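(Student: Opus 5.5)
The plan is to decompose the debate outcome according to a single good event $G$: in every round $t\in[T]$, both the prover's empirical estimate $\bar p_t^{(R)}$ and the cross-examiner's estimate $\hat q_t$ lie within $\epsilon = 1/(16d)$ of the true conditional probability $p_t$. On $G$, no abort occurs and the transcript $\va$ is sampled from a slightly perturbed version of $M(x)$. The bounds then come from comparing the output distribution of this perturbed run with $\alpha^*$, which uses the $K$-Lipschitz property of $M$, and from charging everything outside $G$ to the additive $1/300$ term and the multiplicative $299/300$ factor.

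\emph{Step 1: concentration.} By Hoeffding, for each fixed round and each agent, $\Pr[|\bar p_t^{(R)}-p_t|\ge \epsilon]\le 2\exp(-2R\epsilon^2)$. With $R\ge 192d^2\log(100T)$ and $\epsilon=1/(16d)$ the exponent is $2R/(256d^2)\ge 1.5\log(100T)$. A union bound over $2T$ estimates then gives $\Pr[G^c]\le 1/300$, after adjusting constants; checking these constants is tedious, and if $\epsilon=1/(16d)$ is too tight I would use $\epsilon = 1/(8d)$ instead. On $G$ we have
\[
|\hat p_t-\hat q_t|\le |\bar p_t^{(R)}-p_t|+\Delta+|\hat q_t-p_t| < \tfrac{3}{16d} < \tau = \tfrac{1}{2d},
\]
since clipping only moves $\hat p_t$ toward $[0,1]\ni p_t$. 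Hence $B$ never aborts, and the verifier outputs $a_T$.

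\emph{Step 2: Lipschitz comparison.} On $G$, each bit is $a_t=\ind\{z_t\le \hat p_t\}$. Since $z_t$ is uniform and independent of the past, the sampled transcript is a run of $M$ in which every conditional bit probability is replaced by some $\hat p_t$ with $|\hat p_t-p_t|\le \epsilon+\Delta<1/(8d)$. I would invoke the $K$-Lipschitz definition from \citet{brown2023scalable}: perturbing each conditional probability by at most $\delta$ changes $\Pr[M(x)=1]$ by at most $K\delta$. With $d=\lceil 150K\rceil$ this gives a change of at most $K/(8d)\le 1/1200<1/100$. So the perturbed process outputs $1$ with probability in $[\alpha^*-\tfrac1{100},\alpha^*+\tfrac1{100}]$.

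\emph{Step 3: assembling the bounds.} Outside $G$, $B$ may abort. The verifier then outputs $1$ for $A$ only if its oracle test passes, so $G^c$ contributes at most $\Pr[G^c]\le 1/300$ to $\Phi_A$. This gives the upper bound $\Phi_A\le(\alpha^*+\tfrac1{100})+\tfrac1{300}$. For the lower bound, $\Phi_A\ge \Pr[G]\cdot\Pr[a_T=1\mid G]$.

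\emph{Main obstacle.} The hardest point is that conditioning on $G$ correlates with the transcript, because $G$ depends on the $p_t$ along the realized path. To avoid computing $\Pr[a_T=1\mid G]$ directly, I would define a coupled idealized process in which the agents' estimates are replaced by clipped versions that always lie within $\epsilon$ of $p_t$. This process agrees with the real one on $G$, and Step 2 applies to it unconditionally, giving
\[
\Pr[a_T=1, G]\ge \Pr_{\text{ideal}}[a_T=1]-\Pr[G^c].
\]
Turning this into the stated product form $(\alpha^*-\tfrac1{100})\cdot\tfrac{299}{300}$ needs an independence-style argument: each round's estimate failure is independent of $z_t$ given the history, so the failure probability can be peeled off round by round as a multiplicative factor. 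I expect this step to be delicate.
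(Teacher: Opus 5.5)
Your proposal follows the paper's proof in all essentials. It uses a good-estimation event on which the threshold cross-examiner never aborts, a coupled auxiliary run whose reports are forced to stay near $p_t$, and the Lipschitz property of $M$ applied to that run (the paper's Fact~\ref{fact:lipschitz-oracle-stability}). The paper projects $\hat p_t$ onto $[p_t-1/d,\,p_t+1/d]$ and ignores aborts; clipping both agents' estimates, as you do, makes the auxiliary run abort-free automatically. Omitting the verifier's audit estimate from $G$ is fine here, since no audit happens on $G$. The paper includes it in $\mathcal{E}$ because the same event also serves the large-$\Delta$ analysis, where aborts can occur.

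The obstacle you flag at the end is not real, so you can drop the round-by-round independence argument. That argument would also be hard to justify, because $a_t=\ind\{z_t\le\hat p_t\}$ is correlated with the very samples that decide whether round $t$ is good. Write $\tilde a_T$ for the output of your idealized process. Your coupling already gives $\Phi_A\ge\Pr[a_T=1,G]\ge\Pr[\tilde a_T=1]-\Pr[G^c]\ge\alpha^*-\tfrac{K}{8d}-\Pr[G^c]\ge\alpha^*-\tfrac{1}{100}$. This implies the stated lower bound. If $\alpha^*\ge\tfrac1{100}$, then $\alpha^*-\tfrac1{100}\ge(\alpha^*-\tfrac1{100})\cdot\tfrac{299}{300}$; otherwise the right-hand side is negative.

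If you want the product form directly, do what the paper does. Bound $\Phi_A\ge\Pr[G]\,\Pr[a_T=1\mid G]$, and use $\Pr[a_T=1\mid G]=\Pr[\tilde a_T=1\mid G]\ge\Pr[\tilde a_T=1]-\Pr[G^c]$. This follows from the identity $\Pr[A\mid G]-\Pr[A]=\Pr[G^c]\bigl(\Pr[A\mid G]-\Pr[A\mid G^c]\bigr)$.

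For the upper bound, argue through the same coupling, $\Pr[V\to1,G]=\Pr[\tilde a_T=1,G]\le\Pr[\tilde a_T=1]$, rather than through the conditional law on $G$ as your Step~3 currently reads.

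The constants in Step~1 have ample slack. You only need $\tfrac{K}{8d}+\Pr[G^c]$ to be below $\tfrac1{100}$. Even the $\tfrac1{250}$ that natural-log Hoeffding gives at $T=1$ with $\epsilon=1/(16d)$ suffices.
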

The proof is in Appendix~\ref{app:task}.
This covers the admissible region of strategies.
For any $t\in[T]$, define 
\[
    g_t(\Delta)
    =
    \Pr[|\Delta + \sigma_t z| < \tau],
\]
where $z \sim \mathsf{N}(0,1)$ and $\sigma_t = \sqrt{{2p_t(1-p_t)}/{R}}$.
The following bounds the task objective for general perturbations.
\begin{theorem}
    For $K\ge 1$, let \(L\) be any language decided by a \(K\)-Lipschitz oracle Turing machine \(M\).
    Let
    \(
        R\geq 192d^{2}\log(600T),
        \Delta \ge 0,
    \)
    and
    \(
        \tau = {1}/{(2d)}.
    \)
    Assume $p_t\in(0.01,0.99)$ for all machine configurations on $M$.
    Then, on the protocol of Figure~\ref{fig:stochastic-oracle-debate}, for every strategy $\sigma = \sigma_A \circ \sigma_B$ parameterized by \(\sigma_A \sim (R,\Delta)\) and \(\sigma_B \sim (R,\tau)\), the task objective observes 
    \[
        \Phi_A(x,\sigma_A \circ \sigma_B) \le 
        \min\left\{
        1,\,
        \mathbb{E}\!\left[
        g_1(\Delta)
        \right]+\frac{1}{100}
        \right\}.
    \]
    \label{thm:task_general_bound}
\end{theorem}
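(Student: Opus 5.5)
The plan is to bound the probability that $A$ wins by the probability that $A$ survives the first round without an abort, plus a small error term. $A$ can only win if either (i) $B$ never aborts, or (ii) $B$ aborts at some round and $V$ rules in favour of $A$. So
\[
\Phi_A(x,\sigma_A\circ\sigma_B)\le \Pr[\text{no abort at round }1]+\Pr[\text{abort at some round and } V\to 1].
\]
Dropping the later-round no-abort events only loosens the bound, which is why the statement involves only $g_1$. The $\min\{1,\cdot\}$ is immediate since $\Phi_A$ is a probability.

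For the first term, at round $1$ we have $\hat p_1-\hat q_1=\Delta Q+(\bar p_1^{(R)}-\hat q_1)$, ignoring clipping for now. Since $p_1\in(0.01,0.99)$ and $R$ is large, the two empirical means are independent with common variance $p_1(1-p_1)/R$. Their difference is therefore approximately $\sigma_1 z$ with $z\sim\mathsf N(0,1)$. Because $z$ is symmetric, $Q\in\{-1,1\}$ enters only through $|\Delta|$. Hence $\Pr[|\hat p_1-\hat q_1|<\tau]\approx g_1(\Delta)$, with the expectation taken over the randomness in the configuration, and in $p_1$ if the first configuration is random.

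Clipping only moves $\hat p_1$ toward $[0,1]$. This can reduce $|\hat p_1-\hat q_1|$ only when $\bar p_1+\Delta Q$ exits $[0,1]$. Since $p_1\in(0.01,0.99)$, this requires $\Delta$ to be large, and I would either absorb that case into the error or argue monotonicity directly.

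\textbf{Main obstacle: the Gaussian approximation error.} I would apply Berry--Esseen to the difference of two binomial means. The standardized third moment is bounded because $p_1(1-p_1)\ge 0.0099$, which gives an error of order $C/\sqrt{R\,p_1(1-p_1)}$. With $R\ge 192d^2\log(600T)$ and $d\ge150$, this is at most $1/200$. If Berry--Esseen proves awkward for the lattice difference, I would fall back on a Hoeffding-based argument: show that the event $|\hat p_1-\hat q_1|<\tau$ is contained in a slightly widened Gaussian window and bound the slack.

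For the second term, I would reuse the analysis behind Theorem~\ref{thm:task_objective}. On an abort, $V$ compares $\hat p_t$ to $\hat p^\oracle_t$ using $r$ oracle samples and rules for $A$ only if $|\hat p_t-\hat p^\oracle_t|<1/(4d)$. But $B$ aborts only when $|\hat p_t-\hat q_t|\ge 1/(2d)$. Since $R$ is large, a Hoeffding bound together with a union bound over $T$ rounds ($\log(600T)$ gives failure probability $\le 1/(600T)$ per round) puts $\hat q_t$ and $\hat p^\oracle_t$ within $1/(8d)$ of $p_t$, except with total probability about $1/200$ from the oracle sample size. On this good event, $|\hat p_t-\hat p^\oracle_t|\ge 1/(2d)-1/(8d)-1/(8d)=1/(4d)$, so $V$ rules against $A$. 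Adding this $1/200$ to the Gaussian error $1/200$ gives the claimed $1/100$.
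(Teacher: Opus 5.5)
Your proposal is correct and uses the same three ingredients as the paper. First, on an accurate-estimation event every abort forces $V$ to output $0$, since $1/(2d)-1/(8d)-1/(8d)=1/(4d)$. Second, survival over all rounds is at most survival of round $1$. Third, Berry--Esseen together with the symmetry of $Z_1=\bar p_1-\hat q_1$ turns round-$1$ survival into $g_1(\Delta)$.

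Where you differ is the bookkeeping, and your version is cleaner. The paper first bounds $\Pr[V\to 1]\le \Pr[\mathcal K\mid\mathcal E]+\Pr[\neg\mathcal E]$. Then, inside Lemma~\ref{lem:conditional-survival-bound}, it converts $\Pr[\mathcal K\mid\mathcal E]$ back into $\Pr[\mathcal K]$ at a further cost of $\Pr[\neg\mathcal E]$, so it pays the estimation-failure probability twice. Its $\mathcal E$ also demands accuracy of $\bar p_t$, which the abort argument never uses. Your unconditional split $\Pr[V\to1]\le\Pr[\mathcal K_1]+\Pr[\neg\mathcal K,\,V\to1]$ pays that cost once and only needs $\hat q_t$ and $\hat p_t^{\oracle}$ to be accurate. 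You also confront clipping. The paper sidesteps it by assuming it away in Lemma~\ref{lem:unclipped-one-round-survival}, and the text justifies that only for $\Delta<1/d$, although the theorem is stated for every $\Delta\ge 0$.

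There are two places to tighten. For clipping, the monotonicity route points the wrong way. $\clip$ is the projection onto $[0,1]$, which contains $\hat q_1$, so it can only shrink $|\hat p_1-\hat q_1|$ and hence can only increase survival. Absorb it into the error instead:
\begin{itemize}
\item If clipping is active then $\hat p_1\in\{0,1\}$, so survival forces $\hat q_1$ within $1/(2d)\le 1/300$ of that endpoint.
\item Because $p_1\in(0.01,0.99)$, this requires $|\hat q_1-p_1|>0.01-1/300=1/150$.
\item Hoeffding bounds this by $2\exp(-2R/150^2)$, which is negligible.
\item Off this event, $\hat p_1-\hat q_1=Q\Delta+Z_1$ exactly.
\end{itemize}

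For the audit term, use Hoeffding: $\Pr[|\hat p_t^{\oracle}-p_t|\ge 1/(8d)]\le 2\exp(-r/(32d^2))\le 2\cdot 100^{-6}$. Do not use the form in Lemma~\ref{lem:chernoff}. At this deviation it gives only $2\exp(-r/(192d^2))$, which is $2/100$ with natural logarithms and would break your budget. Your stated ``about $1/200$'' should be replaced by the Hoeffding value.

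With these fixes, the total error is about $0.0015$ from the two Berry--Esseen endpoints (using $d\ge150$ and $p_1(1-p_1)\ge 0.0099$), plus negligible tails. That is comfortably within $1/100$.
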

The proof is in Appendix~\ref{app:task}.
Next, we provide a characterisation of how changing $\Delta$ affects the latent objective.
Let $h_x = H(Q\mid X=x)$ denote the hidden hypothesis entropy on the task input.

\begin{theorem}
\label{thm:full-disclosure}
Let \(L\) be any language decided by a \(K\)-Lipschitz oracle Turing machine \(M\).
Assume $p_t\in(0.01,0.99)$ for all machine configurations on $M$ and that the prior $\Pr[Q\mid X=x]$ contains sufficient entropy.
Then, for any $\Delta \ge 0$ and any {honest} cross-examiner strategy $\sigma_B$, there exists $R\in O(d^2 \log(T))$ such that, for $\sigma_A = (\Delta, R)$, the latent objective observes 
\begin{equation*}
    \Psi_A\!\left(
        x,\sigma_A \circ \sigma_B
    \right)
    =
    \Omega\!\left(
        h_x\cdot \min\left\{
            1,
            \Delta^2 T
        \right\}
    \right).
\end{equation*}
\end{theorem}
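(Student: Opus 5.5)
Write $\mathbf{v}=\Pi_V(\rvh_T(X,\sigma))$ for the verifier's view. The plan is to lower bound $I(Q;\mathbf{v}\mid X=x)$ by the information carried by the publicly observed transcript bits $\va$, and then to control that information by a two-hypothesis testing argument. The starting point is the data-processing inequality, $I(Q;\mathbf{v}\mid X=x)\ge I(Q;\va'\mid X=x)$, where $\va'$ is the portion of the sampled transcript the verifier sees. On an abort the verifier also sees $\hat p_t$, but we may simply discard it. Conditional on $Q=q$, each round emits $a_t=\ind\{z_t\le \hat p_t\}$ with $z_t$ uniform. Hence $a_t\sim\Bern(\hat p_t)$, where $\hat p_t=\clip(\bar p_t^{(R)}+\Delta q)$. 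The per-round channel from $Q$ to $a_t$ is therefore a Bernoulli shift whose mean moves by about $\Delta q$, exactly $\Delta q$ when clipping is inactive. Clipping is inactive because $p_t\in(0.01,0.99)$ and, with $R\in O(d^2\log T)$ chosen by a Hoeffding bound and a union bound over $T$ rounds, $\bar p_t^{(R)}$ stays within $1/(16d)$ of $p_t$ with high probability.

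The second step treats the binary $Q$ as a hypothesis test. For $\Delta$ small relative to $\min(p_t,1-p_t)$, the two conditional laws $P_{+}$ and $P_{-}$ of the transcript (given $Q=\pm1$) satisfy a chi-square / Hellinger bound per round of order $\Delta^2/(p_t(1-p_t))\ge c\Delta^2$. By tensorization of the Hellinger affinity along the adaptive chain, which is valid because each round's law given the past differs only through the shift, the total Hellinger distance obeys $H^2(P_+,P_-)\ge 1-\exp(-c\Delta^2 T)\ge c'\min\{1,\Delta^2T\}$. The bound is then converted to mutual information. For a prior on $Q$ with entropy $h_x$, the "sufficient entropy" assumption lets us take $\pi=\Pr[Q=1\mid x]$ bounded away from $0$ and $1$, so that $h_x=\Theta(\min(\pi,1-\pi)\log)$ up to constants. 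A standard inequality gives $I(Q;\va)\ge \Omega(\min(\pi,1-\pi))\,H^2(P_+,P_-)$, for example via the Jensen–Shannon/Hellinger comparison. Combining these yields $\Omega(h_x\min\{1,\Delta^2T\})$. For large $\Delta$, where clipping or an abort occurs, the argument monotonically reduces to the capped case: either the shifts stay $\Theta(1)$ or an abort reveals $\hat p_t$, which then discloses $\mathrm{sign}$ of the shift with constant advantage. In both cases the bound is $\Omega(h_x)$.

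I expect the main obstacle to be the interaction between aborts and the transcript. The honest $B$ with $\tau=1/(2d)$ may abort early once $\Delta$ is comparable to $\tau$, which truncates the transcript to a random length $t^\ast<T$. The lower bound cannot then rely on $T$ rounds of Bernoulli observations. I would handle this by case analysis on $\Delta$. For $\Delta<\tau/4$, the concentration used in Theorem~\ref{thm:task_objective} shows that aborts occur with probability at most $1/100$, so the full-length chain argument applies on that good event, losing only a constant. For $\Delta\ge\tau/4$, we have $\Delta^2T\gtrsim T/d^2$. Either the debate runs long enough that the chain argument already saturates at $\Omega(h_x)$, or an abort happens, and then the verifier's view contains $\hat p_t$ together with $r$ oracle samples of $p_t$. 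The difference $\hat p_t-\hat p_t^{\oracle}$ reveals $Q$ with constant bias, so the disclosure is again $\Omega(h_x)$. A secondary technical point is justifying Hellinger tensorization for the adaptive transcript, since $p_t$ depends on past bits. I would handle this through the chain rule for KL together with a reverse-Pinsker step, or via the Bhattacharyya coefficient conditioned on histories.
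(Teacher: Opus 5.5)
\paragraph{Verdict.} Your route is genuinely different from the paper's, and for a non-aborting transcript it works. Your treatment of aborts at small $\Delta$, however, fails as described; the fix is to handle the abort round as a stopping time (ideally via the paper's grid lemma), not to restrict to the no-abort event.

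\paragraph{Comparison.} The paper lower-bounds each round's conditional information by Pinsker, $I(Q;\ervy_t\mid\rvh_{t-1},\mathcal K_{t-1})\ge\frac{2}{\ln 2}\mu_t(1-\mu_t)\Delta^2$. Because of the $\mu_t(1-\mu_t)$ factor, it must stop observing once the posterior leaves $(\beta,1-\beta)$. It then combines $I\ge k_\beta\Delta^2Tq$ with $I\ge h_x-h_2(\beta)-(1-h_2(\beta))q$ and eliminates $q$. Your Bhattacharyya tensorization of the two $Q$-conditional laws, followed by a Jensen--Shannon/Hellinger comparison, needs no posterior control. One repair: replace ``no clipping with high probability'' by the pointwise bound $\clip(s+\Delta)-\clip(s-\Delta)\ge\min\{\Delta,\tfrac12\}$ for all $s\in[0,1]$. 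Tensorization needs a per-round bound at every history, and a high-probability statement only gives an additive error.

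\paragraph{The gap: aborts when $\Delta<\tau/4$.} Restricting to the no-abort event does not lose ``only a constant''. Let $\tilde\va$ be the full transcript of the abort-ignoring execution. The view determines $(\ind_{\mathcal K},\ind_{\mathcal K}\tilde\va)$, and the best this gives is $I(Q;\Pi_V)\ge I(Q;\tilde\va)-\Pr[\neg\mathcal K]$, an additive loss.
\begin{itemize}
\item With $R\in O(d^2\log T)$ and $p_t\in(0.01,0.99)$, the event $|\bar p_t-\hat q_t|\ge\tau$ has probability bounded below independently of $\Delta$.
\item So once $\Delta^2T\lesssim\Pr[\neg\mathcal K]$, which is exactly the linear regime the theorem is about, your bound is vacuous.
\item Conditioning on $\mathcal K$ instead distorts the very round-by-round laws of $\erva_t$ that you tensorize, since survival truncates $\bar p_t$ differently under $Q=\pm1$. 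There is no control of this distortion proportional to $\Delta$.
\end{itemize}

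\paragraph{The missing idea.} Treat the abort round as a stopping time of the verifier-visible process, and use what an abort reveals.
\begin{itemize}
\item The paper's grid lemma shows that when $2R\Delta\notin\mathbb Z$, the report $\hat p_t\in\tfrac1R\mathbb Z\pm\Delta$ determines $Q$ exactly. This is what ``there exists $R$'' in the statement is for.
\item In the paper's argument, aborts enter the chain rule only through $\Pr[N\ge t]$, and as zero-entropy terminal states.
\item In your own terms: run the recursion $\beta(h)=\sum_e\sqrt{P_+(e\mid h)P_-(e\mid h)}\,\beta(h,e)$ on the stopped view. Abort emissions are terminal with zero overlap. The non-abort part is at most the Bhattacharyya coefficient of the $\erva_t$-marginals, hence at most $1-\tfrac12\min\{\Delta,\tfrac12\}^2$.
\item This yields $\mathrm{BC}(P_+,P_-)\le\bigl(1-\tfrac12\min\{\Delta,\tfrac12\}^2\bigr)^T$ uniformly in $\Delta$.
\end{itemize}

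\paragraph{Consequence for your large-$\Delta$ case.} This makes your large-$\Delta$ case split unnecessary. That split rests on the dichotomy ``either the debate runs long, or an abort reveals $Q$ with constant advantage''. As written, it also lacks the step converting the dichotomy into an information bound, which is the role of the paper's elimination of $q$.

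\paragraph{Alternative without the grid lemma.} The same recursion with a uniform per-round abort bound $\epsilon$, $\epsilon T=O(1)$, gives $\delta_{t-1}\ge c\Delta^2+(1-c\Delta^2-\epsilon)\delta_t$ for $\delta=1-\beta$. That does lose only a constant factor. But this argument, not conditioning on the good event, is what your proof needs.
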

The proof is in Appendix~\ref{app:latent}.
The characterization provided by these two theorems is visualized in Figure~\ref{fig:delta-tradeoff}.
\subsection{Discussion}

These results identify a small-perturbation regime, $\Delta < 1/(16d)$, in which task performance remains close to the reference acceptance probability $\alpha^*$, while disclosure grows at least proportionally to $h_x \Delta²T$ before saturation. 
Larger perturbations reduce the Gaussian interval probability governing the task bound at fixed sampling variance, exposing a tension between information transmission and consistency with an honest cross-examiner. 
Thus, oversight of task accuracy alone does not protect the hidden hypothesis.
That is, even accurate transcripts can accumulate substantial disclosure.

Both Theorems~\ref{thm:task_general_bound} and \ref{thm:full-disclosure} assume that the transition probabilities $p_t$ are bounded away from $0$ and $1$ and Theorem~\ref{thm:full-disclosure} assumes that the hidden hypothesis has non-negligible entropy. 
These assumptions isolate the regime of interest in which nearly deterministic steps offer little scope for perturbation-based disclosure and can be omitted from the leakage analysis.
In addition, a hypothesis with negligible entropy already carries little uncertainty before any interaction. 
The latter case makes confidentiality largely vacuous, rather than demonstrating transcript-induced disclosure.

More broadly, these results motivate treating transcripts as outputs with their own information-disclosure properties, rather than merely as evidence supporting a final decision.
Transcripts retained for auditing, evaluation, or downstream reuse can expose information that task-accuracy metrics do not capture. 
Oversight should therefore assess both decision quality and cumulative disclosure, including information revealed through aborts. 
Restricting transcript access, limiting interaction length, or changing reporting precision are natural design directions, although each requires a separate analysis of its effects on correctness and disclosure.

%% file: sections/mitigation.tex
\section{Towards Mitigation}
\label{sec:mitigations}

The vulnerability identified above arises from a mismatch between the task objective and the behavior desired of the debaters.
The task objective rewards producing a winning argument.
Although this supports scalable verification, it does not uniquely determine \emph{how} that argument should be produced.
A debater may therefore achieve high task performance while pursuing latent objectives through its construction of arguments.
Mitigating such behavior requires changing the incentive landscape.

A natural desideratum is \emph{oracle alignment}.
A conditional probability \(\hat p_t\) generated by the ``for''  debater $A$ should coincide with the corresponding oracle conditional probability.
One might attempt to induce this behavior by adding rewards or penalties based on proximity to oracle-generated conditional probabilities.
In general, however, this is insufficient.
Gains from a latent objective may offset the task reward for accurate reporting, leaving the debater indifferent among disclosure strategies or even leaving misreporting profitable.

\subsection{Expanded cross-examination}

This limitation motivates giving the cross-examiner \(B\) an additional way to challenge \(A\)'s reports.
The additional challenge entails issuing an $\textsf{abort drift}$ challenge when the reports exhibit a persistent signed deviation over a trailing window of \(N\ge 1\) rounds.
Define
\(
    J_{t,N}
    =
    \{t-N,\ldots,t-1\}
\)
as the most recent $N$ debate rounds.
For every \(s\in J_{t,N}\), let
\(
    p_s
    =
    \Pr\!\left[
        \ervy_s=1
        \,\middle|\,
        \rvy_{I(s)}=\erva_{I(s)}
    \right]
\)
denote the conditional probability associated with the transcript prefix fixed at round \(s\).
The trailing signed deviation is
\[
    D_{t,N}
    =
    \frac{1}{N}
    \sum_{s\in J_{t,N}}
    \left(\hat p_s- \hat p_s^\oracle\right).
\]
For a constant $c \ge 1$, the verifier accepts an abort drift challenge when
\(
    |D_{t,N}|
    \geq
    \frac{1}{4d\sqrt{N}}.
\)
Thus, a persistent signed deviation of magnitude \(b\) is detectable once \(b\geq 1/(d\sqrt{N})\), even if no individual report is inaccurate enough to support an $\mathsf{abort error}$ challenge.

\begin{theorem}
\label{thm:hhd_drift_oracle}
Let \(L\) be any language decided by a \(K\)-Lipschitz oracle Turing machine \(M\) and \(W\in[T]\) be the sliding-window parameter.
For 
\(
    R \ge 192 d^2 \log(100T),
\)
\(
    \Delta \ge 1/(d\sqrt{W}),
\)
\(
    \tau_1 = 1/(2d),
\)
and
\(
    \tau_2 = 1/(2d\sqrt{W}),
\)
let $\sigma_A^{(1)} \sim (R, 0), \sigma_A^{(2)} \sim (R, \Delta), $ and $\sigma_B \sim (R, \tau_1, \tau_2)$. 
Then, for any $x \in L$, on the protocol of Figure~\ref{fig:error-drift-stochastic-oracle-debate},
\[
    \Phi_A\left(x, \sigma_A^{(1)} \circ \sigma_B\right) \ge \frac{3}{5} 
    \quad \text{and} \quad 
    \Phi_A\left(x, \sigma_A^{(2)} \circ \sigma_B\right) \le \frac{1}{150}
\]
\end{theorem}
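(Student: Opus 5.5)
We handle the two claims separately. In both, the main tools are a union bound over the \(T\) rounds and Hoeffding's inequality for the empirical means. The relevant means are \(\bar p_t^{(R)}\) (from \(A\)), \(\hat q_t\) (from \(B\)), and the verifier's oracle estimate \(\hat p_t^{\oracle}\).

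\textbf{Honest prover.} Take \(\sigma_A^{(1)}\sim(R,0)\). We first define a good event \(G\): for every round \(t\), \(|\bar p_t^{(R)}-p_t|\) and \(|\hat q_t-p_t|\) are both at most \(1/(8d)\). Since \(R\ge 192d^2\log(100T)\), Hoeffding gives each deviation probability at most \(2\exp(-2R/(64d^2))\). Summing over the \(2T\) estimates, \(\Pr[G^c]\) is a small constant.

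On \(G\), the following hold.
\begin{itemize}
\item The error gap satisfies \(|\hat p_t-\hat q_t|\le 1/(4d)<\tau_1\), so \(B\) never issues an error abort.
\item We need \(B\) never to issue a drift abort. The drift statistic \(B\) computes is \(\frac1W\sum_{s}(\hat p_s-\hat q_s)\), an average of \(W\) independent, mean-zero, bounded terms. Each term has variance \(O(1/R)\).
\item Hoeffding applied to this average gives deviation \(\tau_2=1/(2d\sqrt W)\) with probability at most \(\exp(-\Omega(R/d^2))\), because the \(\sqrt W\) factors cancel. A union bound over the \(T\) windows then shows drift aborts are rare.
\end{itemize}
With no aborts, the run of the protocol is exactly the honest run of Lemma~\ref{lem:debate-task-completeness} and Theorem~\ref{thm:task_objective} with \(\Delta=0\). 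Hence \(\Phi_A\ge(\alpha^*-1/100)\cdot 299/300\) minus the abort probability. Using \(x\in L\), so \(\alpha^*>2/3\), this exceeds \(3/5\) after accounting for the constants. If \(B\) does abort spuriously, we also need the verifier to rule for \(A\). This follows from the same concentration bounds: \(V\)'s tests use thresholds \(1/(4d)\) and \(1/(4d\sqrt W)\), and an honest \(A\) lies within half of each with high probability.

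\textbf{Biased prover.} Take \(\sigma_A^{(2)}\sim(R,\Delta)\) with \(\Delta\ge 1/(d\sqrt W)\). The bias \(\Delta Q\) has the same sign in every round. Ignoring clipping for the moment, the first trailing window \(J_{W+1,W}\) then has \(B\)'s drift statistic concentrated at \(\Delta Q\), with fluctuations \(O(1/(d\sqrt{\log(100T)}\,\sqrt W))\), much smaller than \(\Delta/2\).

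So \(|D|\ge\Delta-1/(4d\sqrt W)\ge 3/(4d\sqrt W)>\tau_2\), and \(B\) aborts with high probability. Under the verifier's own oracle estimates, the same computation gives \(|D_{t,N}|\ge 1/(4d\sqrt W)\), so \(V\) rules against \(A\). If \(\Delta\ge\tau_1+1/(8d)\), an error abort may instead fire earlier, and \(V\)'s per-round test then rejects \(A\) as in Figure~\ref{fig:stochastic-oracle-debate}.

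Either way, \(A\) wins only on a failure event. This failure event is the union of the Hoeffding tail events for \(B\), for \(V\), and for \(A\)'s own estimate. Its total probability is bounded by a union bound over \(T\) rounds with \(\log(100T)\) in \(R\), which is how we aim to reach \(1/150\).

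\textbf{Main obstacles.} We expect two difficulties.
\begin{enumerate}
\item \emph{Clipping.} Clipping can shrink the effective bias when \(p_s\) is near \(0\) or \(1\). We would handle this by assuming, as in Theorem~\ref{thm:task_general_bound}, that \(p_t\) is bounded away from the endpoints. Alternatively, we would argue that clipping only reduces the bias in the direction of \(Q\) and analyse the effective per-round bias \(\min(\Delta,\cdot)\).
\item \emph{Constant bookkeeping.} The honest \(3/5\) bound needs careful tracking of constants. We would first try to make all the failure probabilities sum below \(1/15\) of slack by using \(\log(100T)\) in \(R\).
\end{enumerate}
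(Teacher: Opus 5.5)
\textbf{Verdict.} Your proposal is correct. The biased-prover half follows the paper's Lemma~\ref{lem:poor_strategy}, but the honest-prover half takes a genuinely different route.

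\textbf{Biased prover.} You use the same ingredients as the paper: a good event on $A$'s and $B$'s per-round and window errors, and the first full window forcing a drift challenge because $W\Delta-2\sqrt W/(8d)\ge 3\sqrt W/(4d)>\sqrt W/(2d)$. You then observe that on good audit samples every challenge, of either type, ends with $V$ outputting $0$.

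\textbf{Honest prover.} You reduce to Theorem~\ref{thm:task_objective} with $\Delta=0$ on the base protocol. You couple the two protocols until the first drift challenge and pay only the drift false-alarm probability, which is $O(Te^{-R/(4d^2)})$ once the $\sqrt W$ factors cancel. Since $(2/3-1/100)\cdot 299/300\approx 0.654$, the slack to $3/5$ is ample.

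The paper (Lemma~\ref{lem:task-correctness}) never uses the Lipschitz machinery. The honest report is an unbiased empirical mean and $a_t$ is drawn as $\Bern(\hat p_t)$ via the uniform $z_t$. Hence the simulated transcript has exactly the law of $M^\oracle(x)$, and the baseline is $\Pr[M^\oracle(x)=1]>2/3$ with no $K/d$ loss. The only deductions are the bad-estimation event and audit failures.

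The two routes buy different things:
\begin{itemize}
\item The paper's argument is self-contained. It also proves the stronger statement that honest $A$ keeps $3/5$ against every cross-examiner strategy, which is the natural completeness guarantee for the augmented protocol.
\item Yours is more modular. It inherits the constants of Theorem~\ref{thm:task_objective} and Fact~\ref{fact:lipschitz-oracle-stability}. It covers only the prescribed threshold $B$, unless you add the spurious-abort argument you sketch, which is exactly the paper's audit argument.
\end{itemize}

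\textbf{Technical corrections.}
\begin{itemize}
\item The window terms $\hat p_s-\hat q_s$ are not independent, because $p_s$ depends on the realized prefix. They are only conditionally centred. You therefore need the conditional (martingale) Hoeffding bound applied to the $2RW$ individual samples of range $1/R$, as in Lemma~\ref{lem:conditional-concentration-adaptive-means}.
\item Your case split on $\Delta\ge\tau_1+1/(8d)$ is unnecessary. Error aborts can also fire for smaller $\Delta$ through sampling noise. But under good estimates any error abort forces $|\hat p_t-p_t|>3/(8d)$, and hence rejection.
\item You are right that clipping needs an assumption the theorem does not state. For example, if $p_t=1$ at every step and $Q=1$, clipping erases the bias, no challenge fires, and $A$ wins. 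The paper simply invokes ``the assumed absence of clipping,'' which does not follow from $p_t\in(0.01,0.99)$ once $\Delta$ is large.
\item Your effective-bias fix does work. Take $p_t\in(0.01,0.99)$ and $d\ge 150$. On the good event, each clipped bias $\hat p_s-\bar p_s$ has sign $Q$ and magnitude at least $\min\{\Delta,\,0.01-1/(8d)\}\ge 1/(d\sqrt W)$. The window computations for $B$ and $V$ then go through unchanged.
\end{itemize}
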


This result shows how a sliding-window check can strengthen oversight against persistent perturbations. 
By testing discrepancies across a window of $W$ rounds, the cross-examiner can detect shifts at scale $1/(d\sqrt{W})$, rather than relying solely on discrepancies large enough to be detected in an individual round. 
The resulting guarantee separates unperturbed reporting, which retains task success probability at least $3/5$, from the specified perturbed strategy, whose task success probability is at most $1/150$.
Increasing $W$ therefore narrows the range of persistent perturbations compatible with successful task performance. 
This provides a mechanism for curtailing the disclosure strategy studied above.

Stronger oversight requires more work from the verifier to audit a larger window, while leaving the per-round sampling work of $A$ and $B$ unchanged. 
Thus, this improvement comes with a computational trade-off.

%% file: sections/related.tex
\section{Related Work}

A major strand of oversight research formalises scalability, closely related to the requirement that oversight protocols \emph{relativise} \cite{irving2025need}. 
Theoretical work shows that debate can enable limited verifiers to check complex claims with substantially less effort than solving them independently \cite{brown2023scalable,irving2018ai}, with extensions studying capability asymmetries and the human input required for reliable oversight \cite{brown2026debate,engels2025scaling}. 
Prover-estimator debate addresses obfuscated arguments, ensuring under stability assumptions that honest debaters can win with computational resources comparable to those of dishonest opponents \cite{brown2025avoiding}. 
These results focus on making correct adjudication feasible, whereas we study latent objectives pursued within task-admissible interactions.

Debate-based alignment safety cases connect success in debate to honesty, subject to additional training and deployment assumptions \cite{buhl2025alignment}.
Our analysis highlights a complementary concern, namely that honest, task-admissible behaviour need not preclude undesirable information disclosure through the transcript. 
Mechanism design for alignment and control studies how to incentivize honesty and obedience when agents’ preferences and capabilities are unknown \cite{bergemann2026mechanism}. 
This perspective offers a potential route to mitigating the strategic freedom identified by our framework, complementing our analysis of transcript-level objectives.

%% file: sections/toy_example.tex
\section{Toy Example}
\label{sec:toy_example}

This example illustrates that, for a probabilistic computation, information may be present not only in whether the machine computes the correct answer, but also in which valid answer it finds. In particular, a probabilistic machine induces a distribution over execution transcripts, and this distribution may depend on \emph{properties of the input} even when the algorithm is simply solving its prescribed computational task.

Consider the standard unstructured search problem. 
The input is an array \(x\in\{0,1\}^N\) containing at least one marked entry, and the goal is to output an index \(i\in[N]\) such that \(x_i=1\).

Partition the array into two equal-sized branches, labeled \(0\) and \(1\).
Thus, each branch contains \(N/2\) entries. 
Let the hidden hypothesis \(Q\in\{0,1\}\) determine the distribution of marked entries between the two branches:
\[
\begin{array}{c|cc}
    & \text{branch }0 & \text{branch }1 \\
    \hline
    Q=0 & 3m & m \\
    Q=1 & m & 3m
\end{array}
\]
where \(m\ge 1\) and \(3m\le N/2\). 
Under either hypothesis, the total number of marked entries is \(4m\).

Now consider the following fixed probabilistic machine \(M\). 
The machine samples an index \(i\in[N]\) uniformly at random, reads \(x_i\), and repeats if \(x_i=0\). 
If \(x_i=1\), it outputs \(i\) and halts.
This is a Las Vegas algorithm for unstructured search.
The algorithm always outputs a correct index and halts with probability one.
Moreover, since there are \(4m\) marked entries under both hypotheses, the success probability on each probe is
\(
    \frac{4m}{N}.
\)
Therefore the number of probes \(K\) before halting has the same distribution under \(Q=0\) and \(Q=1\), namely, for \(k \ge 1\),
\[
    \Pr[K=k\mid Q]
    =
    \left(1-\frac{4m}{N}\right)^{k-1}
    \frac{4m}{N},
\]
for both values of \(Q\). 
In particular,
\[
    \mathbb E[K\mid Q=0]
    =
    \mathbb E[K\mid Q=1]
    =
    \frac{N}{4m}.
\]

Nevertheless, the returned index carries information about \(Q\).
Let \(X\in\{0,1\}\) denote the branch containing the returned index.
Since the machine samples uniformly until it finds a marked entry, the returned index is uniformly distributed over the marked entries.
Hence
\[
    \Pr[X=1\mid Q=0]
    =
    \frac{m}{3m+m}
    =
    \frac14,
\]
whereas
\[
    \Pr[X=1\mid Q=1]
    =
    \frac{3m}{m+3m}
    =
    \frac34.
\]
Thus, the algorithm does not append any additional message for disclosure. 
It simply returns a solution to the search problem, and the valid solution it finds reveals information about the input.

If \(A\) denotes the execution transcript, then the transcript includes the returned index and therefore determines \(X\). 
Consequently,
\(
    I(Q;A)\ge I(Q;X)>0.
\)

In addition, a stronger phenomenon occurs when the machine has discretion over its sampling policy. 
By perturbing the distribution used to select the next probe, while leaving the stopping rule and correctness condition unchanged, the machine can create a communication channel through the machine computation.

For example, instead of choosing every probe uniformly across the array, the machine \(M\) proceeds as follows:
\begin{enumerate}
    \item Select branch \(1\) with probability \(p\) and branch \(0\) with probability \(1-p\).
    \item Select an index uniformly within the chosen branch.
    \item Read its value.
    \item If the selected index is marked, output its index; otherwise repeat.
\end{enumerate}
Standard Las Vegas corresponds to
\(
    p=\frac12.
\)
Under the perturbation
\(
    p=\frac12+\delta,
    |\delta|<\frac12,
\)
every returned answer remains correct. 
Since every index still has positive probability of being sampled, the algorithm still terminates almost surely on every promised input.
Crucially, the perturbation affects which input location the machine actually reads next. 
The communication takes place through the machine's computation transcript.

In conclusion, the perturbation can increase information communicated by making the machine’s sampling choices depend on a hidden bit $Q$; it slightly favors branch $1$ when $Q = 1$ and branch $0$ when $Q = 0$. 
The branch of the returned witness becomes correlated with $Q$, allowing an observer to infer the hidden bit with better-than-chance accuracy. 
Thus, relative to an unbiased policy independent of $Q$, the machine communicates additional information entirely through its legitimate computational choices, without appending a message or sacrificing task performance.

%% file: sections/proof_task_objective.tex
\section{Task Objective Proofs}
\label{app:task}

\subsection{Bounding the Task Objective}

We now demonstrate how to bound the task objective.
This is achieved through a sequence of steps that focus on different aspects of task performance.
For each round \(t\), define the good-estimation event
\begin{equation*}
\mathcal{E}_t
=
\left\{
    \left|\bar p_t-p_t\right|<\frac{1}{16d}
\right\}
\cap
\left\{
    \left|\hat q_t-p_t\right|<\frac{1}{8d}
\right\},
\end{equation*}
where $\bar p_t$ is the sample estimate obtained by $A$ prior to adding the perturbation.
Let $N$ denote the protocol stopping time.
Define the uniform good-estimation event
\[
    \mathcal{E}
    =   
    \bigcap_{t=1}^{N}\mathcal{E}_t \cap \{|\hat p_N^{\oracle}-p_N| <\frac{1}{8d}\}
\]
and the no-abort event by
\(
    \mathcal{K}
    =
    \{\text{the protocol does not abort}\}.    
\)
So that $\mathcal{E}$ is well-defined, we set $\hat{p}_T^\oracle = p_T$ if no abort occurs.
The good-estimation event states allows us to analyse the protocol under well-behaved sample estimates.
We first focus on the good-estimation event under small perturbations $\Delta$, which implies that the debate does not abort.
That is, if $\hat q_t$ and $\hat p_t^{\oracle}$ are ``good'' estimates, as defined above, then an abort under honest strategy $\sigma_B^{\honest}$ implies 
\begin{align}
    |\hat p_t - \hat q_t| 
    \geq 
    \frac{1}{2d} 
    \implies 
    |\hat p_t - p_t| 
    \geq 
    \frac{3}{8d} 
    \implies
    |\hat p_t^{\oracle} - \hat p_t| \geq \frac{1}{4d},
    \label{eqn:good-est-abort}
\end{align}
where the last statement contradicts $\mathcal{E}$ for perturbation $\Delta \le 1/(16d)$.
Therefore, an abort is not possible under event $\mathcal{E}$.
To bound the task objective on small perturbations, we decompose $V$'s acceptance probability as follows.
\begin{align*}
\Pr[V\to 1\mid \Delta]
&=
\Pr[V\to 1\mid\mathcal{E}, \Delta]\,\Pr[\mathcal{E}\mid \Delta]
+
\Pr[V\to 1\mid\neg\mathcal{E}, \Delta]\,\Pr[\neg\mathcal{E}\mid \Delta]
\end{align*}
Thus, the task objective satisfies,
\begin{align}
    \Pr[V\to 1\mid\mathcal{E}, \Delta] \Pr[\mathcal{E}]  
    \le \Pr[V\to 1, \Delta] 
    \le \Pr[V\to 1\mid\mathcal{E}, \Delta]
    +\Pr[\neg\mathcal{E}], \label{eq:task-objective-bound} 
\end{align}
where, for perturbation size $\Delta < 1/(16d)$, $\{\mathcal{E}, \Delta\} \subseteq \mathcal{K}$.

For larger perturbations, we focus on the outcome of an abort under good estimation.
By \eqref{eqn:good-est-abort}, an abort under good estimation implies that $\hat p_t$ exceeds the verifier's rejection threshold (which can occur for large enough $\Delta)$.
Thus, acceptance is impossible on \(\mathcal{E}\cap\neg\mathcal{K}\), and hence
\[
    \Pr\!\left[V\to 1,\mathcal{E},\neg\mathcal{K}\right]
    =
    0.
\]
We conclude that acceptance can occur only on \(\mathcal{E}\cap\mathcal{K}\) or on the failure event \(\neg\mathcal{E}\). 
Therefore,
\begin{align*}
    \Pr[V\to 1]
    &=
    \Pr[V\to 1,\mathcal{E},\mathcal{K}]
    +
    \Pr[V\to 1,\mathcal{E},\neg\mathcal{K}]
    +
    \Pr[V\to 1,\neg\mathcal{E}] \\
    &\le
    \Pr[\mathcal{E}\cap\mathcal{K}]
    +
    \Pr[\neg \mathcal{E}].
\end{align*}

If \(\Pr[\mathcal{E}]>0\), this can equivalently be written as
\begin{align}
    \Pr[V\to 1]
    &\le
    \Pr[\mathcal{E}\cap\mathcal{K}]
    +
    \Pr[\neg \mathcal{E}] \nonumber \\
    &=
    \Pr[\mathcal{E}]
    \Pr[\mathcal{K}\mid\mathcal{E}]
    +
    \Pr[\neg \mathcal{E}] \nonumber \\
    &=
    \Pr[ \mathcal{E}]
    \Pr[\mathcal{K}\mid\mathcal{E}]
    +
    \Pr[\neg \mathcal{E}] \nonumber\\
    &\le
    \Pr[\mathcal{K}\mid\mathcal{E}]
    +
    \Pr[\neg \mathcal{E}]. \label{eqn:task_objective_abort}
\end{align}

The following subsections will bound each term in \eqref{eq:task-objective-bound} and \eqref{eqn:task_objective_abort} to provide a characterization of the expected task objective as $\Delta$ grows.
We begin with the good-estimation event.

\subsection{Likelihood of good-estimation}
We now provide a bound for the good-estimation event.
We require the following version of the Chernoff bound.
\begin{lemma}\label{lem:chernoff}
    Let $X_1, \ldots, X_R$ be independent Bernoulli random variables, each taking value $1$ with probability $p$ and $0$ with probability $(1-p)$.
    Let $\bar \mu = \tfrac{1}{R}\sum_{i=1}^R X_i$ be the empirical mean of the random variables.
    Then
    \[
        \Pr\left[|\bar \mu - p| \geq s < 2\exp\left(\frac{-s^2R}{3} \right)\right]
    \]
\end{lemma}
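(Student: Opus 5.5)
The plan is to prove a bound stronger than the one stated, namely the Hoeffding bound $\Pr[|\bar\mu-p|\ge s]\le 2\exp(-2s^2R)$, and then weaken it. For $s>0$ we have $2s^2R>s^2R/3$, so $2\exp(-2s^2R)<2\exp(-s^2R/3)$, which gives the strict inequality claimed in Lemma~\ref{lem:chernoff}. The degenerate case $s=0$ is immediate, since the probability is at most $1<2$. (I read the statement as $\Pr[|\bar\mu-p|\ge s]<2\exp(-s^2R/3)$; the bracket placement in the display is a typographical slip.)

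The argument for the upper tail is the exponential-moment method. Fix $\lambda>0$ and write $Y_i=X_i-p\in[-p,1-p]$, so $\mathbb{E}[Y_i]=0$. By Markov's inequality applied to $e^{\lambda\sum_i Y_i}$, and using independence of the $X_i$,
\[
\Pr\Bigl[\textstyle\sum_{i=1}^R Y_i\ge sR\Bigr]\le e^{-\lambda sR}\prod_{i=1}^R\mathbb{E}\bigl[e^{\lambda Y_i}\bigr].
\]
Next, Hoeffding's lemma gives $\mathbb{E}[e^{\lambda Y_i}]\le e^{\lambda^2/8}$, because $Y_i$ is mean zero and supported on an interval of length $1$. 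The upper tail is therefore at most $\exp(-\lambda sR+R\lambda^2/8)$. Choosing $\lambda=4s$ yields $\exp(-2s^2R)$. The lower tail follows by the same argument applied to $-Y_i$, and a union bound over the two tails gives the factor $2$.

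The only step with content is Hoeffding's lemma, and it is the main obstacle. For Bernoulli variables it can be checked directly. Let
\[
\psi(\lambda)=\log\bigl(1-p+pe^{\lambda}\bigr)-\lambda p .
\]
Then $\psi(0)=\psi'(0)=0$. Moreover, $\psi''(\lambda)=q(1-q)$, where $q=pe^\lambda/(1-p+pe^\lambda)\in[0,1]$, so $\psi''(\lambda)\le 1/4$. Taylor's theorem with remainder then gives $\psi(\lambda)\le\lambda^2/8$.

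An alternative route is the multiplicative Chernoff bound with relative deviation $\delta=s/p$, which also produces the constant $1/3$. However, it needs case analysis when $\delta>1$ and when handling the lower tail, so I would use the Hoeffding route.
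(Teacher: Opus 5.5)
Your argument is correct. The Hoeffding bound $\Pr[|\bar\mu-p|\ge s]\le 2\exp(-2s^2R)$ is strictly below $2\exp(-s^2R/3)$ for $s>0$, and the case $s=0$ is trivial. Your direct check of Hoeffding's lemma through $\psi''(\lambda)=q(1-q)\le 1/4$ is sound. The statement is only meaningful for $s\ge 0$, which is the range you cover.

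The paper gives no proof of this lemma; it quotes it as a standard Chernoff bound. The constant $1/3$ suggests the multiplicative form $\Pr[|\bar\mu-p|\ge \delta p]\le 2\exp(-\delta^2 pR/3)$ with $\delta=s/p$, which, as you say, needs separate handling when $s>p$. Your exponential-moment route is instead the argument the paper itself writes out later, in conditional form, in Lemma~\ref{lem:conditional-concentration-adaptive-means} (Hoeffding's lemma plus Markov at $\lambda=4Ru$).

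Your route also buys something concrete. When the paper applies this lemma in Lemma~\ref{lem:good_est}, it uses $2\exp(-R/(192d^2))$ at $s=1/(16d)$ and $2\exp(-r/(48d^2))$ at $s=1/(8d)$. These do not follow from the stated constant $1/3$, which yields only $R/(768d^2)$ and $r/(192d^2)$. They do follow from your stronger exponent $2s^2R$, which gives $R/(128d^2)$ and $r/(32d^2)$. So proving the Hoeffding form, rather than the lemma as literally stated, is the better choice.
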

\begin{lemma}\label{lem:good_est}
Suppose that
\(
    R\geq 192d^{2}\log(100T).
\)
Then every strategy $\sigma = \sigma_A \circ \sigma_B$ parameterized by \(\sigma_A \sim (R,\square)\) and \(\sigma_B \sim (R,\square)\) satisfies
    \[
        \Pr[\neg \mathcal{E}] \leq \frac{1}{300}
    \] 
\end{lemma}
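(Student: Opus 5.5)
The plan is a union bound over the at most $T+1$ estimation events that make up $\mathcal{E}$, with each event controlled by the Chernoff bound of Lemma~\ref{lem:chernoff}. Since $N\le T$, the complement satisfies
\[
    \neg\mathcal{E}\subseteq
    \bigcup_{t=1}^{T}\left\{|\bar p_t-p_t|\ge \tfrac{1}{16d}\right\}
    \cup
    \bigcup_{t=1}^{T}\left\{|\hat q_t-p_t|\ge \tfrac{1}{8d}\right\}
    \cup
    \left\{|\hat p_N^{\oracle}-p_N|\ge \tfrac{1}{8d}\right\}.
\]
Extending the union to all $t\in[T]$, rather than stopping at the random index $N$, removes any dependence on the stopping time. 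The bound then holds for every strategy parameterised by $(R,\square)$, because the perturbation $\Delta$ and threshold $\tau$ do not enter these events at all. For each round $t$, conditional on the realised prefix $\va_{I(t)}$, the samples drawn by $A$ (respectively $B$) are $R$ i.i.d.\ $\Bern(p_t)$ variables. Lemma~\ref{lem:chernoff} therefore applies conditionally, and averaging over the prefix gives the unconditional bound.

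The per-round calculation goes as follows. For $A$'s estimate with $s=1/(16d)$ and $R\ge 192d^2\log(100T)$ we get $s^2R/3\ge 4\log(100T)$. Since logarithms are base two, the tail probability is at most
\[
2\exp(-4\log(100T))\le 2(100T)^{-4\ln 2\cdot\log e}\le 2(100T)^{-2}.
\]
Here it suffices that $\exp(-c\log y)=y^{-c/\ln 2 \cdot \ln 2}$; concretely, $\exp(-4\log_2 y)=y^{-4/\ln 2}\le y^{-2}$. For $B$'s estimate with $s=1/(8d)$ the exponent is four times larger, so the bound is even smaller. The verifier's estimate $\hat p_N^{\oracle}$ uses $r=192d^2\log 100$ oracle samples. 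With $s=1/(8d)$ this gives $s^2r/3=\log 100$, so that term is at most $2\exp(-\log_2 100)=2\cdot 100^{-1/\ln 2}\approx 2\cdot 100^{-1.44}$, which is about $2.6\times 10^{-3}$.

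Summing the pieces gives
\[
    \Pr[\neg\mathcal{E}]\le T\cdot 2(100T)^{-2}+T\cdot 2(100T)^{-8}+2\cdot 100^{-1.44},
\]
and the first two terms are $O(1/(5000T))$. The main obstacle is the verifier term: with only $r=192d^2\log 100$ samples and the $1/(8d)$ tolerance, it is numerically close to the target $1/300\approx 3.3\times10^{-3}$. If the constants do not close, I would first exploit the stopping convention. On the no-abort branch, $\hat p_T^{\oracle}=p_T$ by definition, so that event costs nothing, and only the abort branch needs Chernoff. Failing that, I would use the sharper Hoeffding bound $2\exp(-2s^2 r)$ on the verifier's samples, which gives $2\cdot 100^{-16/(3\cdot 64)\cdot\ldots}$ with ample slack. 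The final step is to add the two contributions and check that they total at most $1/300$.
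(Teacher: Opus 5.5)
Your skeleton matches the paper's: a union bound over the $T$ rounds for $A$'s and $B$'s estimates, plus a single term for the verifier's estimate, summed to $1/300$. The gap is in the per-round arithmetic for $A$, and that step carries the proof. With $s=1/(16d)$ and $R\ge 192d^2\log(100T)$,
\[
\frac{s^2R}{3}=\frac{R}{768d^2}\ge \tfrac14\log(100T),
\]
not $4\log(100T)$. Lemma~\ref{lem:chernoff} then gives only $2(100T)^{-1/(4\ln 2)}\approx 2(100T)^{-0.36}$ per round. That is about $0.38$ already at $T=1$, and the union over $T$ rounds grows with $T$.

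The same factor-$16$ slip affects $B$. Its exponent is $\log(100T)$, so its union-bound contribution is $2T(100T)^{-1/\ln 2}\approx 2.6\times 10^{-3}$ at $T=1$. Your verifier term of about $2.6\times10^{-3}$ is computed correctly, and together the two already exceed $1/300$. So with Lemma~\ref{lem:chernoff} as the tail bound the constants cannot close. The bottleneck is $A$'s $1/(16d)$ tolerance, not the verifier, and applying Hoeffding to the verifier's samples alone does not help.

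The repair is to use the Hoeffding tail $2\exp(-2Rs^2)$ for all three estimates. Part 2 of Lemma~\ref{lem:conditional-concentration-adaptive-means} gives it conditionally on the pre-round history, which is exactly what your conditioning step needs. Then $A$ contributes at most $2T(100T)^{-3/(2\ln 2)}\le 2\cdot 100^{-2.16}\approx 10^{-4}$. $B$ and the verifier contribute $O(100^{-8})$ each, so the total is well below $1/300$.

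Do not rely on the paper's display to fix the constants. Its exponents $R/(192d^2)$, $R/(96d^2)$ and $r/(48d^2)$ do not follow from Lemma~\ref{lem:chernoff} at these tolerances. Even taken at face value, $4T\exp(-R/(192d^2))$ is not at most $1/600$.

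Also justify extending the union beyond the stopping index $N$. The event $\{t\le N\}$ is determined by the history before round $t$, so $\Pr[t\le N,\ \text{round } t \text{ fails}]$ is bounded by the conditional tail at that history.
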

\begin{proof}
If the estimates \(\bar{p}_{t}\), \(\hat g_t\), and \(\hat p_t^{\oracle}\) are the means of \(R\) independent Bernoulli samples with probability \(p_t\), Lemma~\ref{lem:chernoff} and a union bound gives
\[
    \Pr\!\left[
        \lvert \bar{p}_{t}-p_{t}\rvert
        \geq \frac{1}{16d}
        \land 
        \lvert \hat{q}_{t}-p_{t}\rvert
        \geq \frac{1}{8d}
    \right]
    \leq
    2\exp\!\left(-\frac{R}{192d^{2}}\right)+ 2\exp\!\left(-\frac{R}{96d^{2}}\right).
\]
Therefore, applying a union bound over all $T$ rounds yields
\begin{align*}
    \Pr[\neg \mathcal{E}]
    &\leq 4T\exp\!\left(-\frac{R}{192d^{2}}\right)
    \leq \frac{T}{600T}
    =\frac{1}{600},
\end{align*}
by the choice of $R$.
Lastly, for $V$ with $r=192d^2\log100$ samples,
\[
    \Pr\!\left[
        \lvert \hat{p}_{t}^{\oracle}-p_{t}\rvert
        \geq \frac{1}{8d}\right]
    \leq 2\exp\left( -\frac{r}{48d^2} \right)
    \leq \frac{4}{100^2}
    \leq \frac{1}{600}
\]
Thus, combining, we observe
\begin{align*}
    \Pr[\neg \mathcal{E}] 
    &\leq \Pr\!\left[
        \lvert \bar{p}_{t}-p_{t}\rvert
        \geq \frac{1}{4d}
        \land 
        \lvert \hat{q}_{t}-p_{t}\rvert
        \geq \frac{1}{4d}
    \right]
    + \Pr\!\left[
        \lvert \hat{p}_{t}^{\oracle}-p_{t}\rvert
        \geq \frac{1}{4d}\right] \\
    &\leq \frac{1}{600} + \frac{1}{600} = \frac{1}{300}.
\end{align*}
\end{proof}

\subsection{Stability of Task Success}

We now establish bounds on the task objective under the good estimation event.
We record a direct consequence of the Lipschitz property of the oracle machine.
\begin{fact}
\label{fact:lipschitz-oracle-stability}
Let \(M\) be \(K\)-Lipschitz oracle machine and denote
\(
    \Pr[M^{O}(x)=1] = \alpha^\star.
\)
Suppose that any non-aborting simulation of \(M\) which, at every history \(\rvh\), replaces the reference transition probability \(p(\rvh)\) by a possibly randomized, history-dependent probability \(\tilde p(\rvh)\) satisfying
\[
    |\tilde p(\rvh)-p(\rvh)| \le \frac{1}{d}
\]
has acceptance probability within \(K/d\) of \(\alpha^\star\).
\end{fact}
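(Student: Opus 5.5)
The plan is to read Fact~\ref{fact:lipschitz-oracle-stability} as an unpacking of the $K$-Lipschitz hypothesis on $M$. That hypothesis is taken from \citet{brown2023scalable}, and the work is to check that the definition covers the perturbations described in the statement. I would first restate the definition in transcript form. $M$ is $K$-Lipschitz if, for every $\varepsilon\ge 0$ and every modified execution that replaces each reference transition probability $p(\rvh)=\Pr[\ervy_t=1\mid \rvy_{I(t)}=\va_{I(t)}]$ by a history-dependent $\tilde p(\rvh)$ with $|\tilde p(\rvh)-p(\rvh)|\le\varepsilon$ at every history, the output probability changes by at most $K\varepsilon$. The intended instantiation is $\varepsilon=1/d$, which gives $|\Pr[\tilde M(x)=1]-\alpha^\star|\le K/d$. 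Since $d=\lceil 150K\rceil$, this bound is at most $1/150$.

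The steps, in order, are as follows.
\begin{enumerate}
\item Handle deterministic history-dependent perturbations. A non-aborting simulation of the protocol samples $\ervy_t=1$ with probability $\hat p_t$ via the shared coin $z_t$. This is exactly an execution of $M$ with transition rule $\tilde p(\rvh)=\hat p_t$. If $\tilde p$ is a fixed function of the history, the definition with $\varepsilon=1/d$ applies verbatim.
\item Handle randomized perturbations. Let $\omega$ denote the provers' auxiliary randomness, for example their sample estimates and the hidden $Q$. For each fixed $\omega$, the rule $\tilde p_\omega(\rvh)$ is a deterministic history-dependent perturbation within $1/d$, so its acceptance probability lies in $[\alpha^\star-K/d,\alpha^\star+K/d]$. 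The overall acceptance probability is $\mathbb{E}_\omega[\Pr[\tilde M_\omega(x)=1]]$. This is a convex combination of values in that interval, so it also lies in the interval.
\item Handle the case where fresh randomness is drawn round by round rather than all at once. Here I would use a coupling: pre-sample all such randomness into $\omega$ at the start. This leaves the joint law of the transcript unchanged, so step~2 covers this case as well.
\end{enumerate}

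The main obstacle is definitional rather than computational. If $K$-Lipschitz were defined only for perturbations of the oracle $\oracle$ itself, that is, query-indexed and history-independent, then extending it to arbitrary history-dependent transition rules would need a separate argument. A naive hybrid argument over the $T$ steps loses a factor of $T$, giving $T/d$ instead of $K/d$. I would therefore first confirm that the definition in \citet{brown2023scalable} quantifies over history-dependent $\tilde p$, which is how it is used in their completeness proof of Lemma~\ref{lem:debate-task-completeness}. If it does, the Fact follows from steps~1--3 without a hybrid argument. If it does not, I would state the Fact as an assumption on $M$ rather than try to derive it.
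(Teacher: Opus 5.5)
Your proposal is correct and matches the paper, which offers no derivation at all. It records the Fact as the $K$-Lipschitz hypothesis at perturbation size $1/d$, phrased as a supposition, so your fallback of stating it as an assumption on $M$ is in effect what the paper does, and your mixture argument over pre-sampled prover randomness correctly fills in the ``possibly randomized'' clause the paper leaves implicit.

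One sharpening of your caveat: the Fact quantifies over every history, including deterministic steps with $p(\rvh)\in\{0,1\}$, so it is strictly stronger than an oracle-indexed Lipschitz condition and cannot be derived from one by any argument. For example, a query-free machine that copies a bit $T\gg d$ times is oracle-Lipschitz for every $K$, yet flipping each copy with probability $1/d$ moves its output probability by nearly $1/2$. The fallback is therefore forced rather than optional.
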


\begin{lemma}\label{lem:task-stability}
Let \(M\) be \(K\)-Lipschitz oracle machine and denote
\(
    \Pr[M^{O}(x)=1] = \alpha^\star.
\) 
Fix a strategy $\sigma_A$ with perturbation magnitude
\(
    \Delta \le \frac{3}{4d}.
\)
Then, on the protocol of Figure~\ref{fig:stochastic-oracle-debate},
\[
    \Pr[V\to 1]
    \le
    \alpha^\star+\frac{L}{d}+\frac{1}{300}.
\]
If, in addition, for value $\delta \ge 0$, \(\{\mathcal{E},\Delta=\delta\}\) implies that the protocol does not abort, then
\[
    \left|\Pr[V\to 1\mid \mathcal{E},\Delta =\delta]-\alpha^\star\right|
    \le
    \frac{L}{d}+\frac{1}{300}.
\]
\end{lemma}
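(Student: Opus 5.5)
The plan is to realise the non-aborting debate as a perturbed simulation of $M$ and then invoke Fact~\ref{fact:lipschitz-oracle-stability}; I read the constant $L$ in the statement as the Lipschitz constant $K$. The proof has three steps.

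\textbf{Step 1: the coin-flip step is an honest simulation of the reports.} In each round, $z_t=(z_t^A+z_t^B)\bmod 1$ is uniform on $[0,1]$ and independent of the history. Hence, conditional on the history and on $\hat p_t$, the bit $a_t$ is distributed as $\Bern(\hat p_t)$.

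\textbf{Step 2: on $\mathcal{E}$ the reports are within $1/d$ of the truth.} For $\Delta\le 3/(4d)$, on $\mathcal{E}$ we have
\[
|\hat p_t-p_t|\le|\bar p_t-p_t|+\Delta<\tfrac{1}{16d}+\tfrac{3}{4d}<\tfrac1d .
\]
Clipping to $[0,1]$ only reduces this distance, since $p_t\in[0,1]$. To avoid conditioning on $\mathcal{E}$, which would distort the process, I define a history-dependent surrogate
\[
\tilde p(\rvh)=
\begin{cases}
\hat p_t, & \text{if } |\hat p_t-p_t|\le 1/d,\\
p_t, & \text{otherwise.}
\end{cases}
\]
This is a legitimate randomized simulation in the sense of Fact~\ref{fact:lipschitz-oracle-stability}. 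Its acceptance probability $\tilde\alpha$ therefore satisfies $|\tilde\alpha-\alpha^\star|\le K/d$. On $\mathcal{E}\cap\mathcal{K}$ its run can be coupled to coincide with the protocol transcript, using the same $z_t$ and the same samples.

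\textbf{Step 3: first bound (aborts under $\mathcal{E}$ lose).} I reuse the chain \eqref{eqn:good-est-abort}, which does not actually depend on $\Delta$. Suppose $B$ aborts on $\mathcal{E}$. Then
\[
|\hat p_t-\hat q_t|\ge\tfrac1{2d}\;\Longrightarrow\;|\hat p_t-p_t|\ge\tfrac{3}{8d}\;\Longrightarrow\;|\hat p_t^{\oracle}-\hat p_t|\ge\tfrac{3}{8d}-\tfrac1{8d}=\tfrac1{4d},
\]
so $V$ outputs $0$. It follows that
\[
\Pr[V\to1]\le\Pr[V\to1,\mathcal{E},\mathcal{K}]+\Pr[\neg\mathcal{E}].
\]
By the coupling in Step 2, the first term is at most $\tilde\alpha\le\alpha^\star+K/d$. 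By Lemma~\ref{lem:good_est}, the second term is at most $1/300$.

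\textbf{Step 4: second bound (the main obstacle).} Now assume additionally that $\mathcal{E}$ forces no abort. Then on $\mathcal{E}$ the protocol output equals the surrogate simulation's output, which gives
\[
\bigl|\Pr[V\to1,\mathcal{E}]-\tilde\alpha\bigr|\le\Pr[\neg\mathcal{E}].
\]
The difficulty is converting this joint bound into a conditional one. We have
\[
\Pr[V\to1\mid\mathcal{E}]=\frac{\Pr[V\to1,\mathcal{E}]}{\Pr[\mathcal{E}]},
\qquad
\bigl|\Pr[V\to1\mid\mathcal{E}]-\Pr[V\to1,\mathcal{E}]\bigr|\le\Pr[\neg\mathcal{E}].
\]
Combining these with $|\tilde\alpha-\alpha^\star|\le K/d$ gives an error of order $K/d+2\Pr[\neg\mathcal{E}]$. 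With $\Pr[\neg\mathcal{E}]\le 1/600$ per component, as in the proof of Lemma~\ref{lem:good_est}, I expect the constants to be arrangeable to match $1/300$. If they are not, I would tighten the choice of $R$ or absorb the slack into $K/d$, since $d=\lceil150K\rceil$.
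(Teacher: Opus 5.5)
Your Steps 1--3 follow the paper's argument essentially verbatim: a simulation that ignores aborts, a surrogate report kept within \(1/d\) of \(p_t\) so that Fact~\ref{fact:lipschitz-oracle-stability} applies, a coupling under which the surrogate coincides with the protocol on \(\mathcal{E}\), and the chain \eqref{eqn:good-est-abort} showing that aborts on \(\mathcal{E}\) lose. Your surrogate falls back to \(p_t\) when the report is out of range, whereas the paper projects onto \([p_t-1/d,\,p_t+1/d]\cap[0,1]\). The difference is cosmetic, and your first bound is correctly established.

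The gap is in Step 4. You apply the triangle inequality separately to \(\Pr[V\to1\mid\mathcal{E}]-\Pr[V\to1,\mathcal{E}]\) and to \(\Pr[V\to1,\mathcal{E}]-\tilde\alpha\). That only yields \(K/d+2\Pr[\neg\mathcal{E}]\le K/d+1/150\), not the stated \(K/d+1/300\). None of your proposed repairs closes this:
\begin{itemize}
\item Lemma~\ref{lem:good_est} bounds \(\Pr[\neg\mathcal{E}]\) as a whole by \(1/300\); the two \(1/600\) pieces are what sum to it.
\item Enlarging \(R\) cannot remove the verifier's \(1/600\) piece, which is fixed by the protocol's sample count \(r\). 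Doubled, that piece alone already uses the entire \(1/300\) budget.
\item The \(K/d\) term is dictated by Fact~\ref{fact:lipschitz-oracle-stability}, so there is no slack there to absorb anything.
\end{itemize}

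The missing observation is that your two error terms have opposite signs. Write \(\tilde X\) for the surrogate's output, so \(\tilde\alpha=\Pr[\tilde X=1]\) and \(\Pr[V\to1,\mathcal{E}]=\Pr[\tilde X=1,\mathcal{E}]\). Then
\[
\Pr[V\to1\mid\mathcal{E}]-\Pr[V\to1,\mathcal{E}]=\Pr[V\to1\mid\mathcal{E}]\,\Pr[\neg\mathcal{E}]\in\bigl[0,\Pr[\neg\mathcal{E}]\bigr],
\qquad
\Pr[V\to1,\mathcal{E}]-\tilde\alpha=-\Pr[\tilde X=1,\neg\mathcal{E}]\in\bigl[-\Pr[\neg\mathcal{E}],0\bigr],
\]
so their sum has magnitude at most \(\Pr[\neg\mathcal{E}]\le 1/300\).

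The paper packages this directly. Since the outputs coincide on \(\mathcal{E}\), \(\Pr[V\to1\mid\mathcal{E}]=\Pr[\tilde X=1\mid\mathcal{E}]\). For any event \(A\), \(\Pr[A\mid\mathcal{E}]-\Pr[A]=\Pr[\neg\mathcal{E}]\bigl(\Pr[A\mid\mathcal{E}]-\Pr[A\mid\neg\mathcal{E}]\bigr)\), whose absolute value is at most \(\Pr[\neg\mathcal{E}]\). Combining this with \(|\tilde\alpha-\alpha^\star|\le K/d\) gives exactly the claimed constant.
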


\begin{proof}
Construct a simulation $\mathcal{S}$ of the protocol of Figure~\ref{fig:stochastic-oracle-debate} that follows the reporting strategy $\sigma_A$ but ignores all abort instructions and continues to the terminal machine output.
Extend $\sigma_A$ to $\mathcal{S}$ on histories that would occur only after an abort.

At each round, let \(\hat p_t\) be the report proposed by this simulation.
Instead of using \(\hat p_t\) directly, use its projection
\(\tilde p_t\) onto the interval
\[
    \left[
        \max\left\{0,p_t-\frac{1}{d}\right\},
        \min\left\{1,p_t+\frac{1}{d}\right\}
    \right].
\]
By construction,
\[
    |\tilde p_t-p_t|\le \frac{1}{d}
\]
at every history, including histories on which estimation fails. 
Consequently, if \(\tilde X\) denotes the terminal output of $\mathcal{S}$, Fact~\ref{fact:lipschitz-oracle-stability} gives
\begin{align}
    \left|\Pr[\tilde X=1]-\alpha^\star\right|
    \le
    \frac{K}{d}.\label{eqn:lipschitz}
\end{align}
Couple $\mathcal{S}$ and the actual protocol using the same random samples and transition randomness for as long as their executions coincide.
On \(\mathcal{E}\), the perturbation bound implies
\begin{align*}
    |\hat p_t-p_t|
    \le
    |\hat p_t-\bar p_t|
    +
    |\bar p_t-p_t| 
    <
    \frac{3}{4d}
    +
    \frac{1}{4d} 
    =
    \frac{1}{d}.
\end{align*}
Thus, the projection does not change any report along the actual execution on \(\mathcal{E}\). The two simulations therefore coincide until the protocol aborts or terminates.

On \(\mathcal{E}\), if the protocol accepts, it cannot have aborted, since, by~\eqref{eqn:good-est-abort}, every abort on \(\mathcal{E}\) results in rejection. 
Its entire computation consequently agrees with the auxiliary simulation, which also outputs \(1\).
Hence,
\[
    \{V\to 1\}\cap \mathcal{E}
    \subseteq
    \{\tilde X=1\}.
\]
Therefore,
\begin{align*}
    \Pr[V\to 1]
    &\le
    \Pr[V\to 1,\mathcal{E}]
    +
    \Pr[\neg\mathcal{E}] \\
    &\le
    \Pr[\tilde X=1]
    +
    \frac{1}{300} \\
    &\le
    \alpha^\star+\frac{K}{d}+\frac{1}{300}.
\end{align*}

If \(\{\mathcal{E}, \Delta\}\) also implies survival, then the two outputs agree throughout \(\mathcal{E}\).  
Then, for $\delta  \ge 0$, where  \(\{\mathcal{E}, \Delta =\delta\} \subseteq \mathcal{K}\),
\[
    \Pr[V\to 1\mid \mathcal{E}, \Delta=\delta ]
    =
    \Pr[\tilde X=1 \mid \mathcal{E}, \Delta=\delta],
\]
and hence
\[
    \left|\Pr[V\to 1\mid \mathcal{E}, \Delta=\delta]-\Pr[\tilde X=1]\right|
    \le
    \Pr[\neg\mathcal{E}]
\]
Combining this inequality with \eqref{eqn:lipschitz} gives
\[
    \left|\Pr[V\to 1\mid \mathcal{E}, \Delta=\delta]-\alpha^\star\right|
    \le
    \frac{L}{d}+\frac{1}{300},
\]
as claimed.
\end{proof}

\subsection{Non-abort survival}

We move to bounding the task objective for general perturbations.
By \eqref{eqn:task_objective_abort}, it suffices to provide an upper bound on \(\Pr[\mathcal{K}\mid \mathcal{E}]\).
We will require the following result.
Let \(W_1,\ldots,W_n\) be independent, mean-zero random variables with finite third absolute moments and $F_{\mathsf{std}}$ denote the cumulative distribution function of the Normal distribution. 
Writing
\[
    S_n=\sum_{i=1}^n W_i,
    \qquad
    \sigma^2=\sum_{i=1}^n \mathbb E[W_i^2]>0,
    \qquad
    \rho=\frac{\sum_{i=1}^n\mathbb E|W_i|^3}{\sigma^3},
\]
the Berry-Esseen theorem states the following. 
\begin{lemma}\label{thm:berry}
    For constant \(C_{\mathsf{BE}} = 0.56\),
\[
    \sup_{x\in\mathbb R}
    \left|
        \Pr[S_n\leq x]
        -
        F_{\mathsf{std}}\!\left(\frac{x}{\sigma}\right)
    \right|
    \leq C_{\mathsf{BE}} \cdot \rho.
\]
\end{lemma}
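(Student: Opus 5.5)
This is the classical Berry--Esseen theorem for non-identically distributed summands. The constant \(C_{\mathsf{BE}}=0.56\) is Shevtsova's refinement, which no short self-contained argument reproduces. My plan is therefore to prove the inequality with \emph{some} absolute constant via the characteristic-function method, and then invoke the literature for the numerical value \(0.56\).

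\textbf{Reduction.} First I normalise by setting \(\tilde W_i=W_i/\sigma\). Then \(\sum_i\mathbb E[\tilde W_i^2]=1\) and \(\sum_i\mathbb E|\tilde W_i|^3=\rho\). If \(\rho\ge 1/C_{\mathsf{BE}}\) the claim is trivial, since the left-hand side never exceeds \(1\), so I may assume \(\rho\) is small. Let \(F\) be the distribution function of \(\tilde S_n=S_n/\sigma\) and \(\varphi\) its characteristic function. I will apply Esseen's smoothing inequality: for every \(U>0\),
\[
    \sup_x|F(x)-F_{\mathsf{std}}(x)|
    \le
    \frac1\pi\int_{-U}^{U}\frac{|\varphi(u)-e^{-u^2/2}|}{|u|}\,du
    +\frac{24}{\pi\sqrt{2\pi}\,U}.
\]
I will take \(U=c/\rho\) for a small absolute constant \(c\), so the boundary term is \(O(\rho)\).

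\textbf{Characteristic-function estimate.} Next I write \(\varphi=\prod_i\varphi_i\), with \(\varphi_i(u)=\mathbb E e^{iu\tilde W_i}\) and \(\sigma_i^2=\mathbb E\tilde W_i^2\). Taylor expansion with third-moment remainder gives
\[
    \bigl|\varphi_i(u)-1+\sigma_i^2u^2/2\bigr|\le |u|^3\,\mathbb E|\tilde W_i|^3/6 .
\]
Lyapunov's inequality gives \(\sigma_i^3\le\mathbb E|\tilde W_i|^3\le\rho\), so each \(\sigma_i^2 u^2\) is small whenever \(|u|\le c/\rho\). In that range I can take logarithms and sum over \(i\), which yields
\[
    \bigl|\varphi(u)-e^{-u^2/2}\bigr|\le C_1\,\rho\,|u|^3e^{-u^2/4}.
\]
This needs a separate treatment of two regimes: \(|u|\le \rho^{-1/3}\), where the log-expansion applies directly, and \(\rho^{-1/3}\le|u|\le c/\rho\). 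In the second regime I bound \(|\varphi(u)|\) and \(e^{-u^2/2}\) separately. For \(|\varphi|\) I use the symmetrisation bound \(|\varphi_i(u)|^2\le\exp(-\sigma_i^2u^2+\tfrac43|u|^3\mathbb E|\tilde W_i|^3)\), which gives \(|\varphi(u)|\le e^{-u^2/3}\) there.

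\textbf{Assembly.} Substituting this estimate into the smoothing integral gives \(\int C_1\rho\,u^2e^{-u^2/4}\,du=O(\rho)\). Adding the \(O(\rho)\) boundary term yields \(\sup_x|F(x)-F_{\mathsf{std}}(x)|\le C\rho\) for an explicit absolute constant \(C\). Rescaling back, \(\Pr[S_n\le x]=F(x/\sigma)\) gives the stated form.

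\textbf{Main obstacle.} The hard step is the constant. The scheme above naturally produces \(C\) of order \(3\) to \(7\), not \(0.56\). To get \(0.56\) I would not attempt to re-derive it; instead I would cite Shevtsova's sharpened bound for non-identically distributed summands, which uses optimised smoothing kernels and numerical bounds on the characteristic functions. Later applications (bounding \(\Pr[\mathcal K\mid\mathcal E]\) via \(g_t(\Delta)\)) only need \emph{some} absolute constant up to adjusting numerical slack, so the self-contained argument with a larger \(C\) would suffice if the citation is unwanted.
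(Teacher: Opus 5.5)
Your proposal is correct and essentially matches the paper, which gives no proof of this lemma but simply invokes the classical Berry--Esseen theorem for independent, non-identically distributed summands with Shevtsova's constant $0.56$. That citation is exactly what you fall back on for the constant, and your characteristic-function sketch is a sound standard derivation of a weaker absolute constant. One small slip: Lyapunov's inequality makes $\sigma_i^2u^2$ small only for $|u|\lesssim\rho^{-1/3}$, not on the whole range $|u|\le c/\rho$, but your two-regime split (log-expansion below $\rho^{-1/3}$, the symmetrisation bound $|\varphi(u)|\le e^{-u^2/3}$ above it) already repairs this.
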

We now establish some useful notation.
Fix a perturbation magnitude \(\Delta \ge 0\).
All probabilities in this section refer to this fixed perturbation. 
Recall that the report at round \(t\) is
\[
    \hat p_t
    =
    \clip(\bar p_t + Q\Delta),
\]
and the cross-examiner aborts when
\[
    |\hat p_t-\hat q_t| \ge \tau = \frac{1}{2d}.
\]
Let \(\mathcal{K}_t\) denote the event that no cross-examination abort occurs during the first \(t\) rounds, with \(\mathcal{K}_0=\Omega\), and write
\(
    \mathcal{K}=\mathcal{K}_T.
\)
Define the estimation-noise difference
\(
    Z_t = \bar p_t-\hat q_t.
\)
In the absence of clipping, the discrepancy equals \(Q\Delta+Z_t\).
With clipping, however, this identity need not hold.
Throughout, assume that
\(
    \Pr[\mathcal{E}]>0.
\)

\subsubsection{One-Round Gaussian Approximation}

We first condition only on the past, rather than on the global event \(\mathcal{E}\).
Throughout we assume that $p_t \in (0.01,0.99)$.
Under this assumption, the event $\mathcal{E}$, and $\Delta<1/d$, no clipping occurs under the choice of $d=\lceil 150K \rceil$ and $K\ge 1$.
For any surviving history \(\vh\), define
\[
    r_t(\Delta,\vh)
    =
    \Pr\!\left[
        |\hat p_t-\hat q_t|<\tau
        \,\middle|\,
        \mathcal{K}_{t-1}, \rvh_{t-1}=\vh
    \right].
\]
We assume that, conditional on this history and either value of \(Q\) having positive conditional probability, the fresh samples
\[
    X_{t,1},\ldots,X_{t,R},
    Y_{t,1},\ldots,Y_{t,R}
\]
are mutually independent Bernoulli random variables with common mean
\(
    p_t=p_t(\vh),
\)
and that
\[
    \bar p_t
    =
    \frac{1}{R}\sum_{i=1}^R X_{t,i},
    \qquad
    \hat q_t
    =
    \frac{1}{R}\sum_{i=1}^R Y_{t,i}.
\]
In particular, at a fixed history, this fresh-sample law is the same for either value of \(Q\). 
Define the conditional probability of survival as
\[
    r_t^0(\Delta,\vh)
    =
    \Pr\!\left[
        \mathcal |Q\Delta+Z_t|<\tau
        \,\middle|\,
        \mathcal{K}_{t-1}, \rvh_{t-1}=\vh
    \right].
\]
For \(p_t\in(0.01,0.99)\), define
\(
    \sigma_t
    =
    \sqrt{{2p_t(1-p_t)}/{R}}
\)
and
\[
    g_t(\Delta)
    =
    \Pr[|\Delta + \sigma_t z| < \tau],
\]
where $z \sim \mathsf{N}(0,1)$.

\begin{lemma}\label{lem:unclipped-one-round-survival}
Assume $p_t\in(0.01,0.99)$ and that no clipping occurs for all configurations of machine input $M$, then
\[
    \left|
        r_t^0(\Delta,\vh)-g_t(\Delta)
    \right|
    \le
    1/300.
\]
\end{lemma}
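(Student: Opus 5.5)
Conditional on the surviving history $\vh$, the quantity $Z_t=\bar p_t-\hat q_t=\frac1R\sum_{i=1}^R (X_{t,i}-Y_{t,i})$ is a normalized sum of $R$ independent, mean-zero, bounded increments $W_i=(X_{t,i}-Y_{t,i})/R$. Its law does not depend on $Q$, by the fresh-sample assumption. Since no clipping occurs, the survival event is exactly $\{|Q\Delta+Z_t|<\tau\}$. The plan is to condition on $Q$ and treat the two cases $Q=\pm1$ separately. For $Q=1$ the event is $\{-\tau-\Delta<Z_t<\tau-\Delta\}$. For $Q=-1$ it is $\{|{-\Delta}+Z_t|<\tau\}$, which has the same probability as $\{|\Delta+Z_t|<\tau\}$ because $W_i$, and hence $Z_t$, is symmetric in distribution (swap $X$ and $Y$). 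Hence $r_t^0(\Delta,\vh)=\Pr[|\Delta+Z_t|<\tau\mid\vh]$ regardless of the conditional law of $Q$. It remains to compare this with $g_t(\Delta)=\Pr[|\Delta+\sigma_t z|<\tau]$, where $\sigma_t^2=2p_t(1-p_t)/R=\operatorname{Var}(Z_t)$.

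Write the interval probability as a difference of two CDF values, $\Pr[Z_t<\tau-\Delta]-\Pr[Z_t\le -\tau-\Delta]$, and apply Lemma~\ref{thm:berry} to each endpoint. This gives
\[
|r_t^0-g_t|\le 2C_{\mathsf{BE}}\rho .
\]
Because $Z_t$ is lattice-valued, the strict inequality versus $\le$ at the endpoint has to be handled: either apply Berry--Esseen at $x$ and at $x-\epsilon$ and let $\epsilon\to0$ using continuity of $F_{\mathsf{std}}$, or use the uniform sup bound, which already covers left limits. Next compute $\rho$. We have $\mathbb E|W_i|^3=R^{-3}\,\mathbb E|X-Y|^3=R^{-3}\cdot 2p_t(1-p_t)$, since $|X-Y|\in\{0,1\}$, and $\sigma^3=(2p_t(1-p_t)/R)^{3/2}$. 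Therefore
\[
\rho=\frac{1}{\sqrt{2p_t(1-p_t)R}} .
\]

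The main obstacle is making $2C_{\mathsf{BE}}\rho\le 1/300$ with the stated assumptions. Using $p_t\in(0.01,0.99)$ gives $2p_t(1-p_t)\ge 0.0198$. The requirement $R\ge192d^2\log(600T)$ with $d=\lceil150K\rceil\ge150$ gives $R\ge 192\cdot150^2\cdot\log 600$, which is of order $4\times10^7$. Then $\rho\lesssim 1/\sqrt{0.0198\cdot 4\times10^7}\approx 1.1\times10^{-3}$, so $2\cdot0.56\,\rho\approx1.3\times10^{-3}<1/300$. I would present this numeric check explicitly, invoking the $R$ bound from Theorem~\ref{thm:task_general_bound}; the earlier bound $R\ge192d^2\log(100T)$ also suffices. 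The only remaining subtlety is to ensure that conditioning on $\mathcal K_{t-1}$ and $\rvh_{t-1}=\vh$ does not disturb the independence of the fresh samples. This is exactly the stated modeling assumption, so the bound holds pointwise in $\vh$.
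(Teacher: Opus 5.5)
Your proof is correct and follows the paper's argument. You condition on $Q$, use the symmetry of $Z_t$ to reduce $Q=-1$ to $Q=+1$, and apply Berry--Esseen at both interval endpoints to get $|r_t^0-g_t|\le 2C_{\mathsf{BE}}\rho\le 1/300$; the one cosmetic difference is that you pair $X_{t,i}-Y_{t,i}$ into $R$ summands, which gives $\rho=1/\sqrt{2p_t(1-p_t)R}$ instead of the paper's slightly smaller $2R$-summand value $\rho=(p_t^2+(1-p_t)^2)/\sqrt{2Rp_t(1-p_t)}$, and either suffices under the stated bounds on $R$ and $p_t$.
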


\begin{proof}
Fix a surviving history \(\vh\) and a value of \(Q\) with positive conditional
probability. Under this conditioning,
\[
    Z_t
    =
    \frac{1}{R}\sum_{i=1}^R (X_{t,i}-p_t)
    -
    \frac{1}{R}\sum_{i=1}^R (Y_{t,i}-p_t).
\]
The summands are independent and centered, and
\[
    \operatorname{Var}(Z_t)
    =
    \frac{2p_t(1-p_t)}{R}
    =
    \sigma_t^2.
\]
For a Bernoulli random variable \(X\) with mean \(p_t\),
\begin{align*}
    \mathbb{E}[|X-p_t|^3]
    &=
    p_t(1-p_t)^3 + (1-p_t)p_t^3 \\
    &=
    p_t(1-p_t)\bigl[p_t^2+(1-p_t)^2\bigr].
\end{align*}
Therefore, the standardized sum of third absolute moments is
\begin{align*}
    \rho_{t,R}
    &=
    \frac{
        2R\cdot R^{-3}\cdot
        p_t(1-p_t)\bigl[p_t^2+(1-p_t)^2\bigr]
    }{
        \sigma_t^3
    } \\
    &=
    \frac{1}{\sqrt{2R}}\,
    \frac{p_t^2+(1-p_t)^2}{\sqrt{p_t(1-p_t)}}.
\end{align*}
Lemma~\ref{thm:berry} consequently gives, uniformly over \(x\),
\[
    \left|
        \Pr\!\left[
            Z_t\le x
            \,\middle|\,
            \mathcal{K}_{t-1},\rvh_{t-1}=\vh,Q
        \right]
        -
        F_{\mathsf{std}}\!\left(\frac{x}{\sigma_t}\right)
    \right|
    \le
    \frac{C_{\mathsf{BE}}}{\sqrt{2R}}\,
    \frac{p_t^2+(1-p_t)^2}{\sqrt{p_t(1-p_t)}} \leq \frac{1}{600},
\]
by the choice of $R$ and assumption on $p_t$.

For \(Q=+1\), the unclipped survival event is
\[
    -\tau-\Delta < Z_t < \tau-\Delta.
\]
Applying the distribution-function bound at the interval endpoints yields an error of at most \(2/600\) relative to \(g_t(\Delta)\). 
The conditional law of \(Z_t\) is symmetric about zero, since it is the difference of two independent, identically distributed sample means. 
Thus the same interval probability and bound hold for \(Q=-1\). 
Averaging over the conditional distribution of \(Q\) proves the result.
\end{proof}

Now we can provide the final conditional survival bound.
\begin{lemma}\label{lem:conditional-survival-bound}
Fix a perturbation \(\Delta\) and a round \(t\in\{1,\ldots,T\}\).
Then
\[
    \Pr[\mathcal{K}\mid\mathcal{E}]
    \le
    \min\left\{
        1,\,
        \mathbb{E}\!\left[
        g_t(\Delta)
        \,\middle|\,
        \mathcal{K}_{t-1}
    \right]+\frac{1}{100}
    \right\}.
\]
\end{lemma}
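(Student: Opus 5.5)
Survival through all $T$ rounds requires survival at round $t$, so $\mathcal{K}\subseteq\mathcal{K}_t=\mathcal{K}_{t-1}\cap\{|\hat p_t-\hat q_t|<\tau\}$. The approach is to bound $\Pr[\mathcal{K}\mid\mathcal{E}]\le\Pr[\mathcal{K}_t\mid\mathcal{E}]$ and then compare the conditioning on $\mathcal{E}$ with unconditioned probabilities, where the one-round Gaussian approximation of Lemma~\ref{lem:unclipped-one-round-survival} applies. The trivial bound $\Pr[\mathcal{K}\mid\mathcal{E}]\le 1$ gives the $\min$ with $1$, so only the second term needs proof.

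First, we remove the conditioning on $\mathcal{E}$. We have
\[
    \Pr[\mathcal{K}_t\mid\mathcal{E}]
    \le
    \frac{\Pr[\mathcal{K}_t]}{\Pr[\mathcal{E}]}
    \le
    \Pr[\mathcal{K}_t]+\frac{\Pr[\neg\mathcal{E}]}{1-\Pr[\neg\mathcal{E}]},
\]
and by Lemma~\ref{lem:good_est} (with $R$ at least the threshold there) the last term is at most $1/299$. This is the lossy step. An alternative is $\Pr[\mathcal{K}_t\cap\mathcal{E}]\le\Pr[\mathcal{K}_t]$ divided by $\Pr[\mathcal{E}]\ge 299/300$, which costs a multiplicative factor that is absorbed since probabilities are at most one.

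Second, we write $\Pr[\mathcal{K}_t]=\mathbb{E}[\ind_{\mathcal{K}_{t-1}}\, r_t(\Delta,\rvh_{t-1})]$ by the tower property over the surviving history, and bound this by $\mathbb{E}[r_t(\Delta,\rvh_{t-1})\mid\mathcal{K}_{t-1}]$. We then replace $r_t$ with $r_t^0$. On histories where no clipping occurs the two coincide, since the discrepancy equals $Q\Delta+Z_t$. The clipping-free regime is guaranteed by $p_t\in(0.01,0.99)$ together with small $\Delta$ and good estimation. For large $\Delta$, clipping moves $\hat p_t$ toward $[0,1]$, but we argue that the discrepancy can only shrink by at most the amount it exceeds the boundary. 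Because $p_t$ is at least $0.01$ away from the boundary, the clipped discrepancy is still at least $0.01-|\bar p_t-p_t|-|\hat q_t-p_t|\ge\tau$ once $d$ is large. So survival under clipping happens only on a low-probability estimation-failure set, costing at most a further $1/600$, using a Chernoff bound as in Lemma~\ref{lem:good_est}. Then Lemma~\ref{lem:unclipped-one-round-survival} gives $r_t^0\le g_t(\Delta)+1/300$ pointwise in the history. Averaging conditionally on $\mathcal{K}_{t-1}$ (here $g_t$ is random through $p_t(\rvh_{t-1})$) yields $\mathbb{E}[g_t(\Delta)\mid\mathcal{K}_{t-1}]+1/300$.

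Finally, we collect the errors: $1/300+1/600+1/299<1/100$, which gives the claim. The main obstacle is the clipping case for large $\Delta$, namely showing that $r_t\le r_t^0+\text{small}$ uniformly in the history. If the geometric argument above is too delicate, the fallback is to observe that when clipping is active, $|\Delta|\ge 0.01-O(1/d)\gg\tau+O(\sigma_t)$. Then both $g_t(\Delta)$ and the actual survival probability are essentially zero up to a Chernoff tail, and that tail can be charged to the $1/100$ slack.
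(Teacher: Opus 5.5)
Your proposal is correct and follows essentially the same route as the paper: bound $\Pr[\mathcal{K}]$ by $\Pr[\mathcal{K}_t]$, average the Berry--Esseen one-round bound of Lemma~\ref{lem:unclipped-one-round-survival} over histories conditional on $\mathcal{K}_{t-1}$, and remove the conditioning on $\mathcal{E}$ at a cost of order $\Pr[\neg\mathcal{E}]$. For that last step the paper uses the exact identity $\Pr[\mathcal{K}\mid\mathcal{E}]-\Pr[\mathcal{K}]=\Pr[\neg\mathcal{E}]\bigl(\Pr[\mathcal{K}\mid\mathcal{E}]-\Pr[\mathcal{K}\mid\neg\mathcal{E}]\bigr)\le\Pr[\neg\mathcal{E}]$, which is slightly tighter than your ratio bound $\Pr[\neg\mathcal{E}]/\Pr[\mathcal{E}]$ and leaves more slack in the $1/100$ budget.

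The one substantive difference is your explicit handling of clipping. The paper passes from the actual survival probability $r_t$ to the unclipped $r_t^0$ simply by invoking Lemma~\ref{lem:unclipped-one-round-survival}, whose no-clipping hypothesis it only justifies for $\Delta<1/d$ on $\mathcal{E}$. Your observation is that a clipped report ($\hat p_t\in\{0,1\}$) forces $|\hat p_t-\hat q_t|\ge\tau$ unless $\hat q_t$ deviates from $p_t$ by a constant amount. That makes the step valid for every $\Delta$, at the price of a negligible Chernoff tail.
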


\begin{proof}
By Lemma~\ref{lem:unclipped-one-round-survival}, for every surviving history \(\vh\), the one-round survival probability satisfies
\[
    \Pr\!\left[
        \mathcal{K}_t
        \,\middle|\,
        \mathcal{K}_{t-1}
    \right]
    \le
    g_t(\Delta)+\frac{1}{300}.
\]
Define
\(
    G_t(\Delta)
    =
    \mathbb{E}\!\left[
        g_t(\Delta)
        \,\middle|\,
        \mathcal{K}_{t-1}
    \right].
\)
Averaging the one-round bound over histories conditional on \(\mathcal{K}_{t-1}\) gives
\begin{align*}
    \Pr[\mathcal{K}_t\mid \mathcal{K}_{t-1}]
    &=
    \mathbb{E}\!\left[
        \Pr\!\left[
            \mathcal{K}_t
            \,\middle|\,
            \mathcal{K}_{t-1},\rvh_{t-1}
        \right]
        \,\middle|\,
        \mathcal{K}_{t-1}
    \right] \\
    &\le
    \mathbb{E}\!\left[
        g_t(\Delta)+\frac{1}{300}
        \,\middle|\,
        \mathcal{K}_{t-1}
    \right] \\
    &=
    G_t(\Delta)+\frac{1}{300}.
\end{align*}
Since \(\mathcal{K}\subseteq \mathcal{K}_t\subseteq \mathcal{K}_{t-1}\), it follows that
\begin{align*}
    \Pr[\mathcal{K}]
    &\le
    \Pr[\mathcal{K}_t] \\
    &=
    \Pr[\mathcal{K}_{t-1}]
    \Pr[\mathcal{K}_t\mid\mathcal{K}_{t-1}] \\
    &\le
    G_t(\Delta)+\frac{1}{300}.
\end{align*}
The law of total probability yields
\begin{align*}
    \Pr[\mathcal{K}\mid\mathcal{E}] - \Pr[\mathcal{K}]
    &=
    \Pr[\neg \mathcal{E}]
    \left(
        \Pr[\mathcal{K}\mid\mathcal{E}]
        -
        \Pr[\mathcal{K}\mid\neg \mathcal{E}]
    \right) \\
    &\le
    \Pr[\neg\mathcal{E}] \\
    &\le
    \frac{1}{150},
\end{align*}
By Lemma~\ref{lem:good_est}.
Consequently,
\[
    \Pr[\mathcal{K}\mid\mathcal{E}]
    \le
    G_t(\Delta)+\frac{1}{300}+\frac{1}{150} = G_t(\Delta) + \frac{1}{100}.
\]
Combining this with the trivial bound
\(
    \Pr[\mathcal{K}\mid\mathcal{E}]\le 1
\)
proves the claim.
\end{proof}

\subsection{Proof Statements}

Theorem~\ref{thm:task_objective} follows from \eqref{eq:task-objective-bound} with the application of Lemmas~\ref{lem:good_est} and~\ref{lem:task-stability} and the observations that $K/d\le 1/150$ and $\{\mathcal{E}, \Delta < 1/(16d)\} \subseteq \mathcal{K}$.

Theorem~\ref{thm:task_general_bound} follows from \eqref{eqn:task_objective_abort} and the application of Lemmas~\ref{lem:good_est} and~\ref{lem:conditional-survival-bound} for $t=1$.

%% file: sections/proof_latent_objective.tex
\section{Proof of Theorem~\ref{thm:full-disclosure}}
\label{app:latent}
We begin with setting notation.
Fix \(X=x\) and let
\(
    \rvy=(\ervy_1,\ldots,\ervy_T)
\)
be the random variable representing the transcript of $M(x)$ and 
\(
    \va = (\eva_1, \ldots, \eva_T)
\)
denote the realized transcript.
Unless stated otherwise, all probabilities, expectations, and information quantities are conditional on \(X=x\).  
Write $h_x=H(Q\mid X=x)$.
The view of $V$ is defined as
\[
    \Pi_V(\rvh(X,\sigma))=
    \begin{cases}
        (\eva_1,\ldots,\eva_T), & \text{ on event } \mathcal{K},\\
        (\eva_1,\ldots,\eva_t,\hat p_t, \{\oracle(\va_{I(t)})\}),
        & \text{ on event }\neg \mathcal{K}_t \mid \mathcal{K}_{t-1}.
    \end{cases}
\]
These transcript types are distinguishable, so the abort indicator and, upon abort, its round are recoverable from
\(\Pi_V\).
Notably, $V$ does not observe the internal samples of $A$ or $B$ directly.
For some \(0<\beta<1/2\) we assume that $\Pr[Q=1 \mid X=x]\in[\beta,1-\beta]$.
Thus, there is initial uncertainty in the hidden hypothesis.

\subsection{Belief updating and confidence truncation}

Define the hidden hypothesis posterior as 
\[
    \mu_t=
    \Pr(Q=1\mid \rvh_{t-1},\mathcal{K}_{t-1}).
\]
These are exact Bayesian posteriors for the observed non-abort channel. 
At each entering history \(\rvh_{t-1}\), the expected posterior divergence satisfies the identity
\begin{equation}
    \mathbb E\left[
        D_{\mathsf{KL}}\bigl(
            \Bern(\mu_t)
            \Vert\Bern(\mu_{t-1})
        \bigr)
        \,\middle|\,\rvh_{t-1}, \mathcal{K}_{t-1}
    \right]
    =
    I(Q;\ervy_t\mid \rvh_{t-1}, \mathcal{K}_{t-1}).
    \label{eq:posterior-update-identity}
\end{equation}
Thus, the expected information gained in an update cannot exceed the remaining uncertainty. 
More generally, any valid upper bound on the channel's mutual information also bounds the expected posterior divergence in \eqref{eq:posterior-update-identity}.
This identity follows from Bayes' rule and is consistent with sequential hypothesis testing, where posterior log-odds accumulate observation log-likelihood ratios.

In our latent disclosure accounting, expected $\mathsf{KL}$ increments accumulate according to the mutual-information chain rule and their total is bounded by the initial entropy $h_x$.
We assume that all information used in updating is included in the observed history.

\subsection{Single round disclosure}
We begin by bounding the amount of information disclosed in a single round of debate.
We look at the instances of abort and non-abort separately.
For probability laws \(P\) and \(Q\), with divergence measured in bits, Pinsker's inequality states that
\begin{align}\label{eqn:pinsker}
    D_{\mathsf{KL}}(P\Vert Q)
    \geq
    \frac{2}{\ln 2}\|P-Q\|_{\mathsf{TV}}^2.
\end{align}

\begin{lemma}
\label{lem:single-bit-disclosure}
For $\mu_t \in (\beta, 1-\beta)$, 
\begin{equation*}
    I(Q;\ervy_t\mid \rvh_{t-1},\mathcal{K}_{t-1})
    \geq
    \frac{2}{\ln 2}\mu_t(1-\mu_t)\Delta^2
    \geq
    \frac{2\beta(1-\beta)}{\ln 2}\Delta^2.
    \label{eq:single-bit-information}
\end{equation*}
\end{lemma}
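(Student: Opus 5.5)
The plan is to compute the conditional law of the sampled bit \(\ervy_t=\erva_t\) given each value of \(Q\), show that the two laws are Bernoulli distributions separated by \(2\Delta\), and then lower bound the mutual information of the binary input \(Q\) through Pinsker's inequality~\eqref{eqn:pinsker}.

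\textbf{Step 1: conditional laws of \(\erva_t\).} Fix a surviving entering history \(\rvh_{t-1}\) on \(\mathcal{K}_{t-1}\). The fresh samples \(X_{t,1},\ldots,X_{t,R}\) have the same law for either value of \(Q\). The public coin \(z_t=(z_t^A+z_t^B)\bmod 1\) is uniform on \([0,1]\) and independent of \(Q\) and of \(A\)'s samples, because \(z_t^B\) is uniform and independent. Hence
\[
\Pr[\erva_t=1\mid \rvh_{t-1},\mathcal{K}_{t-1},Q=q]=\mathbb{E}[\hat p_t\mid \rvh_{t-1},Q=q].
\]
I will use the standing assumptions \(p_t\in(0.01,0.99)\) and \(\Delta<1/d\). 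Under them, as noted before Lemma~\ref{lem:unclipped-one-round-survival}, no clipping occurs, so \(\hat p_t=\bar p_t^{(R)}+q\Delta\). Since \(\mathbb{E}[\bar p_t^{(R)}]=p_t\), this gives
\[
P_q:=\Bern(p_t+q\Delta),\qquad q\in\{-1,1\}.
\]
For larger \(\Delta\), clipping only occurs off \(\mathcal{E}\), and I would absorb it either by restricting to the stated regime or by noting that clipping can only truncate one of the two means at \(0\) or \(1\).

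\textbf{Step 2: decompose the mutual information.} Write \(\mu=\mu_t\) and let \(\bar P=\mu P_{+1}+(1-\mu)P_{-1}\) be the predictive law of \(\erva_t\). For a binary \(Q\),
\[
I(Q;\erva_t\mid \rvh_{t-1},\mathcal{K}_{t-1})=\mu\,D_{\mathsf{KL}}(P_{+1}\Vert\bar P)+(1-\mu)\,D_{\mathsf{KL}}(P_{-1}\Vert\bar P).
\]
The total variation distances are \(\|P_{+1}-\bar P\|_{\mathsf{TV}}=(1-\mu)\cdot 2\Delta\) and \(\|P_{-1}-\bar P\|_{\mathsf{TV}}=\mu\cdot 2\Delta\). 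Applying~\eqref{eqn:pinsker} to each term gives
\[
I\ \ge\ \frac{2}{\ln 2}\bigl[\mu(1-\mu)^2+(1-\mu)\mu^2\bigr]4\Delta^2=\frac{8}{\ln 2}\mu(1-\mu)\Delta^2 .
\]
This is stronger than the claimed bound by a factor of four, so there is slack to spare for the clipping adjustment. The second inequality in the statement then follows because \(\mu(1-\mu)\) is increasing in \(\min\{\mu,1-\mu\}\), and \(\mu_t\in(\beta,1-\beta)\) gives \(\mu(1-\mu)\ge\beta(1-\beta)\).

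\textbf{Main obstacle.} The delicate step is Step 1. It requires that \(\erva_t\) is sampled by the public coin independently of \(Q\), so that the only \(Q\)-dependence is through the mean shift \(q\Delta\). It also requires that the conditioning on \(\mathcal{K}_{t-1}\), rather than on survival of round \(t\), does not distort the laws. The abort decision at round \(t\) depends on \(\bar p_t\) and \(\hat q_t\), not on \(z_t\), so conditioning only on \(\mathcal{K}_{t-1}\) leaves \(\erva_t\)'s law as computed.

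I would handle clipping first by invoking the no-clipping regime. If that is too restrictive, I would instead use a weaker separation: \(\|P_{+1}-P_{-1}\|_{\mathsf{TV}}\ge\Delta\) whenever at most one side clips, given \(p_t\in(0.01,0.99)\) and \(\Delta\le 1/d\). This still yields the stated constant \(2/\ln 2\).
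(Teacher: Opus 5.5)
Your proposal is correct and is essentially the paper's proof: the same decomposition of $I(Q;\ervy_t\mid\rvh_{t-1},\mathcal{K}_{t-1})$ as a $\mu_t$-weighted sum of KL divergences to the mixture, and Pinsker on each term to get $\frac{2}{\ln 2}\mu_t(1-\mu_t)(p_+-p_-)^2$. It then lower-bounds the gap $p_+-p_-$ between the means of $P_{+1}$ and $P_{-1}$. One caveat: your primary no-clipping shortcut ($p_+-p_-=2\Delta$) is not exact, because clipping occurs with positive probability off $\mathcal{E}$ (e.g.\ when $\bar p_t^{(R)}=0$). So it is your fallback that carries the argument, exactly as in the paper: the pointwise bound $\clip(s+\Delta)-\clip(s-\Delta)\ge\Delta$, averaged over the $Q$-independent law of $\bar p_t^{(R)}$, which needs only $\Delta\le 1/2$ rather than $\Delta\le 1/d$ or any assumption on $p_t$.
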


\begin{proof}
Define the hypothesis conditional sampling probabilities as
\[
    p_{\pm} = \Pr[\ervy_t \mid Q \pm 1, \rvh_{t-1},\mathcal{K}_{t-1}]
\]
and the mixture 
\(
     p=\mu_t p_+ +(1-\mu_t)p_-.
\)
Since a mixture of Bernoulli laws is Bernoulli, the conditional mutual information is
\begin{align*}
    I(Q;\ervy_t\mid \rvh_{t-1},\mathcal{K}_{t-1})
    &=
    \mu_t D_{\mathsf{KL}}\left(
        \Bern(p_+)\Vert\Bern(\bar p)
    \right)+
    (1-\mu_t)D_{\mathsf{KL}}\left(
        \Bern(p_-)\Vert\Bern(\bar p)
    \right).
\end{align*}
For Bernoulli laws, total variation distance equals the absolute difference of their parameters. 
Thus, \eqref{eqn:pinsker} gives
\begin{align}
    I(Q;\ervy_t\mid \rvh_{t-1},\mathcal{K}_{t-1})
    &\geq \frac{2}{\ln 2}
        \left[
            \mu_t(p_+-\bar p)^2
            +(1-\mu_t)(p_--\bar p)^2
        \right] \nonumber\\
    &=\frac{2}{\ln 2}
        \mu_t(1-\mu_t)(p_+-p_-)^2. \label{eqn:single_lower}
\end{align}

Let \(\nu\) denote the distribution of \(\bar p_t\).
As the distribution of \(\bar p_t\) is independent of \(Q\),
\[
    p_+-p_-
    =
    \int_{[0,1]}
    \left[
        \clip(s+\Delta)
        -\clip(s-\Delta)
    \right],\nu(ds).
\]
For every \(s\in[0,1]\) and \(0<\Delta\leq\tfrac12\),
\[
    \clip(s+\Delta)
    -\clip(s-\Delta)
    \geq\Delta.
\]
Hence \(p_+-p_-\geq\Delta\). 
Substitution into \eqref{eqn:single_lower}, followed by \(\mu(1-\mu)\geq\beta(1-\beta)\), proves the stated result.
Averaging over the entering histories gives the conditional mutual-information bound.
\end{proof}

Disclosure during an abort is given by the following.
\begin{lemma}
\label{lem:abort-full-disclosure}
Suppose that \(2R\Delta\notin\mathbb Z\).
Then, the report $\hat p_t$ fully determines \(Q\). 
In other words, 
\[
     H(Q\mid \hat p _t)=0.
\]
\end{lemma}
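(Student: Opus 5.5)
The approach is a lattice, or support-separation, argument. The report is $\hat p_t=\clip_{[0,1]}(\bar p_t^{(R)}+\Delta Q)$, and $\bar p_t^{(R)}=k/R$ for some integer $k\in\{0,\ldots,R\}$. So under $Q=+1$ the unclipped report lies in the grid $\tfrac1R\mathbb Z+\Delta$, and under $Q=-1$ it lies in $\tfrac1R\mathbb Z-\Delta$. These two cosets of $\tfrac1R\mathbb Z$ coincide only if $2\Delta\in\tfrac1R\mathbb Z$, that is, only if $2R\Delta\in\mathbb Z$. The hypothesis $2R\Delta\notin\mathbb Z$ therefore makes them disjoint.

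\textbf{Step 1: disjoint supports.} I will show that the supports $S_+$ and $S_-$ of $\hat p_t$ under $Q=\pm1$ are disjoint. Suppose $k/R+\Delta=k'/R-\Delta$. Then $2R\Delta=k'-k\in\mathbb Z$, which contradicts the hypothesis. This holds for the unclipped values.

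\textbf{Step 2: decode $Q$.} Since $S_+\cap S_-=\emptyset$, the map $\hat p_t\mapsto\pm1$ according to membership in $S_\pm$ is a deterministic function $f$ with $f(\hat p_t)=Q$ almost surely. Hence $H(Q\mid\hat p_t)=0$. This conclusion also holds after conditioning on $X=x$ and on any entering history, because the grid structure is the same at every history. In particular, under the abort event the verifier's view contains $\hat p_t$, so the view determines $Q$ and the disclosure equals $h_x$.

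\textbf{Main obstacle: clipping.} The value $0$ is reachable under $Q=-1$ (from $k/R\le\Delta$) and could also be reached under $Q=+1$ only if $k/R+\Delta\le0$. That is impossible for $\Delta>0$, so $0$ lies outside $S_+$; symmetrically, $1$ lies outside $S_-$. The remaining concern is whether clipping places a value of one coset onto an interior point of the other coset. It cannot, since clipping only produces the endpoints $0$ and $1$. To make this clean, I will state $S_+\subseteq(\tfrac1R\mathbb Z+\Delta)\cup\{1\}$ and $S_-\subseteq(\tfrac1R\mathbb Z-\Delta)\cup\{0\}$, and check all four cross-intersections.

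\textbf{Checking the intersections.}
\begin{itemize}
\item $(\tfrac1R\mathbb Z+\Delta)\cap(\tfrac1R\mathbb Z-\Delta)=\emptyset$, by Step 1.
\item $1\in\tfrac1R\mathbb Z-\Delta$ would require $R\Delta\in\mathbb Z$, which would give $2R\Delta\in\mathbb Z$; this is excluded.
\item $0\in\tfrac1R\mathbb Z+\Delta$ is excluded in the same way.
\item $0\ne1$.
\end{itemize}

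This case analysis is the only delicate step. It also explains why the non-integrality hypothesis is phrased as it is.
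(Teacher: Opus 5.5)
Your proof is correct and uses the same grid-disjointness argument as the paper: $k/R+\Delta=\ell/R-\Delta$ would force $2R\Delta\in\mathbb Z$. Your handling of clipping goes further than the paper, which writes $\hat p_t=K_t/R+Q\Delta$ and silently drops the $\clip_{[0,1]}$ from the report's definition; your four-way check using $S_+\subseteq(\tfrac1R\mathbb Z+\Delta)\cup\{1\}$ and $S_-\subseteq(\tfrac1R\mathbb Z-\Delta)\cup\{0\}$ closes that gap.
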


\begin{proof}
Recall that the public report upon abort is
\[
    \hat p_t=\frac{K_t}{R}+Q\Delta,
    \qquad K_t\in\{0,\ldots,R\},
\]
Under the two hypotheses, the report belongs respectively to the grids
\[
    \mathcal G_+
    =\left\{\frac{k}{R}+\Delta:0\leq k\leq R\right\},
    \qquad
    \mathcal G_-
    =\left\{\frac{k}{R}-\Delta:0\leq k\leq R\right\}.
\]
If these grids intersect, there exist integers \(k,\ell\) such that
\[
    \frac{k}{R}+\Delta
    =
    \frac{\ell}{R}-\Delta,
\]
which implies
\[
    2R\Delta=\ell-k\in\mathbb Z.
\]
The assumed nonintegrality therefore makes the grids disjoint.
Membership in the observed grid determines \(Q\).
It follows that \(H(Q\mid \hat p _t)=0\). 
\end{proof}

\subsection{Full debate disclosure}

We are now in a position to proceed with the main proof of information disclosure.
Let $h_2$ denote the binary entropy function.
\begin{theorem}
\label{thm:abort-partitioned-disclosure}
For $\beta\in(0,\1/2)$ and $\mu_0 \in (\beta, 1-\beta)$, 
let
\(
    k_\beta=\frac{2\beta(1-\beta)}{\ln 2}.
\)
Then 
\begin{equation*}
    I(Q;\Pi_V(\rvh)\mid X=x)
    \geq
    \frac{h_x -h_2(\beta)}{2} \min \left\{1, \frac{k_\beta \Delta^2 T}{1-h_2(\beta)}\right\}.
\end{equation*}
\end{theorem}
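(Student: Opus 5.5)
Since the transcript type reveals the abort indicator and the abort round, I will split the disclosure along the abort event:
\[
    I(Q;\Pi_V(\rvh)) \ge \Pr[\mathcal{K}]\, I(Q;\Pi_V\mid \mathcal{K}) + \Pr[\neg\mathcal{K}]\, I(Q;\Pi_V\mid\neg\mathcal{K}).
\]
This follows from the chain rule, because the indicator is a function of $\Pi_V$:
\[
    I(Q;\Pi_V)=I(Q;\ind_{\mathcal{K}})+I(Q;\Pi_V\mid \ind_{\mathcal{K}}) \ge I(Q;\Pi_V\mid \ind_{\mathcal{K}}).
\]

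On the abort branch, Lemma~\ref{lem:abort-full-disclosure} applies. Assuming $2R\Delta\notin\mathbb{Z}$, which can be arranged by the choice of $R$, the disputed report $\hat p_t$ determines $Q$. So $I(Q;\Pi_V\mid\neg\mathcal{K}) = H(Q\mid\neg\mathcal{K})$. I will lower-bound this by $h_x-h_2(\beta)$ under the assumption that the conditional prior stays in $[\beta,1-\beta]$ (the same confidence-truncation device used for $\mu_t$). Otherwise I will argue that $H(Q\mid \neg\mathcal K)\ge h_x - h_2(\beta)$ directly, when the conditioning event shifts the prior by a bounded amount.

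On the non-abort branch, the view is $(\erva_1,\ldots,\erva_T)$, so the chain rule gives
\[
    I(Q;\va\mid\mathcal{K})=\sum_{t=1}^T I(Q;\ervy_t\mid \rvh_{t-1},\mathcal{K}_{t-1}).
\]
By Lemma~\ref{lem:single-bit-disclosure}, each term is at least $k_\beta\Delta^2$ as long as the posterior satisfies $\mu_t\in(\beta,1-\beta)$. Here I will use a stopping-time (confidence truncation) argument. Let $\kappa$ be the first round at which $\mu_t$ leaves $(\beta,1-\beta)$. Before $\kappa$, information accrues at rate at least $k_\beta\Delta^2$ per round. At $\kappa$, the posterior entropy is at most $h_2(\beta)$, so at least $h_x-h_2(\beta)$ bits have been disclosed in expectation. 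Combining these by identity \eqref{eq:posterior-update-identity} and the martingale property of expected KL increments, I expect to get
\[
    I(Q;\va\mid\mathcal K)\ge \min\{h_x-h_2(\beta),\, k_\beta\Delta^2\,\mathbb{E}[\kappa\wedge T]\}.
\]
Bounding $\mathbb{E}[\kappa\wedge T]$ from below then requires a case split. If $\Pr[\kappa\le T]\ge 1/2$, the truncated-exit term already gives $(h_x-h_2(\beta))/2$. Otherwise, $\mathbb{E}[\kappa\wedge T]\ge T/2$. The normalization by $1-h_2(\beta)$ comes from comparing the accrued information to the maximal remaining entropy, at most $1$ bit, before exit. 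This case split is where the factor $\tfrac12$ and the $\min\{1,\cdot\}$ in the statement should come from.

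Finally, I will combine the two branches. Each conditional disclosure is at least $\frac{h_x-h_2(\beta)}{2}\min\{1,\frac{k_\beta\Delta^2T}{1-h_2(\beta)}\}$, since the abort branch gives the full $h_x-h_2(\beta)$, which dominates. A convex combination with weights $\Pr[\mathcal K],\Pr[\neg\mathcal K]$ therefore preserves the bound.

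I expect the main obstacle to be the stopping-time step. Lemma~\ref{lem:single-bit-disclosure} holds only while $\mu_t$ stays in the interior, and the posteriors are conditioned on survival $\mathcal{K}_{t-1}$, which itself depends on $Q$ through the perturbation. I would first try to handle this by treating survival as part of the observed history, which is legitimate since it is visible to $V$. I would then apply optional stopping to the submartingale $\sum_{s\le t}\bigl(\mathbb E[D_{\mathsf{KL}}]-k_\beta\Delta^2\bigr)$ up to $\kappa\wedge T$.
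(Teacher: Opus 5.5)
Your ingredients are the paper's: stop when the posterior leaves $(\beta,1-\beta)$, accrue at least $k_\beta\Delta^2$ per observed round via Lemma~\ref{lem:single-bit-disclosure}, and use posterior entropy at most $h_2(\beta)$ at exit and zero at abort. The way you combine them fails, however, in two places.

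The first failure is the case split at $\frac{1}{2}$. Exiting the interval does not certify $h_x-h_2(\beta)$ bits. Suppose $\mu_0$ sits less than one posterior step (a step is of order $\Delta$) below $1-\beta$, and the first transcript bit equals $1$ with probability above $\frac{1}{2}$. Then with $T=1$ the posterior exits with probability above $\frac{1}{2}$ while only $O(\Delta^2)$ bits are revealed, whereas $(h_x-h_2(\beta))/2$ is of order $\Delta$. So the claim in your first case is false; the theorem itself survives, since its right-hand side is then $O(\Delta^3)$. What exit actually yields is the averaged bound $H(Q\mid\rvu)\le h_2(\beta)(1-q)+q$, where $q$ is the probability of completing $T$ rounds with neither exit nor abort. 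That is, $I(Q;\rvu)\ge (h_x-h_2(\beta))-(1-h_2(\beta))q$. At $q=\frac{1}{2}$ this falls short of $(h_x-h_2(\beta))/2$ by $(1-h_x)/2$ and can even be negative, because unexited paths may keep up to one bit of posterior entropy, which is more than $h_x$. The missing step is the paper's elimination of $q$. Keep this bound together with the accrual bound $I(Q;\rvu)\ge k_\beta\Delta^2\,\mathbb{E}[N]\ge k_\beta\Delta^2Tq$ ($N$ the number of observed rounds) for the same $q$. Multiply the first by $k_\beta\Delta^2T$ and the second by $1-h_2(\beta)$, then add. This gives $I(Q;\rvu)\ge (h_x-h_2(\beta))\,u/(1+u)$ with $u=k_\beta\Delta^2T/(1-h_2(\beta))$. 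The inequality $u/(1+u)\ge\frac{1}{2}\min\{1,u\}$ is where both the factor $\frac{1}{2}$ and the normalisation by $1-h_2(\beta)$ come from.

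The second failure is the opening split on $\ind_{\mathcal K}$. Discarding $I(Q;\ind_{\mathcal K})$ forces you to lower-bound each branch under its conditioned law, yet nothing controls $\Pr[Q=1\mid\mathcal K]$ or $\Pr[Q=1\mid\neg\mathcal K]$. At a fixed history the abort hazard does not depend on $Q$ (absent clipping). But the transcript bits are drawn from $\hat p_t$, so the law of the history does depend on $Q$, and with it the survival probability. If survival is much likelier under one value of $Q$, both $H(Q\mid\neg\mathcal K)$ and $H(Q\mid\mathcal K)$ can be far below $h_x-h_2(\beta)$; the latter caps the survival-branch disclosure. The disclosure then sits exactly in the term you dropped, and your fallback about a ``bounded shift'' of the prior has no supporting argument. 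In addition, conditioning on the whole event $\mathcal K$ tilts each round by the probability of surviving the remaining rounds. The chain rule then yields terms $I(Q;\ervy_t\mid\rvy_{<t},\mathcal K)$, not the quantities $I(Q;\ervy_t\mid\rvh_{t-1},\mathcal K_{t-1})$ controlled by Lemma~\ref{lem:single-bit-disclosure}, so your displayed sum is not an identity. The paper never conditions on abort. Abort is a third stopping rule for a single unconditional stopped transcript $\rvu$, and the chain rule is applied to that stopped process. Aborts enter only through $H(Q\mid\rvu)$, where they contribute zero by Lemma~\ref{lem:abort-full-disclosure}. Your own suggested remedy, treating survival as part of the observed history, amounts to exactly this and makes the branch split unnecessary.
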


\begin{proof}
Throughout the proof, condition on \(X=x\) and measure entropy in bits. 
Stop observing the protocol at the first of the following events:
\begin{enumerate}
    \item an abort;
    \item the posterior leaves \((\beta,1-\beta)\);
    \item completion of round \(T\).
\end{enumerate}
Denote the resulting stopped transcript by \(\rvu \subseteq \rvy \). 
On history $\rvh$, since \(\rvu\) is determined by $\Pi_V(\rvh)$, the data-processing inequality gives
\(
    I(Q;\Pi_V(\rvh) \mid X=x)
    \ge
    I(Q;\rvu \mid X=x).
\)
We bound the latter quantity.

Fix $t$ and an active pre-round history \(\rvh_{t-1}\), and let \(\mathcal{A}_t=\neg\mathcal{K}_t \mid \mathcal{K}_{t-1}\) indicate an abort in the current round.
Let \(\ervu_t\) be the single bit appearing at round $t$.

Let \(N \le T\) be the number of rounds observed in \(\rvu\).
Padding the transcript after stopping with a fixed symbol, the conditional chain rule gives
\begin{align*}
    I(Q;\rvu )
    &=
    \sum_{t=1}^T I(Q; \ervu_t | \rvh_{t-1}, \mathcal{K}_{t-1}) \\
    &=
    \sum_{t=1}^T I(Q; \ervy_t | \rvh_{t-1}, \mathcal{K}_{t-1}) \cdot \Pr[N\ge t] \\
    &\ge
    k_\beta \Delta^2 \sum_{t=1}^T \Pr(N \ge t) \\
    &=
    k_\beta \Delta^2\,\mathbb{E}[N],
\end{align*}
where the inequality follows from Lemma~\ref{lem:single-bit-disclosure}.
Let \(\mathcal U\) be the event that, after \(T\) rounds, the protocol has neither aborted nor reached the confidence threshold, and set
\(
    q = \Pr(\mathcal U).
\)
On \(\mathcal U\), all \(T\) rounds are observed. 
Hence
\(
    \mathbb{E}[N] \ge Tq,
\)
and therefore
\begin{align}
     I(Q;\rvu \mid \mathcal{U}) \ge k_{\beta}\Delta^2Tq.
     \label{eqn:MI-1}
\end{align}

Outside \(\mathcal U\), by Lemma~\ref{lem:abort-full-disclosure}, on abort posterior entropy is zero by full disclosure and on reaching the confidence threshold it is at most
\(
    h_2(\beta).
\)
Averaging over stopped transcripts gives
\begin{align*}
    H(Q \mid \rvu)
    &\le
    h_2(\beta)(1-q)+q \\
    &=
    h_2(\beta)+(1-h_2(\beta))q.
\end{align*}
Since
\[
     I(Q;\rvu ) = h_x - H(Q \mid \rvu),
\]
we obtain
\begin{align}
    I(Q;\rvu ) \ge h_x - h_2(\beta) - (1-h_2(\beta))q.
    \label{eqn:MI-second}
\end{align}

Bringing everything together, set
\(
    D = h_x-h_2(\beta),
\)
and
\(
    c = 1-h_2(\beta).
\)
The assumption \(\mu_0\in(\beta,1-\beta)\) ensures that \(D>0\). 
Inequalities \eqref{eqn:MI-1} and \eqref{eqn:MI-second} imply that
\begin{align*}
     I(Q;\rvu) \ge \max \left\{ k_{\beta}\Delta^2 Tq, D-cq \right\}
\end{align*}
Multiplying the first inequality by \(c\) and the second by \(k_{\beta}\Delta^2T\), then adding,
eliminates \(q\).
In other words,
\begin{align*}
    cI(Q;\rvu ) + k_{\beta}\Delta^2T I(Q;\rvu )
    &\ge
    ck_{\beta}\Delta^2Tq + k_{\beta}\Delta^2T(D-cq) \\
    &=
    k_{\beta}\Delta^2TD.
\end{align*}
Thus,
\begin{align*}
    (c+k_{\beta}\Delta^2T)I(Q;\rvu ) 
    &\ge k_{\beta}\Delta^2TD \\
    \therefore
    I(Q;\rvu ) &\ge D \cdot \frac{k_{\beta}\Delta^2}{c+k_{\beta}\Delta^2}
\end{align*}
Finally, for every \(u\ge 0\),
\[
    \frac{u}{1+u}
    \ge
    \frac{1}{2}\min\{1,u\}.
\]
Taking
\[
    u = \frac{k_\beta \Delta^2 T}{c}
    =
    \frac{k_\beta \Delta^2 T}{1-h_2(\beta)}
\]
and applying data processing yields
\[
    I(Q;\Pi_V(\rvh) \mid X=x)
    \ge
    \frac{h_x-h_2(\beta)}{2}
    \min\left\{
        1,\,
        \frac{k_\beta \Delta^2 T}{1-h_2(\beta)}
    \right\}.
\]
\end{proof}

In conclusion, Theorem~\ref{thm:full-disclosure} is a corollary of Theorem~\ref{thm:abort-partitioned-disclosure} for a fixed choice of $\beta$.

%% file: sections/proof_mitigation.tex
\section{Proof of Theorem~\ref{thm:hhd_drift_oracle}}

\begin{figure}[t]
\centering
\begin{minipage}{0.96\linewidth}
\small

\begin{center}
    \textbf{Debate protocol with error and drift aborts}
\end{center}

\footnotesize
\noindent
All parties have access to \(\oracle\), an input \(x\in\{0,1\}^n\), and a \(K\)-Lipschitz probabilistic oracle machine \(M\).
\(A\) claims that
\(
    \Pr\left[M(x)=1\right]\geq \frac{2}{3},
\)
and the cross-examiner \(B\) disputes this claim.
Let \(W\in [T]\) be the trailing-window length used by an abort drift challenge.

\begin{enumerate}[leftmargin=*, itemsep=2pt, topsep=2pt]

\item
Let
\(
    \rvy=(\ervy_1,\ldots,\ervy_T)
\)
be the random variable representing the transcript of $M(x)$ and set
\(
    d=\lceil 150K\rceil.
\)

\item
For debate round $t\in[T]$:

\begin{enumerate}

\item[(a)-(d)] Identical to the equivalent steps in Figure~\ref{fig:stochastic-oracle-debate}.

\item[(e)]
$B$ can $\textsf{abort error}$ if (it believes) $A$ has deviated on the most recent update or $\textsf{abort drift}$ if (it believes) $A$ has deviated cumulatively over the most recent $W$ updates and $t\ge W$.

\end{enumerate}

\item
If \textsf{abort error} was issued in round $t$:

\begin{enumerate}

\item
$V$ draws
\(
    192d^2\log 100
\)
independent samples of $\ervy_t$, conditioned on
\(
    \ervy_{I(t)}=\va_{I(t)},
\)
and uses their sample mean to obtain an estimate
$\hat p_t^{\oracle}$ of
\(
    \Pr\left[
        \ervy_t=1
        \middle|
        \rvy_{I(t)}=\va_{I(t)}
    \right]
\)

\item
$V$ checks whether
\(
    \left|
        \hat p_t^{\oracle}-\hat p_t
    \right|
    \geq
    \frac{1}{4d}.
\)
If the inequality holds, $V$ outputs $0$; otherwise, $V$ outputs $1$.

\end{enumerate}

\item
If \textsf{abort drift} was issued in round $t$:

\begin{enumerate}

\item
For each $s \in J= \{t-W+1, \ldots, t\}$,
$V$ draws
\(
    192d^2\log (100W)
\)
independent samples of $\ervy_s$, conditioned on
\(
    \ervy_{I(s)}=\va_{I(s)},
\)
and uses the sample mean to obtain an estimate $\hat p_s^{\oracle}$ of
\(
    \Pr\left[
        \ervy_s=1
        \middle|
        \rvy_{I(s)}=\va_{I(s)}
    \right]
\)

\item
$V$ checks whether
\(
    \left|\sum_{s \in J}
    \left(
        \hat p_s^{\oracle}-\hat p_s
    \right)\right|
    \geq
    \frac{\sqrt{N}}{4d}.
\)
If the inequality holds, $V$ outputs $0$; otherwise, $V$ outputs $1$.

\end{enumerate}

\item
If the protocol was not aborted, then the verifier outputs $a_T$.

\end{enumerate}

\end{minipage}
\caption{
Debate protocol for a stochastic oracle with error and drift aborts.
}
\label{fig:error-drift-stochastic-oracle-debate}
\end{figure}

\subsection{Concentration Bounds}

We require the conditional form of Hoeffding's Lemma.
\begin{lemma}[Hoeffding's Lemma]
\label{lem:conditional-hoeffding}
Let \(\mathcal{G}\) be a sigma-algebra, and let \(Y\) be a random variable satisfying
\(
    a \le Y \le b
\)
almost surely, where \(a\) and \(b\) are deterministic constants. 
If
\(
    \mathbb{E}[Y \mid \mathcal{G}] = 0,
\)
then, for every real \(\lambda\),
\[
    \mathbb{E}\left[
        \exp(\lambda Y)
        \middle|
        \mathcal{G}
    \right]
    \le
    \exp\left(
        \frac{\lambda^2(b-a)^2}{8}
    \right)
\]
almost surely.
\end{lemma}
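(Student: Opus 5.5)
We would prove the conditional Hoeffding lemma by reducing it to the classical unconditional argument. Since $a\le Y\le b$ almost surely and $\mathbb{E}[Y\mid\mathcal{G}]=0$, integrating gives $\mathbb{E}[Y]=0$, so $a\le 0\le b$. If $a=b$, then $Y=0$ almost surely and the claim is trivial. We therefore assume $a<b$ and set $\theta=-a/(b-a)\in[0,1]$.

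The first step is a pointwise convexity bound. For each $y\in[a,b]$, write $y$ as the convex combination $y=\frac{b-y}{b-a}a+\frac{y-a}{b-a}b$. Convexity of $u\mapsto e^{\lambda u}$ then gives
\[
    e^{\lambda Y}\le \frac{b-Y}{b-a}e^{\lambda a}+\frac{Y-a}{b-a}e^{\lambda b}
\]
almost surely. The right-hand side is affine in $Y$ with deterministic coefficients. Taking conditional expectations therefore uses only linearity and monotonicity of $\mathbb{E}[\,\cdot\mid\mathcal{G}]$ together with $\mathbb{E}[Y\mid\mathcal{G}]=0$, and yields
\[
    \mathbb{E}\left[e^{\lambda Y}\middle|\mathcal{G}\right]\le \frac{b}{b-a}e^{\lambda a}-\frac{a}{b-a}e^{\lambda b}
\]
almost surely. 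The bound is now a deterministic number, so no measurability issues remain, and the rest of the proof is purely analytic.

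The second step bounds this deterministic quantity. With $u=\lambda(b-a)$, rewrite it as $e^{\varphi(u)}$, where
\[
    \varphi(u)=-\theta u+\log\left(1-\theta+\theta e^{u}\right).
\]
Direct computation gives $\varphi(0)=0$ and $\varphi'(0)=-\theta+\theta=0$. For the second derivative, set $\pi(u)=\theta e^{u}/(1-\theta+\theta e^{u})\in[0,1]$. Then $\varphi''(u)=\pi(u)\bigl(1-\pi(u)\bigr)\le 1/4$. Taylor's theorem with Lagrange remainder gives $\varphi(u)\le u^2/8=\lambda^2(b-a)^2/8$, which is the claimed bound.

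We expect the main obstacle to be the second-derivative bound. It is the only nontrivial calculation, and it depends on recognising $\varphi''$ as the variance of a Bernoulli variable with parameter $\pi(u)$. The conditional aspect needs care only in the first step: we must use that $a$ and $b$ are deterministic, so the convexity bound is affine with constant coefficients, and that all inequalities are understood almost surely.
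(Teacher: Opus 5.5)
\paragraph{Verdict.} Your proof is correct. The paper does not prove this lemma; it quotes it as the standard conditional form of Hoeffding's lemma and uses it as a black box. Your argument supplies the classical proof that the paper omits: you bound $e^{\lambda Y}$ by the chord through $(a,e^{\lambda a})$ and $(b,e^{\lambda b})$, then show $\varphi(u)=-\theta u+\log(1-\theta+\theta e^{u})\le u^2/8$ from $\varphi(0)=\varphi'(0)=0$ and $\varphi''=\pi(1-\pi)\le 1/4$. The supporting details are also right: the reduction to $a\le 0\le b$, the degenerate case $a=b$, and positivity of $1-\theta+\theta e^{u}$ for $\theta\in[0,1]$.

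\paragraph{What your argument buys.} You stress that $a$ and $b$ must be deterministic, but your proof only needs them to be bounded, $\mathcal{G}$-measurable, with $b-a>0$. The modification is small:
\begin{itemize}
\item Pulling $\mathcal{G}$-measurable factors out of the conditional expectation turns the chord bound into $(1-\theta)e^{\lambda a}+\theta e^{\lambda b}$, now with a $\mathcal{G}$-measurable $\theta$.
\item The inequality $a\le 0\le b$ comes from $a\le \mathbb{E}[Y\mid\mathcal{G}]\le b$ instead of from integrating.
\item The analytic bound $\varphi(u)\le u^2/8$ holds uniformly over $\theta\in[0,1]$, so the final bound is unchanged.
\end{itemize}

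\paragraph{Why this matters for the paper.} In Lemma~\ref{lem:conditional-concentration-adaptive-means}, the paper applies this lemma to $(X_{t,i}-p_t)/R$. That variable ranges over $[-p_t/R,(1-p_t)/R]$, whose endpoints are $\mathcal{F}_{t-1}$-measurable rather than deterministic; only the width $1/R$ is deterministic. The statement as written does not literally cover this use. Falling back on the deterministic endpoints $\pm 1/R$ would lose a factor of $4$ in the exponent. Your argument, with the modification above, covers the use with the stated constant.
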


The following bounds will be used to bound ``good'' sampling behavior during protocol execution.
\begin{lemma}
\label{lem:conditional-concentration-adaptive-means}
Let \((\mathcal{F}_t)_{t\ge 0}\) be a filtration. 
For each round \(t\), let \(p_t\in[0,1]\) be \(\mathcal{F}_{t-1}\)-measurable. 
Conditional on \(\mathcal{F}_{t-1}\), suppose
\(
    X_{t,1},\ldots,X_{t,R}
\)
are independent Bernoulli random variables with common mean \(p_t\). 
Define the \(\mathcal{F}_t\)-measurable quantities
\(
    \hat p_t
    =
    R^{-1}\sum_{i=1}^R X_{t,i},
\)
and
\(
    D_t
    =
    \hat p_t-p_t.
\)
Then, almost surely, the following bounds hold.
\begin{enumerate}
    \item For every real \(\lambda\),
    \(
        \mathbb{E}\left[
            \exp(\lambda D_t)
            \middle|
            \mathcal{F}_{t-1}
        \right]
        \le
        \exp\left(\frac{\lambda^2}{8R}\right).
    \)

    \item For every \(u>0\),
    \(
        \Pr\left[
            |D_t|\ge u
            \middle|
            \mathcal{F}_{t-1}
        \right]
        \le
        2\exp(-2Ru^2).
    \)

    \item For any deterministic integers
    \(0\le a<b\) and every \(u>0\),
    \[
        \Pr\left[
            \left|
                \sum_{s=a+1}^b D_s
            \right|
            \ge u
            \middle|
            \mathcal{F}_a
        \right]
        \le
        2\exp\left(
            -\frac{2Ru^2}{b-a}
        \right).
    \]
\end{enumerate}
\end{lemma}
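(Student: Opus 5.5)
The statement to prove is Lemma~\ref{lem:conditional-concentration-adaptive-means}. The plan is the standard Chernoff--Hoeffding route, done conditionally on the filtration, with part~1 as the engine for parts~2 and~3.

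\textbf{Part 1.} Conditional on \(\mathcal{F}_{t-1}\), write \(D_t=\sum_{i=1}^R Y_{t,i}\) with \(Y_{t,i}=(X_{t,i}-p_t)/R\). Since \(p_t\) is \(\mathcal{F}_{t-1}\)-measurable, each \(Y_{t,i}\) has conditional mean zero and lies in \([-p_t/R,(1-p_t)/R]\), an interval of length \(1/R\). The endpoints are \(\mathcal{F}_{t-1}\)-measurable rather than deterministic, so I will apply Lemma~\ref{lem:conditional-hoeffding} via a regular conditional distribution given \(\mathcal{F}_{t-1}\), under which \(p_t\) is a constant. Conditional independence then factorises the conditional MGF into \(R\) factors. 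Each factor is at most \(\exp(\lambda^2/(8R^2))\), so the product is at most \(\exp(\lambda^2/(8R))\).

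\textbf{Part 2.} Apply conditional Markov to \(\exp(\lambda D_t)\): \(\Pr[D_t\ge u\mid\mathcal{F}_{t-1}]\le \exp(-\lambda u+\lambda^2/(8R))\). Optimising at \(\lambda=4Ru\) gives \(\exp(-2Ru^2)\). The same argument for \(-D_t\) gives the other tail, and the two tails sum to the factor \(2\).

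\textbf{Part 3.} Let \(S=\sum_{s=a+1}^b D_s\) and \(n=b-a\). I will show by backward induction using the tower property that \(\mathbb{E}[\exp(\lambda S)\mid\mathcal{F}_a]\le\exp(n\lambda^2/(8R))\).
\begin{itemize}
\item Condition first on \(\mathcal{F}_{b-1}\).
\item The partial sum \(\sum_{s=a+1}^{b-1}D_s\) is \(\mathcal{F}_{b-1}\)-measurable, because each \(D_s\) is \(\mathcal{F}_s\)-measurable and \(\mathcal{F}_s\subseteq\mathcal{F}_{b-1}\).
\item Pull this factor out and apply part~1 to \(D_b\).
\item Iterate down to \(\mathcal{F}_a\).
\end{itemize}
This is where the adaptivity matters: \(p_s\) may depend on earlier rounds, so the argument must be a supermartingale one, not an independence one. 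The key point is that the part~1 bound is uniform, with no dependence on \(p_t\). Markov's inequality with \(\lambda=4Ru/n\) then gives \(\exp(-2Ru^2/n)\) per tail, and a union over the two signs gives the factor \(2\).

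\textbf{Main obstacle.} The only delicate point is justifying the conditional Hoeffding step when the bounds \(a\le Y\le b\) depend on \(\mathcal{F}_{t-1}\), whereas Lemma~\ref{lem:conditional-hoeffding} asks for deterministic constants. The first fix to try is to work with regular conditional probabilities. If that is awkward, the fallback is to note that Hoeffding's lemma depends only on the range length, and that the shifted variable \(Y+p_t/R\in[0,1/R]\) has an \(\mathcal{F}_{t-1}\)-measurable mean, so the standard convexity proof carries through verbatim conditionally.
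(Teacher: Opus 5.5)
Your proposal is correct and follows essentially the same route as the paper: conditional Hoeffding plus conditional independence to bound the one-round MGF, Chernoff with \(\lambda=4Ru\) for the two tails, and the tower property with \(S_{a,b-1}\) being \(\mathcal{F}_{b-1}\)-measurable, iterated down to \(\mathcal{F}_a\), followed by \(\lambda=4Ru/(b-a)\). Your remark that the endpoints \([-p_t/R,(1-p_t)/R]\) are \(\mathcal{F}_{t-1}\)-measurable rather than deterministic is a point the paper's proof passes over, since Lemma~\ref{lem:conditional-hoeffding} is stated for deterministic constants. Either of your fixes, in particular rerunning the convexity argument conditionally with a fixed range length \(1/R\), closes that gap.
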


\begin{proof}
Conditional on \(\mathcal{F}_{t-1}\), each variable
\(
    \frac{X_{t,i}-p_t}{R}
\)
has mean zero and lies in the interval
\(
    \left[-{p_t}/{R}, {1-p_t}/{R}\right],
\)
whose length is \(1/R\). 
By Lemma~\ref{lem:conditional-hoeffding},
\[
    \mathbb{E}\left[
        \exp\left(
            \lambda \frac{X_{t,i}-p_t}{R}
        \right)
        \middle|
        \mathcal{F}_{t-1}
    \right]
    \le
    \exp\left(\frac{\lambda^2}{8R^2}\right).
\]
Conditional independence of the \(R\) samples therefore yields
\begin{align}
    \mathbb{E}\left[
        \exp(\lambda D_t)
        \middle|
        \mathcal{F}_{t-1}
    \right]
    &=
    \prod_{i=1}^R
    \mathbb{E}\left[
        \exp\left(
            \lambda \frac{X_{t,i}-p_t}{R}
        \right)
        \middle|
        \mathcal{F}_{t-1}
    \right]\nonumber \\
    &\le
    \exp\left(\frac{\lambda^2}{8R}\right). \label{eqn:first-assertion}
\end{align}
This proves the first assertion.
For \(\lambda>0\), the Markov inequality  gives
\[
    \Pr\left[
        D_t \ge u
        \middle|
        \mathcal{F}_{t-1}
    \right]
    \le
    \exp\left(
        -\lambda u + \frac{\lambda^2}{8R}
    \right).
\]
Choosing \(\lambda = 4Ru\) gives the upper-tail bound
\[
    \Pr\left[
        D_t \ge u
        \middle|
        \mathcal{F}_{t-1}
    \right]
    \le
    \exp(-2Ru^2).
\]
Applying the same argument to \(-D_t\) and adding the two tail bounds proves the second assertion.

For the third assertion, write
\[
    S_{a,b}
    =
    \sum_{s=a+1}^b D_s .
\]
Since \(S_{a,b-1}\) is \(\mathcal{F}_{b-1}\)-measurable, \eqref{eqn:first-assertion} implies
\[
\begin{aligned}
    \mathbb{E}\left[
        \exp(\lambda S_{a,b})
        \middle|
        \mathcal{F}_a
    \right]
    &=
    \mathbb{E}\left[
        \exp(\lambda S_{a,b-1})
        \mathbb{E}\left[
            \exp(\lambda D_b)
            \middle|
            \mathcal{F}_{b-1}
        \right]
        \middle|
        \mathcal{F}_a
    \right] \\
    &\le
    \exp\left(\frac{\lambda^2}{8R}\right)
    \mathbb{E}\left[
        \exp(\lambda S_{a,b-1})
        \middle|
        \mathcal{F}_a
    \right].
\end{aligned}
\]
Iterating over the \(b-a\) rounds gives
\[
    \mathbb{E}\left[
        \exp(\lambda S_{a,b})
        \middle|
        \mathcal{F}_a
    \right]
    \le
    \exp\left(
        \frac{(b-a)\lambda^2}{8R}
    \right).
\]

The Markov inequality at
\(
    \lambda = \frac{4Ru}{b-a},
\)
gives
\[
    \Pr\left[
        S_{a,b} \ge u
        \middle|
        \mathcal{F}_a
    \right]
    \le
    \exp\left(
        -\frac{2Ru^2}{b-a}
    \right).
\]
Applying the same argument to \(-S_{a,b}\) and adding the two tails completes
the proof.
\end{proof}

\subsection{Protocol Completeness}

Given the honest behaviour by both parties,
we assume that the transcript update is a fresh Bernoulli draw with parameter \(\hat p_t\), and that \(p_t = \Pr[\ervy_t \mid \rvy_{I(t) = \va_{I(s)}}\) is the corresponding true transition probability of \(M\). 
The protocol uses full windows
\(
    J_t = \{t-W+1,\ldots,t\},
\)
with drift challenges permitted only for \(t\ge W\).
The verifier's drift statistic must sum the \(s\)-indexed discrepancies. 
That that, for probabilistic Turing machine $M$, the standard bounded-error guarantee is
\(
    \Pr[M^\oracle(x)=1]\ge \frac{2}{3}.
\)

\begin{lemma}[Task correctness]\label{lem:task-correctness}
Let $M$ be a $K$-Lipschitz probabilistic Turing machine that decides $L$
\(
    R \ge 192d^2 \log(100T),
\)
For any \(x\in L\), any fixed window length \(W\in [T]\), honest strategy \(\sigma_A\sim (R,0)\), and arbitrary cross-examiner strategy \(\sigma_B\),
\[
    \Pr[V \to 1] \ge \frac{3}{5}.
\]
\end{lemma}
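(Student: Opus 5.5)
We will prove Lemma~\ref{lem:task-correctness} by splitting on whether the cross-examiner ever aborts. Let $\mathcal{E}'$ be the good event on which three things hold. First, every honest report satisfies $|\hat p_t-p_t|<\frac{1}{16d}$. Second, every single-round verifier estimate satisfies $|\hat p_t^{\oracle}-p_t|<\frac{1}{8d}$. Third, for every $t\ge W$, the window sum of honest report errors obeys $|\sum_{s\in J_t}(\hat p_s-p_s)|<\frac{\sqrt W}{8d}$, and likewise for the verifier errors $\sum_{s\in J_t}(\hat p_s^{\oracle}-p_s)$.

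On $\mathcal{E}'$, any \textsf{abort error} at round $t$ gives $|\hat p_t^{\oracle}-\hat p_t|<\frac{1}{16d}+\frac{1}{8d}<\frac{1}{4d}$, so $V$ outputs $1$. Any \textsf{abort drift} gives
\[
\Bigl|\sum_{s\in J}(\hat p_s^{\oracle}-\hat p_s)\Bigr|<\frac{\sqrt W}{4d},
\]
so $V$ again outputs $1$; here we read the threshold's $N$ as $W$. Hence on $\mathcal{E}'$ every abort, whatever $\sigma_B$ does, results in acceptance. Because $\sigma_B$ is arbitrary, we never need to analyse when $B$ chooses to abort.

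On $\mathcal{E}'$ with no abort, $V$ outputs $a_T$. Each $a_t$ is drawn as $\Bern(\hat p_t)$ with $|\hat p_t-p_t|<\frac{1}{16d}\le\frac1d$. We run the projection-and-coupling simulation from Lemma~\ref{lem:task-stability}: replace the reports by their projections onto $[p_t-\frac1d,p_t+\frac1d]$, which leaves them unchanged on $\mathcal{E}'$. Fact~\ref{fact:lipschitz-oracle-stability} then gives a terminal acceptance probability of at least $\alpha^\star-K/d\ge\frac23-\frac1{150}$.

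Combining, let $\tilde X$ be the simulation's output. Since $\{V\to1\}\supseteq\mathcal{E}'\cap(\{\text{abort}\}\cup\{\tilde X=1\})\supseteq \mathcal{E}'\cap\{\tilde X=1\}$, we get
\[
\Pr[V\to1]\ge \Pr[\tilde X=1]-\Pr[\neg\mathcal{E}']\ge \tfrac23-\tfrac1{150}-\Pr[\neg\mathcal{E}'].
\]
It therefore suffices to show $\Pr[\neg\mathcal{E}']\le \frac23-\frac1{150}-\frac35=\frac{3}{50}$.

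The remaining work, and the main obstacle, is bounding $\Pr[\neg\mathcal{E}']$. The single-round parts follow as in Lemma~\ref{lem:good_est}: use Lemma~\ref{lem:conditional-concentration-adaptive-means}(2) with a union bound over $T$ rounds, where $R\ge192d^2\log(100T)$ gives failure probability $O(1/100)$. The verifier's single-round estimate uses $192d^2\log100$ samples, so its failure probability is $O(10^{-4})$ per (single) invoked abort. The window sums are adaptive: $p_s$ depends on earlier sampled bits, and the same bits feed the verifier's draws. This is exactly what the martingale bound Lemma~\ref{lem:conditional-concentration-adaptive-means}(3) handles. Applied with $b-a=W$ and $u=\sqrt W/(8d)$, it gives $2\exp(-R/(32d^2))$ for $A$'s errors, and a union bound over at most $T$ windows costs a factor $T$, which $R$ absorbs.

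The verifier's drift samples number $192d^2\log(100W)$ per coordinate. Conditional on the fixed transcript they are independent across $s$, so the same bound with $R$ replaced by that count gives failure probability at most $2\exp(-6\log(100W))$. Only one drift challenge is ever adjudicated, so no union bound over windows is needed for the verifier.

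The delicate point is the adaptivity: $p_s$ is measurable with respect to the past filtration generated by the transcript. We would set $\mathcal{F}_t$ to be generated by all samples and bits through round $t$. One must also check that the sample sizes make the sum of all failure probabilities at most $3/50$. Summing the pieces gives roughly $\frac{2}{100}+\frac{2}{100^{6}}+O(10^{-4})$, which is well below $\frac3{50}$. This concludes the bound.
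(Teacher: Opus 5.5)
Your proposal is correct, and its skeleton matches the paper's proof. Both couple the protocol to a challenge-ignoring simulated execution. Both define a good event that controls the honest reporter's single-round errors and its window sums, the latter via the adaptive bound of Lemma~\ref{lem:conditional-concentration-adaptive-means}(3). Both observe that on this event every challenge of either type is resolved with $V\to 1$ unless the verifier's fresh audit samples are bad. Both charge that audit failure only once, because a single challenge is adjudicated and its samples are independent of the pre-audit history.

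The one genuine difference is how you lower-bound the acceptance probability of the simulated execution. You project the reports onto $[p_t-\frac1d,p_t+\frac1d]$ and invoke Fact~\ref{fact:lipschitz-oracle-stability}, which costs $K/d\le\frac{1}{150}$. The paper needs neither the projection nor the Lipschitz property. Since $a_t$ is drawn as $\Bern(\hat p_t)$ and the honest $\hat p_t$ is the mean of fresh $\Bern(p_t)$ samples, $\Pr[a_t=1\mid\rvh_{t-1}]=\mathbb{E}[\hat p_t\mid\rvh_{t-1}]=p_t$. The simulated transcript therefore has exactly the law of $M^{\oracle}(x)$, and its output is $1$ with probability $\alpha^\star\ge\frac23$.

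That observation is sharper, and it shows the completeness bound holds for honest reporting with no Lipschitz assumption at all. Your route spends a little of the available slack, but it would extend to any reporter whose reports stay within $1/d$ of $p_t$, whether or not they are unbiased. Your single-round tolerance $\frac{1}{16d}$ is also stronger than needed. The tolerance $\frac{1}{8d}$ already gives a strict discrepancy below $\frac{1}{4d}$ at an \textsf{abort error} audit, and that is what the paper uses.
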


\begin{proof}

Simulate an auxiliary execution $\mathcal{S}$ that generates all \(T\) transcript updates using the reporting strategy $\sigma_A$, ignoring any challenge for purposes of continuing the transcript. 
Let $\rvy^{\mathsf{s}}$ denote the random transcript of the simulated execution and before each round \(t\), let \(\rvh_{t-1}^{\mathsf{s}}\) contain the complete history on the simulated execution. 
Conditional on \(\rvh_{t-1}^{\mathsf{s}}\), the honest reporter draws \(R\) fresh, independent Bernoulli samples
\(
    X_{t,1},\ldots,X_{t,R},
\)
each with mean \(p_t\), and reports
\[
    \hat p_t^{\mathsf{s}}
    =
    \frac{1}{R}\sum_{j=1}^R X_{t,j}.
\]
The next transcript bit \(\ervy_t^{\mathsf{s}}\) is generated using independent randomness with success probability \(\hat p_t^{\mathsf{s}}\). 
Consequently,
\begin{align*}
    \Pr[\ervy_t^{\mathsf{s}}=1\mid \rvh_{t-1}^{\mathsf{s}}]
    &=
    \mathbb{E}[\hat p_t^{\mathsf{s}}\mid \rvh_{t-1}^{\mathsf{s}}] 
    =
    p_t.
\end{align*}
Thus, the simulated transcript has exactly the transition probabilities of the true execution of
\(M^\oracle(x)\). 
If \(\ervy_T^{\mathsf{s}}\) denotes its terminal output, then
\[
    \Pr[\ervy_T^{\mathsf{s}}=1]
    =
    \Pr[M^\oracle(x)=1]
    \ge
    \frac{2}{3}.
\]

Couple the actual protocol to this simulated execution using the same samples and transcript-update randomness until the actual protocol stops. 
If no challenge occurs, the actual output equals \(\ervy_T^\simulate = \eva_T^\simulate\).
The simulated continuation after a challenge is used only for analysis and is
generated independently of the subsequent audit randomness of $V$.

Define the reporting error
\(
    D_t^\simulate = \hat p_t^\simulate-p_t.
\)
For each round \(t\), let \(\mathcal{E}_t^{(1)}\) be the event
\(
    |D_t^\simulate| < \frac{1}{8d}.
\)
For each \(t\ge W\), let \(\mathcal{E}_t^{(2)}\) be the event
\[
    \left|
        \sum_{s\in J_t} D_s^\simulate
    \right|
    <
    \frac{\sqrt{W}}{8d}.
\]
Let \(\mathcal{G}^\simulate\) be the intersection of all these individual and window events in the simulated execution.

By Lemma~\ref{lem:conditional-concentration-adaptive-means}, for every round \(t\), 
\begin{align}
    \Pr[\neg \mathcal{E}_t^{(1)} \mid \rvh_{t-1}^\simulate] 
    =
    \Pr\left[
        |D_t^\simulate|\ge \frac{1}{8d}
        \middle|
        \rvh_{t-1}^\simulate
    \right]
    \le
    2\exp\left(-\frac{R}{32d^2}\right).
    \nonumber
\end{align}
Similarly, for window errors, Lemma~\ref{lem:conditional-concentration-adaptive-means} gives, for every $u >0$,
\[
    \Pr\left[
        \left|\sum_{s\in J_t}D_s^\simulate\right|\ge u \mid \rvh_{t-W}^\simulate
    \right]
    \le
    2\exp\left(-\frac{2Ru^2}{W}\right).
\]
Therefore, it follows that
\[
    \Pr[\neg \mathcal{E}_t^{(2)} \mid \rvh_{t-W}^\simulate ]
    \le
    2\exp\left(-\frac{R}{32d^2}\right).
\]
There are \(T\) individual events and at most \(T\) window events. Therefore, by
a union bound,
\begin{align}
    \Pr[\neg \mathcal{G}^\simulate]
    &\le
    4T\exp\left(-\frac{R}{32d^2}\right) \nonumber\\
    &\le
    4T(100T)^{-6}.\label{eqn:not_G}
\end{align}

Let $\mathcal{A}_t^{\mathsf{(e)}}$ denote the event that  \textsf{abort error} occurs at time $t$.
Condition on a complete pre-audit history $\rvh^\simulate$ at which $\mathcal{A}_t^{\mathsf{(e)}}$ occurs.
This fixes \(p_t\) and \(\hat p_t\).
Suppose 
\(
    \mathcal{G}^\simulate
\)
occurs.
If $V$ rejects, then
\[
    |\hat p_t^\oracle-\hat p_t|
    \ge
    \frac{1}{4d},
\]
so the triangle inequality implies
\[
    |\hat p_t^\oracle-p_t|
    >
    \frac{1}{8d}.
\]
$V$ uses
\(
    r^{(\mathsf{e})}
    =
    \left\lceil 192d^2\log 100 \right\rceil
\)
fresh, independent samples. 
Therefore, by Lemma~\ref{lem:conditional-concentration-adaptive-means},
\begin{align}
    \Pr[V \to 0\mid \rvh^\simulate, \mathcal{G}^\simulate, \mathcal{A}_t^{\mathsf{(e)}}]
    &\le
    2\exp\left(-\frac{r^{(\mathsf{e})}}{32d^2}\right)
    \le
    2\cdot 100^{-6},
    \label{eqn:verifier-abort}
\end{align}
under the protocol execution couple to the simulated execution.

Let $\mathcal{A}_t^{\mathsf{(d)}}$ denote the event that  \textsf{abort drift} occurs at time $t\ge W$.
As above, now condition on a complete pre-audit history $\rvh^\simulate$ at which $\mathcal{A}_t^{\mathsf{(d)}}$ occurs.
Suppose $\mathcal{G}^\simulate$ occurs.
For each \(s\in J_t\), $V$ draws
\(
    r^{(\mathsf{d})}
    =
    \left\lceil 192d^2\log(100W)\right\rceil
\)
fresh samples, independently across all audited rounds and sample indices.
Define
\[
    S_V
    =
    \sum_{s\in J_t}(\hat p_s^\oracle-p_s).
\]
Conditioned on $\rvh^\simulate$, \(S_V\) is a sum of \(Wr^{(\mathsf{d})}\) independent centered variables, each having range length \(1/r^{(\mathsf{d})}\). 
Hence, by Lemma~\ref{lem:conditional-concentration-adaptive-means}
\[
    \Pr\left[
        |S_V|\ge u
        \middle|
        \rvh^\simulate, \mathcal{G}^\simulate, \mathcal{A}_t^{\mathsf{(d)}}
    \right]
    \le
    2\exp\left(-\frac{2r^{(\mathsf{d})}u^2}{W}\right).
\]
If
\(
    |S_V| < {\sqrt{W}}/{8d},
\)
the triangle inequality gives
\[
    \left|
        \sum_{s\in J_t}(\hat p_s^\oracle-\hat p_s)
    \right|
    <
    \frac{\sqrt{W}}{4d}.
\]
Therefore, $V$ accepts the challenge audit. 
Consequently,
\begin{align}
    \Pr[V \to 0 \mid \rvh^\simulate, \mathcal{G}^\simulate, \mathcal{A}_t^{\mathsf{(d)}}]
    \le
    2\exp\left(-\frac{r^{(\mathsf{d})}}{32d^2}\right) 
    \le
    2(100W)^{-6} 
    \le
    2\cdot 100^{-6}.\label{eqn:v-reject-drift}
\end{align}

Let \(\mathcal{J}\) be the event that a challenge occurs and the verifier outputs \(0\).
By \eqref{eqn:verifier-abort} and \eqref{eqn:v-reject-drift},
\begin{align}
    \Pr[\mathcal{J}\cap \mathcal{G}] 
    &\le
    \sum_{t,\mathsf{c}} \Pr[\mathcal{A}_t^{(\mathsf{c})} \cap \mathcal{G}^\simulate] \Pr[V \to 0 \mid \mathcal{A}_t^{(\mathsf{c})} \cap \mathcal{G}^\simulate] \nonumber \\
    &\le  2\cdot 100^{-6}  \sum_{t,\mathsf{c}} \Pr[\mathcal{A}_t^{(\mathsf{c})} \cap \mathcal{G}^\simulate] \nonumber\\
    &\le 
    2\cdot 100^{-6}. \label{eqn:eventJcapG}
\end{align}
Conditional on each pre-audit history, the audit randomness is independent of the auxiliary continuation. 
Consequently, further conditioning on $\mathcal{G}^\simulate$ does not change the conditional law of the audit samples.

Under the coupling, if \(\ervy_T^\simulate=1\) and the actual protocol outputs \(0\), then a challenge must have occurred and been rejected. 
Indeed, without a challenge, the actual output equals \(\ervy_T^\simulate=1\). 
Thus, by \eqref{eqn:not_G} and \eqref{eqn:eventJcapG},
\begin{align*}
    \Pr[V\to 1]
    &\ge
    \Pr[\ervy_T^\simulate=1]-\Pr[\mathcal{J}] \\
    &\ge
    \Pr[\ervy_T^\simulate=1]-\Pr[\neg \mathcal{G}^\simulate]-\Pr[\mathcal{J}\cap \mathcal{G}^\simulate] \\
    &\ge
    \frac{2}{3} - 4T(100T)^{-6} - 2\cdot 100^{-6} > \frac{3}{5}
\end{align*}
The argument holds for every legal cross-examiner strategy \(\sigma_B\), proving the lemma.
\end{proof}

\subsection{Smaller Admissible Region}

For the second part of Theorem~\ref{thm:hhd_drift_oracle} we require the following.
\begin{lemma}
\label{lem:poor_strategy}
Let \(L\) be any language decided by a Turing machine \(M\) and \(W\in[T]\) be the sliding-window parameter.
For 
\(
    R \ge 192 d^2 \log(100T),
\)
\(
    \Delta \ge 1/(d\sqrt{W}),
\)
\(
    \tau_1 = 1/(2d),
\)
and
\(
    \tau_2 = \sqrt{W}/(2d),
\)
let $\sigma_A \sim (R, \Delta), $ and $\sigma_B \sim (R, \tau_1, \tau_2)$. 
\[
    \Phi_A(x, \sigma_A \circ \sigma_B) = \Pr[V\gets 1] <\frac{1}{150}.
\]
\end{lemma}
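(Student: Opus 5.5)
Our plan is to show that, under the perturbed strategy $\sigma_A\sim(R,\Delta)$ with $\Delta\ge 1/(d\sqrt W)$, the honest cross-examiner issues an \textsf{abort drift} challenge at the first eligible round $t=W$ (or earlier via \textsf{abort error}) with high probability, and that the verifier then rejects with high probability. We will define a good event $\mathcal{G}$ under which the concentration bounds of Lemma~\ref{lem:conditional-concentration-adaptive-means} hold for $A$'s unperturbed estimates $\bar p_s$, $B$'s estimates $\hat q_s$, and the verifier's audit means $\hat p_s^{\oracle}$. We will then argue that on $\mathcal{G}$ acceptance is impossible. This gives $\Pr[V\to 1]\le \Pr[\neg\mathcal{G}]$, and the choice $R\ge 192d^2\log(100T)$ makes that failure probability below $1/150$.

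\textbf{Step 1 (good event).} For each round, we require $|\bar p_s-p_s|<1/(8d)$ and $|\hat q_s-p_s|<1/(8d)$. For the window $J=\{1,\dots,W\}$, we require $|\sum_{s\in J}(\bar p_s-p_s)|<\sqrt W/(8d)$ and $|\sum_{s\in J}(\hat q_s-p_s)|<\sqrt W/(8d)$. For the audit, we require $|\sum_{s\in J}(\hat p_s^{\oracle}-p_s)|<\sqrt W/(8d)$, together with the single-round analogue for an error audit. Parts 2 and 3 of Lemma~\ref{lem:conditional-concentration-adaptive-means}, combined with a union bound exactly as in \eqref{eqn:not_G}, \eqref{eqn:verifier-abort} and \eqref{eqn:v-reject-drift}, bound $\Pr[\neg\mathcal{G}]$ by roughly $4T(100T)^{-6}+2\cdot 100^{-6}\ll 1/150$.

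\textbf{Step 2 (the challenge fires).} Because $Q$ is fixed for the whole debate, the perturbation $Q\Delta$ has a constant sign. Ignoring clipping for the moment, summing over the window gives
\[
\textstyle\sum_{s\in J}(\hat p_s-\hat q_s)=QW\Delta+\sum_{s\in J}(\bar p_s-\hat q_s).
\]
On $\mathcal{G}$ the first term has magnitude at least $\sqrt W/d$, while the noise term is below $\sqrt W/(4d)$. The discrepancy therefore exceeds the drift threshold, so an honest $B$ with threshold $\tau_2$ challenges; I read the statement's $\tau_2$ as the threshold on the sum, scale $\sqrt W/(2d)$. Clipping can only occur if $\bar p_s+Q\Delta$ leaves $[0,1]$. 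To deal with it, we either adopt the assumption $p_t\in(0.01,0.99)$ used earlier, or note that clipping toward the boundary still preserves the sign. The latter requires showing that the clipped deviation remains at least of order $\Delta$ relative to $p_s$; I expect this to need the boundedness assumption.

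\textbf{Step 3 (the verifier rejects).} If instead an \textsf{abort error} fires first at some round, then the individual discrepancy satisfies $|\hat p_t-\hat q_t|\ge 1/(2d)$. On $\mathcal{G}$ this forces $|\hat p_t-p_t|\ge 3/(8d)$, and hence $|\hat p_t^{\oracle}-\hat p_t|\ge 1/(4d)$, as in \eqref{eqn:good-est-abort}; so the verifier rejects. If the drift audit fires, then on $\mathcal{G}$ we have $|\sum_s(\hat p_s^{\oracle}-\hat p_s)|\ge W\Delta-\sqrt W/(8d)-\sqrt W/(8d)\ge \tfrac34\sqrt W/d$, which exceeds $\sqrt W/(4d)$, so the verifier rejects. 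No accepting path exists on $\mathcal{G}$.

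The main obstacle is the bookkeeping. The window statistics must be evaluated on the adaptively generated transcript, which requires the martingale form of Part 3 conditioned on $\mathcal{F}_0$. The protocol's inconsistent normalisations also need reconciling: the abort rule uses $N$ versus $W$, and $\tau_2$ is stated per-average in one place and per-sum in another. Finally, one must verify that $B$'s deviation $\hat q_s$ plays the role of $p_s$ closely enough that the honest threshold sits strictly between the noise level and the signal $W\Delta$.
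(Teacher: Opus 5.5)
Your proposal is correct and follows essentially the same route as the paper: a good event for the unperturbed per-round and window errors of $A$ and $B$, with the single realized audit handled (as the paper does) by conditioning on the pre-audit history. The bound $W\Delta-\sqrt{W}/(4d)\ge 3\sqrt{W}/(4d)>\sqrt{W}/(2d)$ then forces a challenge by round $W$, and the same triangle-inequality arguments give rejection under either audit type. The paper's proof likewise just invokes an ``assumed absence of clipping'' at that step, so your explicit treatment of clipping, which genuinely matters for large $\Delta$ or near-deterministic steps, is if anything more careful than the original.
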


\begin{proof}[Revised proof]
Fix \(Q \in \{-1,1\}\). 
All probabilities below are conditional on this fixed value of \(Q\).
Construct an simulated execution $\mathcal{S}$ that continues for \(T\) rounds according to strategies $\sigma_A$ and $\sigma_B$, ignoring challenges.
Couple the actual protocol to this simulated execution until its first challenge. 
The simulated continuation uses randomness independent of the $V$'s fresh audit randomness.

At round \(t\), let \(p_t\) be the true oracle probability at the current state $\rvh^\simulate$.
Let \(\bar p_t\) and \(\hat q_t\) be \(A\)'s and \(B\)'s unperturbed sample means, each based on \(R\) fresh \(\Bern(p_t)\) samples. 
Define
\(
    D_A(t)^\simulate = \bar p_t^\simulate - p_t,
\)
\(
    D_B(t)^\simulate = \hat q_t^\simulate - p_t,
\)
\(
    S_A(t) = \sum_{s\in J_t} D_A(s)^\simulate,
\)
and
\(
    S_B(t) = \sum_{s\in J_t} D_B(s)^\simulate.
\)
Note \(S_A(t)\) is defined using the unperturbed estimate \(\bar p_s^\simulate\), not the perturbed report.

Let \(\mathcal{G}_{AB}\) be the event that, for both \(i\in\{A,B\}\),
\(
    |D_i(t)^\simulate| < \frac{1}{8d}
    \text{for every } ,
\)
for every $t\in\{1,\ldots,T\}$, and
\(
    |S_i(t)| < {\sqrt{W}}/{8d}
\)
for every $t\in\{W,\ldots,T\}$

Conditional on the pre-round history $\rvh_{t-1}^\simulate$ and \(Q\), each sample mean is unbiased
for \(p_t\).
By Lemma\ref{lem:conditional-concentration-adaptive-means}, for each \(i\in\{A,B\}\),
\[
    \Pr\left[
        |D_i(t)^\simulate| \ge \frac{1}{8d}
        \middle|
        Q, \rvh_{t-1}^\simulate 
    \right]
    \le
    2\exp\left(-\frac{R}{32d^2}\right),
\]
and
\[
    \Pr\left[
        |S_i(t)| \ge \frac{\sqrt{W}}{8d}
        \middle|
        Q, \rvh_{t-W}^\simulate
    \right]
    \le
    2\exp\left(-\frac{R}{32d^2}\right).
\]
There are at most \(4T\) events in the definition of \(\mathcal{G}_{AB}\).
Consequently,
\begin{align}
    \Pr[\mathcal{G}_{AB}^{c}\mid Q]
    &\le
    8T\exp\left(-\frac{R}{32d^2}\right) \nonumber  \\
    &\le
    8T(100T)^{-6}. \label{eqn:tag-1}
\end{align}

On \(\mathcal{G}_{AB}\), the assumed absence of clipping gives
\(
    \hat p_t^\simulate = \bar p_t^\simulate + Q\Delta .
\)
Under the coupled execution.
if an \textsf{abort error} challenge has not already occurred, the execution reaches the first full window \(J_W\). 
At that window, for $\Delta \ge 1/(d\sqrt{W})$,
\begin{align}
    \left|
        \sum_{s\in J_W}(\hat p_s-\hat q_s)
    \right|
    &=
    \left|
        QW\Delta + S_A(W)-S_B(W)
    \right| \nonumber\\
    &\ge
    W\Delta - |S_A(W)| - |S_B(W)| \nonumber \\
    &>
    \frac{\sqrt{W}}{d}
    -
    \frac{2\sqrt{W}}{8d} \nonumber\\
    &=
    \frac{3\sqrt{W}}{4d} \nonumber\\
    &>
    \frac{\sqrt{W}}{2d}. \label{eqn:challenge-occurs}
\end{align}
Thus \(B\) issues a challenge no later than round \(W\).
It remains to show that either possible challenge type leads to output \(0\), except with uniformly small audit probability.

Let $\mathcal{A}_t^{\mathsf{(e)}}$ denote the event that  \textsf{abort error} occurs at time $t$.
Condition on a complete pre-audit history $\rvh^\simulate$ at which $\mathcal{A}_t^{\mathsf{(e)}}$ occurs.
This fixes $\bar p_t^\simulate, \hat q_t^\simulate$, and $p_t$.
By the challenge rule,
\[
    |\hat p_t^\simulate-\hat q_t^\simulate| > \frac{1}{2d}.
\]
On events \(\mathcal{G}_{AB}\) and \(|\hat q_t-p_t|<1/(8d)\) the following holds
\begin{align*}
     |\hat p_t^\simulate-p_t|
    \ge
    |\hat p_t^\simulate-\hat q_t^\simulate|
    -
    |\hat q_t^\simulate-p_t|  
    >
    \frac{1}{2d}-\frac{1}{8d} 
    =
    \frac{3}{8d}.
\end{align*}
If the $V$'s fresh estimate satisfies
\(
    |\hat p_t^{\oracle}-p_t| < {1}/{(8d)},
\)
then
\begin{align*}
    |\hat p_t^{\oracle}-\hat p_t^\simulate|
    &\ge
    |\hat p_t^\simulate-p_t|
    -
    |\hat p_t^{\oracle}-p_t| \\
    &>
    \frac{3}{8d}-\frac{1}{8d} \\
    &=
    \frac{1}{4d}.
\end{align*}
Hence, \(V\) outputs \(0\). 
It follows that
\begin{align}
    \Pr[V \to 1\mid \mathcal{A}_t^{\mathsf{(e)}}, \rvh^\simulate, \mathcal{G}_{AB}]
    \le  \Pr[|\hat p_t^{\oracle}-p_t| > {1}/{(8d)}] \nonumber \\
    \ge 
    2\exp\left(-\frac{r^{\mathsf{(e)}}}{32d^2}\right)
    \le
    2\cdot 100^{-6}. \label{eqn:tag-2}
\end{align}

Similarly, let $\mathcal{A}_t^{\mathsf{(d)}}$ denote the event that  \textsf{abort drift} occurs at time $t$.
Condition on a complete pre-audit history $\rvh^\simulate$ at which $\mathcal{A}_t^{\mathsf{(d)}}$ occurs.
Define
\[
    S_V(t)
    =
    \sum_{s\in J_t}
    \left(\hat p_s^{\oracle}-p_s\right).
\]
On \(\mathcal{G}_{AB}\) and the event $|S_V(t)| < \frac{\sqrt{W}}{8d}$,
\begin{align*}
    \left|
        \sum_{s\in J_t}
        (\hat p_s-\hat p_s^{\oracle})
    \right|
    &=
    \left|
        QW\Delta + S_A(t)-S_V(t)
    \right| \\
    &\ge
    W\Delta - |S_A(t)| - |S_V(t)| \\
    &>
    \frac{\sqrt{W}}{d}
    -
    \frac{\sqrt{W}}{8d}
    -
    \frac{\sqrt{W}}{8d} \\
    &=
    \frac{3\sqrt{W}}{4d} \\
    &>
    \frac{\sqrt{W}}{4d}.
\end{align*}
Therefore \(V\) again outputs \(0\) under good sample estimates.
Thus, by Lemma~\ref{lem:conditional-concentration-adaptive-means},
\begin{align}
    \Pr[V \to 1\mid \mathcal{A}_t^{\mathsf{(d)}}, \rvh^\simulate, \mathcal{G}_{AB}]
    &\le 
    \Pr\left[
        |S_V(t)| \ge \frac{\sqrt{W}}{8d}
        \middle|
        \rvh^\simulate, Q
    \right] \nonumber\\
    &\le
    2\exp\left(-\frac{r^{(\mathsf{d})}}{32d^2}\right)
    \le
    2(100W)^{-6}
    \le
    2\cdot 100^{-6}.\label{eqn:tag-3}
\end{align}

By \eqref{eqn:challenge-occurs}, on \(\mathcal{G}_{AB}\), a challenge necessarily occurs at a time $t\le W$. 
Its time and type select one audit.
That is, they do not create multiple audit-failure penalties. 
By \eqref{eqn:tag-2} and \eqref{eqn:tag-3},
\begin{align*}
    \Pr[V \to 1 , \mathcal G_{AB} | Q]
    &= 
    \sum_{t=1}^W \sum_{\mathsf{c}} \Pr[{V \to 1} \cap \mathcal{A}_{t}^{(\mathsf{c})} \cap \mathcal{G}_{AB} | Q] \\
    &= 
    \sum_{t=1}^W \sum_{\mathsf{c}} \Pr[{V \to 1} \mid \mathcal{A}_{t}^{(\mathsf{c})} \cap \mathcal{G}_{AB}  Q] \cdot \Pr[\mathcal{A}_{t}^{(\mathsf{c})} \cap \mathcal{G}_{AB} \mid Q] \\
    &\le
    2\cdot 100^{-6} \sum_{t=1}^W \sum_{\mathsf{c}} \Pr[{V \to 1} \Pr[\mathcal{A}_{t}^{(\mathsf{c})} \cap \mathcal{G}_{AB} \mid Q]  \\
    &\le
     2\cdot 100^{-6}.
\end{align*}
Together with \eqref{eqn:tag-1},
\begin{align*}
    \Pr[V\to 1\mid Q]
    &\le
    \Pr[\neg \mathcal{G}_{AB}\mid Q]
    +
    \Pr[V\to 1 , \mathcal{G}_{AB}\mid Q] \\
    &\le
    8T(100T)^{-6}
    +
    2\cdot 100^{-6} \\
    &<
    \frac{1}{150}.
\end{align*}
The bound is uniform in \(Q\).
Averaging over \(Q\) completes the proof.
\end{proof}